\documentclass{article}

     \PassOptionsToPackage{numbers, compress}{natbib}

\usepackage[preprint]{neurips_2022}

\usepackage[utf8]{inputenc} %
\usepackage[T1]{fontenc}    %
\usepackage{hyperref}       %
\usepackage{url}            %
\usepackage{booktabs}       %
\usepackage{amsfonts}       %
\usepackage{nicefrac}       %
\usepackage{microtype}      %
\usepackage{xcolor}         %
\usepackage{subfigure}

\usepackage{amsfonts}
\usepackage{amsmath}
\usepackage{amsthm}
\usepackage{amssymb}
\usepackage{dsfont}

\usepackage{xcolor}
\usepackage{color}
\usepackage{graphicx}

\usepackage{verbatim}
\usepackage{xspace} %

\usepackage{enumerate}
\usepackage{enumitem}

\providecommand{\lin}[1]{\ensuremath{\left\langle #1 \right\rangle}}

\providecommand{\norm}[1]{\left\lVert#1\right\rVert}

  \providecommand{\R}{\mathbb{R}} %

  \DeclareMathOperator{\E}{{\mathbb E}}
  \providecommand{\EE}[2]{{\mathbb E}_{#1}\left.#2\right. }  %

  \providecommand{\PP}[2]{{\rm Pr}_{#1}\left[#2\right] }

  \renewcommand{\aa}{\mathbf{a}}
  \providecommand{\bb}{\mathbf{b}}

  \providecommand{\xx}{\mathbf{x}}
  \providecommand{\yy}{\mathbf{y}}

  \providecommand{\cA}{\mathcal{A}}
  
  \providecommand{\cC}{\mathcal{C}}
  \providecommand{\cD}{\mathcal{D}}

  \providecommand{\cN}{\mathcal{N}}
  \providecommand{\cO}{\mathcal{O}}

  \providecommand{\cW}{\mathcal{W}}

  \usepackage{bm}

\usepackage{url}

\renewcommand{\epsilon}{\varepsilon}

\usepackage{thmtools}
\usepackage{thm-restate}

\usepackage[font=small]{caption}

\usepackage{multibib}
\newcites{appendix}{Additional References}%

\usepackage{algorithm}
\usepackage{algorithmic}

\newtheorem{lemma}{Lemma}

\newtheorem{corollary}[lemma]{Corollary}

\newtheorem{definition}[lemma]{Definition}

\newtheorem{assumption}{Assumption}

\newtheorem{example}[lemma]{Example}

\usepackage{pifont}%

\usepackage[compact]{titlesec} 
\titlespacing*{\section}{14pt}{7pt}{4pt}
\titlespacing*{\subsection}{6pt}{3pt}{1pt}
\setlength{\parskip}{0.4em}

\graphicspath{{figures/}}

\title{Sharper Convergence Guarantees for Asynchronous SGD for Distributed and Federated Learning}

\author{%
  Anastasia Koloskova\\
  EPFL\\
  \texttt{anastasia.koloskova@epfl.ch} \\
  \And
  Sebastian U. Stich \\
   CISPA \\
  \texttt{stich@cispa.de} \\
  \And
  Martin Jaggi \\
  EPFL \\
  \texttt{martin.jaggi@epfl.ch} \\
}

\begin{document}

\maketitle

\begin{abstract}

We study the asynchronous stochastic gradient descent algorithm for distributed training over $n$ workers which have varying computation and communication frequency over time. In this algorithm, workers compute stochastic gradients in parallel at their own pace and return those to the server without any synchronization.
Existing convergence rates of this algorithm for non-convex smooth objectives depend on the maximum gradient delay $\tau_{\max}$ and show that an $\epsilon$-stationary point is reached after $\cO\!\left(\sigma^2\epsilon^{-2}+ \tau_{\max}\epsilon^{-1}\right)$ iterations,  where $\sigma$ denotes the variance of stochastic gradients. \\
In this work (i) we obtain a tighter convergence rate of $\cO\!\left(\sigma^2\epsilon^{-2}+ \sqrt{\tau_{\max}\tau_{avg}}\epsilon^{-1}\right)$ \emph{without any change in the algorithm} %
where~$\tau_{avg}$ is the average delay, which can be significantly smaller than $\tau_{\max}$. 
We also provide (ii) a simple delay-adaptive learning rate scheme, under which asynchronous SGD achieves a convergence rate of $\cO\!\left(\sigma^2\epsilon^{-2}+ \tau_{avg}\epsilon^{-1}\right)$, and does not require any extra hyperparameter tuning nor extra communications. Our result allows to show \emph{for the first time} that asynchronous SGD is \emph{always faster} than mini-batch SGD. 
In addition, (iii) we consider the case of heterogeneous functions motivated by federated learning applications and improve the convergence rate by proving a weaker dependence on the maximum delay compared to prior works. In particular, we show that the heterogeneity term in convergence rate is only affected by the average delay within each worker. 
\end{abstract}

\section{Introduction}
The stochastic gradient descent (SGD) algorithm \cite{Robbins:1951sgd,Bottou2018:book} and its variants (momentum SGD, Adam, etc.) form the foundation of modern machine learning and frequently achieve state of the art results. With recent growth in the size of models and available training data, parallel and distributed versions of SGD are becoming increasingly important~\cite{zinkevich2010parallelized,Dekel2012:minibatch,dean2012large}. Without those, modern state-of-the art language models \cite{shoeybi2019megatron}, generative models~\cite{ramesh2021zero,ramesh2022hierarchical}, and many  others~\cite{wang2020survey} would not be possible. 
In the distributed setting%
, also known as data-parallel training, optimization is distributed over many compute devices working in parallel (e.g. cores, or GPUs on a cluster) in order to speed up training. 
Every worker computes gradients on a subset of the training data, and the resulting gradients are aggregated (averaged) on a server.

The same type of SGD variants also form the core algorithms for federated learning applications~\cite{McMahan16:FedLearning,Kairouz2019:federated} where the training process is naturally distributed over the user devices, or clients, that keep their local data private, and only transfer the (e.g. encrypted or differentially private) gradients to the server.

A rich literature exists on the convergence theory of above mentioned parallel SGD  methods, %
see e.g. \cite{Dekel2012:minibatch,Bottou2018:book} and references therein. Plain parallel SGD still faces many challenges in practice, motivating research on various approaches to improve efficiency of distributed learning and mini-batch SGD. This includes for example communication compression techniques \cite{Alistarh2017:qsgd,Alistarh2018:topk,Stich20:error-feedback,Vogels19:power}, decentralized communication \cite{Lian2017:decentralizedSGD, Assran:2018sdggradpush,Nedic2020:survey,koloskova2020unified} 
or performing several local SGD steps on workers before communicating with the server \cite{Mangasarian1994,mcdonald2010distributed,McMahan16:FedLearning,Stich2018:LocalSGD}. %

These approaches use synchronous communication, where workers in each round are required to wait for the slowest one, before being able to start the next round of computations. In the presence of such straggler nodes or nodes that have different computation speeds, other workers face significant idle times. 
\emph{Asynchronous} variants of SGD are aimed to solve such inefficiencies and use  available workers more effectively. 
In asynchronous SGD, each worker starts the next computation immediately after finishing computing its own gradient, without waiting for any other workers. This is especially important in the presence of straggler nodes. Asynchronous algorithms were studied both in distributed and federated learning settings \cite{Recht2011:hogwild,mania2017:perturbed_iterate,%
Leblond18:async_svrg_saga_etc,Stich20:critical_params,Nguyen22:FedBuff}. In this paper we focus on such challenging asynchronous variants of SGD and provide an improved theoretical analysis of convergence compared to prior works.

Most existing work has studied the convergence behavior of asynchronous SGD for the setting of homogeneous distributed training data, where worker's objectives are i.i.d. . This assumption however is only realistic e.g. in shared-memory implementations where all processes can access the same data~\cite{Recht2011:hogwild}. Under this assumption, it can be proven that asynchronous SGD finds an $\epsilon$-approximate stationary point (squared gradient norm bounded by $\epsilon$) in $\cO\big( \frac{\sigma^2}{\epsilon^2} + \frac{\tau_{\max}}{\epsilon}  \big)$ iterations~\cite{Stich20:error-feedback}, for smooth non-convex functions. This complexity bound  depends on the maximum delay of the gradients $\tau_{\max}$ %
and the gradient variance $\sigma > 0$.
Unfortunately, the maximal delay is a very pessimistic metric, not well reflecting the true behavior in practice. For instance, if a worker struggles just once, the maximum delay is  large, while we would still expect reasonable overall convergence.

Two recent works \cite{Cohen2021_pickySGD, Aviv21:delayed_average} tackle this issue by proposing two new delay-adaptive algorithms that achieve a convergence rate that depends only on the average delay of the applied gradients, with \citet{Aviv21:delayed_average} considering only the convex optimization and \citet{Cohen2021_pickySGD} providing a rate of $\cO\big( \frac{\sigma^2}{\epsilon^2} + \frac{\tau_{avg}}{\epsilon}  \big)$ for smooth non-convex functions. %
The average delay can be much smaller than the maximal delay, and thus these methods are robust to rare stragglers. However, \citet{Cohen2021_pickySGD} requires twice more communications at every step, and an extra hyperparameter to tune. \citet{Aviv21:delayed_average} analyze only convex functions and assume a bound on the  variance of the delays, which can frequently degrade with the maximum delay $\tau_{\max}$. Moreover, those works require the assumption that gradients are uniformly bounded.

In the realistic case of heterogeneous objective functions, that is in particular relevant in federated learning applications~\cite{Kairouz2019:federated}, all the existent convergence rates of asynchronous SGD depend on the maximum delay \cite{Nguyen22:FedBuff}. 

\textbf{Contributions.}
\begin{itemize}[nosep,leftmargin=12pt,itemsep=2pt]
	\item For standard asynchronous SGD with constant stepsize, and with non-convex $L$-smooth homogeneous objective functions, we prove the tighter convergence rate of $\cO\big( \frac{\sigma^2}{\epsilon^2} + \frac{\sqrt{\tau_{avg} \tau_{\max}}}{\epsilon} \big)$ to $\epsilon$-small error. Under the additional assumption of bounded gradients, we obtain a convergence rate of $\cO\big(\frac{\sigma^2}{\epsilon^2} + \frac{\tau_{avg} G}{\epsilon^{3/2}} + \frac{\tau_{avg}}{\epsilon}\big)$ where $G$ is the bound on the norm of  gradients. The previously best known rate was $\cO\big(\frac{\sigma^2}{\epsilon^2} + \frac{\tau_{\max}}{\epsilon} \big)$. 
		
	\item With homogeneous objective functions, we provide a delay-adaptive stepsize scheme that does not require tuning of any extra hyperparameters, and converges at the rate of $\cO\big( \frac{\sigma^2}{\epsilon^2} + \frac{\tau_{avg}}{\epsilon}  \big)$ for non-convex $L$-smooth functions.
	\item This result allows us to show that asynchronous SGD is always better than mini-batch SGD regardless of the delays pattern (under assumption that the server can perform operations with zero time). 

	\item We also consider distributed optimization with heterogeneous objectives where the delays can depend on the nodes and give the convergence rate of $\cO\!\big(\frac{\sigma^2}{\epsilon^2} + \frac{\zeta^2}{\epsilon^2} + \frac{\sqrt{\tau_{avg} \frac{1}{n} \sum_{i = 1}^n \zeta_i^2 \tau^i_{avg}}}{\epsilon^{\frac{3}{2}}}+ \frac{\sqrt{\tau_{avg} \tau_{\max}}}{\epsilon}\big)$, where $\zeta_i$'s measure functions heterogeneity and $\bar \tau_i$ is the average delay of node $i$. This rate improves over the best previously-known results that had worse dependence on the maximum delay~$\tau_{\max}$. 
\end{itemize}

\section{Related Work}
\paragraph{Asynchronous SGD.}
The research field of asynchronous optimization can be traced back at least to~1989 \cite{Bertsekas1989:parallel}. Recent works are heavily focused on its SGD variants, such as Hogwild! SGD \cite{Niu2011:hogwild}
 which deals with coordinate-wise asynchronity. \citet{nguyen18:SGD_and_hogwild_without_bounded_gradient} provided a tighter convergence analysis by removing the bounded gradient assumption. Our work does not focus on such a coordinate-wise asynchrony as it relies on sparsity assumption that is not realistic in modern machine learning applications. 
\citet{mania2017:perturbed_iterate} introduces the perturbed iterate framework which enabled theoretical advances with tighter convergence rates~\cite{Stich20:error-feedback,Stich20:critical_params}. \citet{Leblond18:async_svrg_saga_etc} focus on asynchronous variance-reduced methods.

Many works \cite{Agarwal2011:delayed, Chaturapruek2015:noise, Feyzmahdavian2016:async, arjevani20:delayedSGD, sra16:adadelay, lian2015asynchronous, Stich20:error-feedback, Dutta18:stale_gradients} focused on asynchronous SGD variants where workers communicate with the server without any synchronization, but these communications are considered to be atomic. All of these works provide convergence guarantees that depend on the maximum delay~$\tau_{\max}$ with \cite{arjevani20:delayedSGD, Stich20:error-feedback} providing the first tight convergence rates under assumption that the delays are always constant for quadratic and general (convex, strongly convex and non-convex) functions correspondingly.  \citet{Stich20:critical_params} showed a connection of large batches and delays, although still depending only on the maximum delay. 
\citet{Mathieu21:continuous_time_decentralized_delays} consider a continuized view of the time (rather than classical per-iteration time) for asynchronous algorithms on a decentralized network.

\paragraph{Delay-adaptive methods.} The works \cite{Zheng17:delay_compensation, Zhang16:async_weight_down_lr, sra16:adadelay, Xuyang22:delay-adaptive-stepsizes, McMahan14:delay_adaptive_online, Dutta18:stale_gradients} considered delay-adaptive schemes to mitigate adversarial effect of stragglers, however with convergence rates that still depend on the maximum delay $\tau_{\max}$. Only \citet{Cohen2021_pickySGD} in the non-convex, and \citet{Aviv21:delayed_average} in the convex case were able to obtain convergence rates depending on the average delay $\tau_{avg}$. Concurrent to our work, \citet{Mishchenko22:async} provide a delay-adaptive scheme similar to ours and derive convergence guarantees depending on the concurrency $\tau_C$. %
However, they did not consider asynchronous SGD with constant stepsizes, nor the bounded gradients case. %
Moreover, for heterogeneous functions their method with delay adaptive stepsizes does not converge and only reaches an approximate solution (up to heterogeniety), while in our work we prove convergence for a different method with carefully tuned constant stepsizes.

\paragraph{Asynchronous federated learning.} In typical federated learning (FL) applications \cite{McMahan16:FedLearning}, clients or workers frequently have very different computing powers/speed. This makes especially appealing for practitioners to use asynchronous algorithms for FL \cite{Stich2018:LocalSGD, Nguyen22:FedBuff, Avdiukhin21:FL_delayed, Haibo21:anarchicFL, Xinran21:MIFA, Arda16:distributed_async, Glasgow20:async_variance_reduced_distributed, Yikai20:unavailable_devices_cd} with many of these works focusing on correcting for unequal participation ratio of different clients \cite{Yikai20:unavailable_devices_cd, Glasgow20:async_variance_reduced_distributed, Xinran21:MIFA, Arda16:distributed_async, Haibo21:anarchicFL} by implementing variance reduction techniques on the server. 
\citet{Nguyen22:FedBuff} introduce the FedBuff algorithm that is very close to the algorithm that we consider in this work and show its practical superiority over classical synchronous FL algorithms.

\section{Setup}\label{sec:setup}
We consider optimization problems where the components of the objective function (i.e.\ the data for machine learning problems) is distributed across $n$ nodes (or clients), \vspace{-1mm}
\begin{align}
\min_{\xx \in \R^d} \bigg[f(\xx) := \frac{1}{n}\sum_{i = 1}^{n} \big[f_i(\xx)= \E_{\xi \sim \cD_i} F_i(\xx,\xi)\big]\bigg]\,. \label{eq:problem}
\end{align}
Here $f_i \colon \R^d \to \R$ denotes the local loss function that is accessible to the node  $i$, $i \in [n] := \{1,\dots n\}$. Each $f_i$ is a stochastic function $f_i(\xx) = \E_{\xi \sim \cD_i} F_i(\xx, \xi)$ and clients can only access stochastic gradients $\nabla F_i(\xx, \xi)$. This setting covers deterministic optimization if $F_i(\xx, \xi) = f_i(\xx)$, $\forall \xi$. It also covers \emph{empirical risk minimization} problems by setting $\cD_i$ being a uniform distribution over a local dataset $\{\xi_i^1 \dots \xi_i^{m_i}\}$ of size $m_i$. In this case the local functions $f_i$ can be written as finite sums:\ $f_i(\xx) = \frac{1}{m_i} \sum_{j = 1}^{m_i} F_i(\xx, \xi_i^j)$. 

\paragraph{Assumptions.}
For our convergence analysis we rely on following standard assumptions on the functions $f_i$ and $F_i$:
\begin{assumption}[bounded variance] \label{a:stoch_noise}
	We assume that there exists a constant $\sigma \geq 0$ such that %
	\begin{align}\label{eq:stochastic_noise}
	\E_{\xi\sim \cD_i} \norm{\nabla F_i(\xx, \xi) - \nabla f_i(\xx)} \leq \sigma^2\,, && \forall i \in[n], \forall \xx\in \R^d\,.
	\end{align}
\end{assumption}

\begin{assumption}[bounded function heterogeneity]\label{a:heterogeneity}
	We assume that there exists $n$ constants $\zeta_i \geq 0$, $i \in [n]$  such that 
	\begin{align}\label{eq:bound_heterogeniety}
	\norm{\nabla f_i(\xx) - \nabla f(\xx)}_2^2 \leq \zeta_i^2\,, ~~~~~\forall \xx \in \R^d\,, && \text{and define} ~~~~ \zeta^2 := \textstyle \frac{1}{n} \sum_{i = 1}^n \zeta_i^2\,.
	\end{align}
\end{assumption}

\begin{assumption}[$L$-smoothness]\label{a:lsmooth_nc}
	Each function $f_i \colon \R^d \to \R$, $i \in [n]$
	is differentiable and there exists a constant $L \geq 0$ such that %
	\begin{align}\label{eq:smooth_nc}
	&\norm{\nabla f_i(\yy) - \nabla f_i(\xx) } \leq L \norm{\xx -\yy}\,. & &\forall \xx, \yy \in \R^d\,.
	\end{align}
\end{assumption}
For only \emph{some} of the results we will assume a bound on the gradient norm.
\begin{assumption}[bounded gradient]\label{a:bounded_gradient}
	Each function $f_i \colon \R^d \to \R$, $i \in [n]$
	is differentiable and there exists a constant $G \geq 0$ such that %
	\begin{align}\label{eq:bounded_gradient}
	&\norm{\nabla f_i(\xx)}_2^2 \leq G^2 \,, & &\forall \xx\in \R^d\,.
	\end{align}
\end{assumption}

\section{Homogeneous Distributed Setting}\label{sec:homogeneous}
We start with an important special case of problem \eqref{eq:problem} where the objective functions are identical for all workers, i.e.\ $f_i(\xx)\equiv f_j(\xx)$ for all $i,j \in [n]$, such as in the case of homogeneously (i.i.d.) distributed training data.
Consequently, this implies that Assumption~\ref{a:heterogeneity} holds with $\zeta_i = 0$, $i \in [n]$. Many classical works have focused on asynchronous algorithms under this homogeneous setting (e.g.\ \cite{arjevani20:delayedSGD, Stich20:error-feedback, Agarwal2011:delayed, Feyzmahdavian2016:async, sra16:adadelay, lian2015asynchronous}, see the related work for more references). This setting commonly appears in the datacenter setup for distributed training \cite{dean2012large}, where all nodes (or GPUs) have access to the full dataset or data distribution.  Moreover, this special case allows us to present our main ideas in a simplified way, without complicating the presentation due to heterogeneity. We will later see that most of the results in this section can also be obtained as a corollary of the more general heterogeneous functions case (Section~\ref{sec:heterogeneous}) by setting $\zeta_i = 0~i \in [n]$.

\subsection{Algorithm}
We consider standard asynchronous SGD (also known as delayed SGD, or SGD with stale updates)
as presented in Algorithm~\ref{alg:async-homogeneous-general}, see e.g. \cite{arjevani20:delayedSGD, Stich20:error-feedback, Agarwal2011:delayed, Feyzmahdavian2016:async, sra16:adadelay, lian2015asynchronous}. 
\begin{algorithm}[tb]
	\caption{\textsc{Asynchronous SGD}}\label{alg:async-homogeneous-general}
	\let\oldfor\algorithmicfor
	\begin{algorithmic}[1]
		\INPUT{Initial value $\xx^{(0)} \in \R^d$}\\[1ex]
		\STATE sever selects a set of active workers $\cC_0 \!\subseteq\! [n]$ %
		and sends them $\xx^{(0)}$
		\FOR{$t=0,\dots, T-1$}
		\STATE active workers $\cC_t$ are computing stochastic gradients in parallel at the assigned points
		\STATE once a worker $j_t$ finishes compute, it sends $\nabla F(\xx^{(t - \tau_t)}, \xi_t)$ to the server
		\STATE server updates $\xx^{(t + 1)} = \xx^{(t)} - \eta_t \nabla F(\xx^{(t - \tau_t)}, \xi_t)$ 
		\STATE server selects subset $\cA_t \!\subseteq\! [n]$ %
		 of inactive workers, i.e. $(\cC_t \backslash \{j_t\} )\!\cap\! \cA_t \!=\! \emptyset$, and sends them $\xx^{(t + 1)}\!\!\!\!$
		\STATE update active worker set $\cC_{t + 1} = \cC_t \backslash \{j_t \} \cup \cA_t$
		\ENDFOR
	\end{algorithmic}
\end{algorithm}
First, the server initializes training by selecting an initial active worker set $\cC_0$ and assigning $\xx^{(0)}$ to these workers. Throughout the algorithm, the active workers compute gradients at their own speed, based on their local data. On line 4, once some worker (which we denote as $j_t$) finishes computing its gradient, it sends the result to the server. On line 5 the server incorporates the received---possibly delayed---gradient, using a stepsize $\eta_t$ that can depend on the gradient delay $\tau_t$. The \emph{gradient delay} $\tau_t$ is defined as the difference between the iteration at which worker $j_t$ started to compute the gradient and the iteration $t$ at which it got applied.  We index the stochastic noise of the gradients $\xi_t$ by iteration $t$ to highlight that previous iterates $\xx^{(t^\prime)}$ for $t^\prime \leq t$ do not depend on this stochastic noise. However, the client selects the data sample $\xi_t$ at iteration $t-\tau_t$ when the computation starts. 
After that, on lines 6-7 the server selects the new active workers out of the ones that are currently inactive (including worker $j_t$) and assigns them the latest iterate~$\xx^{(t + 1)}$. 

In contrast to previous works, we explicitly define the set of workers that are busy with computations at every step $t$ as $\cC_t$ (the active workers set). Note that this does not pose any restrictions. %
A main advantage of allowing the sets $\cC_t$ to be different at every step $t$ lies in the possibility to also cover mini-batch SGD as a special case, which we discuss in Example~\ref{ex:minibatch}. %
Our theoretical results depend on the size of these sets $\cC_t$, a.k.a. the \emph{concurrency}. 

\begin{definition}[Concurrency]\label{def:concurrency}
	The \emph{concurrency} $\tau_C^{(t)}$ at step $t$ is defined as the size of the active worker set $\cC_t$, i.e. 
$	\tau_C^{(t)} = |\cC_t|$.
	We also define the maximum and average concurrency as 
	\begin{align*}
	 \tau_C =  \max_{t}\{\tau_C^{(t)} \}\,, && \bar \tau_C = \textstyle \frac{1}{T+ 1} \sum_{t = 0}^T \tau_C^{(t)}\,.
	\end{align*}
\end{definition}

Note that in many practical scenarios, we have a \emph{constant concurrency} of $n$ over time, meaning that all $n$ workers are active at every step, and thus $\tau_C = \bar\tau_C = n$.  %

We discuss two important practical examples that fit into our Algorithm~\ref{alg:async-homogeneous-general}:

\begin{example}[Mini-batch SGD]\label{ex:minibatch} Mini-batch SGD with batch size $n$ can be seen as a special case of Algorithm~\ref{alg:async-homogeneous-general}, as follows: The server (i) in line 1 selects all $n$ workers, $\cC_0 = [n]$; (ii) in line 6 does not select new workers while the gradients from the same batch have not been fully applied yet, i.e. $\cA_t = \emptyset$ if $t \!\!\mod n \neq 0$; (iii) in line 6 selects $\cA_t = [n]$ if $t \!\!\mod n = 0$ to start a new batch. 
\end{example}

\begin{example}[Asynchronous SGD with maximum concurrency] In practical implementations one should always aim to utilize all resources available and thus (i) in line 1 select all available workers $\cC_0 = 0$; (ii) in line 6 select the worker that finished its computations $\cA_t = \{j_t\}$ so that workers are always busy with jobs. 
\end{example}

\subsection{Theoretical analysis: Constant stepsizes}
We first formally define the average and maximum delays. 
\begin{definition}[Average and maximum delays]\label{def:delays}
Let $\{\tau_t\}_{t = 0}^{T - 1}$ be the delays of the applied gradients in Algorithm~\ref{alg:async-homogeneous-general}. We define $\{\tau_i^{\cC_T}\}_{i \in \cC_T \backslash \{ j_T \}}$ as the delays of gradients which are in flight at time $T$, that is they have remained unapplied at the last step. Each $\tau_i^{\cC_T}$ is equal to the difference between the last iteration $T$ and the iteration at which worker $i$ started to compute its last gradient. 
We then define the average and the maximum delays as
\resizebox{\linewidth}{!}{
\begin{minipage}{1.06\linewidth} 
\begin{align}
\tau_{avg} = \frac{1}{T + |\cC_T| - 1}\bigg(\sum_{t = 0}^{T - 1} \tau_t + \sum_{ i \in\cC_T \backslash \{ j_T \} } \tau_i^{\cC_T} \bigg), && \tau_{\max} =\max \left\{\max_{t = 1, \dots T-1} \tau_t ,  \max_{i\in\cC_T \backslash \{ j_T \}} \tau_i^{\cC_T}\right\}.
\end{align}
\end{minipage}}
\end{definition}
We further provide a key observation on the connection between the average delay and the average concurrency. This observation, is one of the essential elements for achieving an improved analysis. 
\begin{restatable}[Key Observation]{remark}{remarkkey}
	\label{rem:conc_avg_delay}
	In Algorithm~\ref{alg:async-homogeneous-general}  the average concurrency $\bar \tau_C$ is connected to the average delay $\tau_{avg}$ as \vspace{-2mm}
	\begin{align}
	\tau_{avg} = \frac{T + 1}{T + |C_T| - 1}  \bar\tau_C ~~\stackrel{T > |C_T|}{=} ~~ \cO\!\left( \bar\tau_C\right) \,.
	\end{align}
\end{restatable}

We explain this observation on a simple example. Assume that the concurrency is constant at every step ($\tau_C = \bar \tau_C$), and that all workers except one are responding very rarely. Then on steps 4--5 of Algorithm~\ref{alg:async-homogeneous-general} only this one responding worker would mostly participate. This means that for this one worker the delay $\tau_t$ would be frequently equal to zero, and the overall average delay will be small. %

Next, we provide our theoretical results. We first focus on the Asynchronous SGD Algorithm~\ref{alg:async-homogeneous-general} under constant stepsizes, i.e. $\eta_t \equiv \eta$. This setting was studied in many works such as \cite{Agarwal2011:delayed, Feyzmahdavian2016:async, arjevani20:delayedSGD, lian2015asynchronous, Stich20:error-feedback}

\begin{restatable}[Constant stepsizes]{theorem}{firstthm}\label{thm:homogeneous}
	Under Assumptions \ref{a:stoch_noise}, \ref{a:lsmooth_nc}, there exists a constant stepsize $\eta_t \equiv \eta$ such that for Algorithm~\ref{alg:async-homogeneous-general} it holds that $\frac{1}{T + 1} \sum_{t = 0}^T \norm{\nabla f(\xx^{(t)})}_2^2 \leq \epsilon$ after
	\begin{align}\label{eq:thm_first_part}
	\cO\!\left(\frac{\sigma^2}{\epsilon^2}  + \frac{\sqrt{\tau_{C} \tau_{\max}}}{\epsilon}\right) && \text{iterations}.
	\end{align}
	If we additionally assume bounded gradient Assumption~\ref{a:bounded_gradient}, then $\frac{1}{\sum_{t = 0}^T |\cA_t|}\sum_{t = 0}^T |\cA_t| \norm{\nabla f({\xx}^{(t)})}_2^2  \leq \epsilon$ after
	\begin{align}\label{eq:thm_bounded_delay}
	\cO\!\left(\frac{\sigma^2}{\epsilon^2} + \frac{\tau_{C} G}{\epsilon^{3/2}} + \frac{\tau_{C}}{\epsilon}\right) && \text{iterations.}
	\end{align}
\end{restatable}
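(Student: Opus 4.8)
The plan is to run the classical descent-lemma analysis for delayed SGD, with one new ingredient: replacing the pessimistic bound ``every in-flight delay is $\le \tau_{\max}$'' by the concurrency $\tau_C$ when summing over iterations. First I would apply $L$-smoothness (Assumption~\ref{a:lsmooth_nc}) to the update $\xx^{(t+1)} = \xx^{(t)} - \eta\,\nabla F(\xx^{(t-\tau_t)},\xi_t)$ and take the conditional expectation given the iterates produced up to step $t$; by the indexing convention for $\xi_t$ the applied gradient is conditionally unbiased, $\E[\nabla F(\xx^{(t-\tau_t)},\xi_t)] = \nabla f(\xx^{(t-\tau_t)})$, and Assumption~\ref{a:stoch_noise} controls its second moment. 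Using $\langle a,b\rangle = \tfrac12\|a\|^2 + \tfrac12\|b\|^2 - \tfrac12\|a-b\|^2$ on the inner product, summing over $t$ and telescoping leaves, schematically,
\[
\tfrac{\eta}{2}\sum_{t}\E\|\nabla f(\xx^{(t)})\|^2 + \tfrac{\eta}{2}(1-L\eta)\sum_{t}\E\|\nabla f(\xx^{(t-\tau_t)})\|^2 \le \big(f(\xx^{(0)})-f^\star\big) + \tfrac{L\eta^2\sigma^2 T}{2} + \tfrac{\eta}{2}\sum_{t}\E\|\nabla f(\xx^{(t)})-\nabla f(\xx^{(t-\tau_t)})\|^2 ,
\]
so everything reduces to controlling the last (delay-error) term and then choosing $\eta$.

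For the first bound, estimate $\|\nabla f(\xx^{(t)})-\nabla f(\xx^{(t-\tau_t)})\|^2 \le L^2\|\xx^{(t)}-\xx^{(t-\tau_t)}\|^2$, expand $\xx^{(t)}-\xx^{(t-\tau_t)} = -\eta\sum_{k=t-\tau_t}^{t-1}\nabla F(\xx^{(k-\tau_k)},\xi_k)$, and split the inner sum into its conditionally-centered noise part (orthogonal increments, contributing $\eta^2\tau_t\sigma^2$) and the ``signal'' part $\big\|\sum_{k=t-\tau_t}^{t-1}\nabla f(\xx^{(k-\tau_k)})\big\|^2 \le \tau_t\sum_{k=t-\tau_t}^{t-1}\|\nabla f(\xx^{(k-\tau_k)})\|^2$. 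Summing over $t$ and exchanging the order of summation, the crucial observation is that for every step $k$ the set $\{t : t-\tau_t \le k < t\}$ of applied gradients whose computation interval contains $k$ has size at most $\tau_C^{(k)} \le \tau_C$, because each such gradient is attached to a distinct worker that was active at step $k$ (a worker computes one gradient at a time); meanwhile $\sum_t\tau_t \le (T+1)\bar\tau_C = \cO(T\tau_C)$ by Remark~\ref{rem:conc_avg_delay}. This yields $\sum_t\E\|\nabla f(\xx^{(t)})-\nabla f(\xx^{(t-\tau_t)})\|^2 \lesssim L^2\eta^2\big(\sigma^2 T\tau_C + \tau_{\max}\tau_C\sum_t\E\|\nabla f(\xx^{(t-\tau_t)})\|^2\big)$ --- note the $\tau_C$ where a naive bound would give $\tau_{\max}^2$. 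Choosing $\eta\le 1/(2L)$ and $\eta\le 1/(2L\sqrt{\tau_{\max}\tau_C})$ lets the ``helper'' term on the left absorb its copy on the right, and (using $\tau_C\le\tau_{\max}$) leaves $\tfrac1T\sum_t\E\|\nabla f(\xx^{(t)})\|^2 \lesssim \tfrac{f(\xx^{(0)})-f^\star}{\eta T} + L\eta\sigma^2$. The standard stepsize-tuning step --- take $\eta$ equal to the minimum of the unconstrained optimum $\sqrt{(f(\xx^{(0)})-f^\star)/(L\sigma^2T)}$ and the constraint $1/(L\sqrt{\tau_{\max}\tau_C})$ --- then gives $\cO(\sigma^2/\epsilon^2 + \sqrt{\tau_{\max}\tau_C}/\epsilon)$.

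For the bounded-gradient bound I would keep the same skeleton but use Assumption~\ref{a:bounded_gradient} to estimate the displacement more carefully: since $\|\nabla f(\xx^{(t)})-\nabla f(\xx^{(t-\tau_t)})\| \le 2G$, we have $\|\nabla f(\xx^{(t)})-\nabla f(\xx^{(t-\tau_t)})\|^2 \le 2GL\,\|\xx^{(t)}-\xx^{(t-\tau_t)}\|$, which is only linear in the displacement norm; then $\E\|\xx^{(t)}-\xx^{(t-\tau_t)}\| \le \eta\sqrt{\tau_t}\,\sigma + \eta\sum_{k=t-\tau_t}^{t-1}\|\nabla f(\xx^{(k-\tau_k)})\|$ by Jensen plus orthogonality for the noise and the triangle inequality for the signal --- and now the exchange of summation on the second piece carries \emph{no} $\tau_t$ weight, so only the count $\tau_C^{(k)}\le\tau_C$ enters and $\tau_{\max}$ never reappears. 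After an AM--GM split of $\sum_t\|\nabla f(\xx^{(t-\tau_t)})\|$ (which re-introduces a controllable multiple of the ``helper'' sum, to be absorbed on the left) and Cauchy--Schwarz on $\sum_t\sqrt{\tau_t}$, one arrives at a bound of the form $\tfrac1T\sum_t\E\|\nabla f(\xx^{(t)})\|^2 \lesssim \tfrac{f(\xx^{(0)})-f^\star}{\eta T} + L\eta\sigma^2 + GL\eta\sigma\sqrt{\tau_C} + G^2L^2\eta^2\tau_C^2$, whose tuning yields $\cO(\sigma^2/\epsilon^2 + \tau_C G/\epsilon^{3/2} + \tau_C/\epsilon)$. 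Here the conclusion is stated for the $|\cA_t|$-weighted average of $\|\nabla f(\xx^{(t)})\|^2$ because the natural left-hand-side quantity $\sum_t\|\nabla f(\xx^{(t-\tau_t)})\|^2$ equals $\sum_s |\cA_{s-1}|\,\|\nabla f(\xx^{(s)})\|^2$ up to in-flight/boundary terms --- each iterate $\xx^{(s)}$ is dispatched to $|\cA_{s-1}|$ workers --- which is also what lets one keep only $\tau_C$ (and not $\tau_{\max}$) in the surviving ``$\tau_C/\epsilon$'' term.

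The part I expect to be hardest is exactly this combinatorial bookkeeping: arranging every use of Cauchy--Schwarz so that the factor that ultimately counts ``how many gradients are in flight at step $k$'' is weighted by $1$ (giving $\tau_C^{(k)}$) rather than by $\tau_t$ (which would bring back $\tau_{\max}$), and converting the various $\sum_t\tau_t$- and $\sum_t\tau_t^2$-type quantities into the concurrency through Remark~\ref{rem:conc_avg_delay}. Making the ``helper'' term $\sum_t\E\|\nabla f(\xx^{(t-\tau_t)})\|^2$ cancel with the right constants --- so that the surviving stepsize restriction is exactly $\eta\lesssim 1/(L\sqrt{\tau_{\max}\tau_C})$ in the first case and a $G$-and-$\tau_C$-dependent one in the second --- is the delicate step; the descent lemma and the final $\eta$-optimization are then routine.
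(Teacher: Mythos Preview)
For the first bound~\eqref{eq:thm_first_part} your plan matches the paper's proof essentially step for step: the paper also derives the one–step descent inequality (their Lemma~\ref{lem:descent}), bounds the residual $\|\xx^{(t)}-\xx^{(t-\tau_t)}\|^2$ via exactly your counting observation that each index $k$ is covered by at most $\tau_C^{(k)}$ intervals $[t-\tau_t,t)$ (their Lemma~\ref{lem:residual}), absorbs the helper term under $\eta\le 1/(2L\sqrt{\tau_{\max}\tau_C})$, and tunes~$\eta$.

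For the second bound~\eqref{eq:thm_bounded_delay} the paper takes a genuinely different route. It does \emph{not} work with the delay displacement $\xx^{(t)}-\xx^{(t-\tau_t)}$ at all; instead it runs a \emph{perturbed-iterate} argument on the virtual sequence
\[
\tilde\xx^{(0)}=\xx^{(0)},\qquad \tilde\xx^{(t+1)} = \tilde\xx^{(t)} - \eta\sum_{i\in\cA_t}\nabla F(\xx^{(t)},\xi_{t+\hat\tau_t^i}),
\]
which applies each gradient the moment it is dispatched. Then $\xx^{(t)}-\tilde\xx^{(t)}$ is precisely the sum of the (at most $\tau_C$) gradients currently in flight, so Assumption~\ref{a:bounded_gradient} yields $\E\|\xx^{(t)}-\tilde\xx^{(t)}\|^2 \le \eta^2\tau_C^2 G^2 + \eta^2\tau_C\sigma^2$ with no $\tau_t$ ever appearing. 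The $|\cA_t|$-weighting in the theorem's conclusion and the constraint $\eta\le 1/(2L\tau_C)$ (hence the $\tau_C/\epsilon$ term) both fall out directly from the descent lemma for $\tilde\xx$, since $|\cA_t|$ gradients are applied to $\tilde\xx$ per step.

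Your direct approach has a concrete gap in the final tuning. The cross term $GL\eta\sigma\sqrt{\tau_C}$ that you carry does \emph{not} disappear into the stated rate: balanced against $r_0/(\eta T)$ it contributes $\Theta(G\sigma\sqrt{\tau_C}/\epsilon^2)$ iterations, and the best AM--GM split of that is $\sigma^2/\epsilon^2 + G^2\tau_C/\epsilon^2$. The piece $G^2\tau_C/\epsilon^2$ is of strictly higher order than both $\tau_C G/\epsilon^{3/2}$ and $\tau_C/\epsilon$ as $\epsilon\to 0$, so it cannot be absorbed into~\eqref{eq:thm_bounded_delay}. The root cause is the inequality $\|\Delta\nabla f\|^2\le 2GL\|\Delta\xx\|$: it forces you to control the displacement at first power, which couples the stochastic part ($\sigma$) and the bounded-gradient part ($G$) multiplicatively. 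The virtual-sequence argument avoids this because its displacement has a \emph{deterministic} number ($\le\tau_C$) of summands, so one can keep $\|\cdot\|^2$ throughout and let martingale orthogonality decouple $\sigma^2$ from $G^2$.
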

Under constant concurrency, we can directly connect $\tau_C$ to the average delay $\tau_{avg}$ due to Remark~\ref{rem:conc_avg_delay}. We highlight again that in practice, to get the best utilization of the available resources, practical implementations choose the maximum concurrency possible, which is equal to $n$. 
\begin{corollary}
	If in Algorithm~\ref{alg:async-homogeneous-general} the concurrency is constant at every step (thus $\tau_C = \bar\tau_{C}$), then under the same conditions as in Theorem~\ref{thm:homogeneous} the convergence rate of Algorithm~\ref{alg:async-homogeneous-general} is 
	\begin{align}
	\cO\!\left(\frac{\sigma^2}{\epsilon^2}  + \frac{\sqrt{\tau_{avg} \tau_{\max}}}{\epsilon}\right) &&  \text{and} && \cO\!\left(\frac{\sigma^2}{\epsilon^2} + \frac{\tau_{avg} G}{\epsilon^{3/2}} + \frac{\tau_{avg}}{\epsilon}\right) 
	\end{align}
	for the case without and with bounded gradient Assumption~\ref{a:bounded_gradient} correspondingly. 
\end{corollary}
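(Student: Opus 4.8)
The corollary is a one-line consequence of Theorem~\ref{thm:homogeneous}: when the concurrency is constant we have $\tau_C=\bar\tau_C$, and Remark~\ref{rem:conc_avg_delay} gives $\bar\tau_C=\Theta(\tau_{avg})$ (using $T>|\cC_T|$), so substituting $\tau_C=\Theta(\tau_{avg})$ into \eqref{eq:thm_first_part} and \eqref{eq:thm_bounded_delay} yields the two stated rates. The substance is therefore the proof of Theorem~\ref{thm:homogeneous}, which I would set up as follows. Apply the $L$-smoothness descent inequality (Assumption~\ref{a:lsmooth_nc}) to the update $\xx^{(t+1)}=\xx^{(t)}-\eta\gg_t$ with $\gg_t:=\nabla F(\xx^{(t-\tau_t)},\xi_t)$, take the conditional expectation over $\xi_t$ (which is independent of $\xx^{(0)},\dots,\xx^{(t)}$), use unbiasedness and $f_i\equiv f$ to replace $\Eb{\gg_t}$ by $\nabla f(\xx^{(t-\tau_t)})$, split the cross term via $2\lin{\aa,\bb}=\norm{\aa}^2+\norm{\bb}^2-\norm{\aa-\bb}^2$, and use Assumption~\ref{a:stoch_noise} to bound $\Eb{\norm{\gg_t}^2}$. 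This produces a one-step inequality of the form
\begin{align*}
\Eb{f(\xx^{(t+1)})} &\le \Eb{f(\xx^{(t)})} - \tfrac{\eta}{2}\Eb{\norm{\nabla f(\xx^{(t)})}^2} - \tfrac{\eta(1-L\eta)}{2}\Eb{\norm{\nabla f(\xx^{(t-\tau_t)})}^2} \\
&\quad {}+ \tfrac{\eta}{2}\Eb{\norm{\nabla f(\xx^{(t)}) - \nabla f(\xx^{(t-\tau_t)})}^2} + \tfrac{L\eta^2\sigma^2}{2}\,,
\end{align*}
so everything reduces to controlling the drift $\norm{\nabla f(\xx^{(t)})-\nabla f(\xx^{(t-\tau_t)})}^2\le L^2\norm{\xx^{(t)}-\xx^{(t-\tau_t)}}^2$.

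\textbf{The key step (and main obstacle).} Writing $\xx^{(t)}-\xx^{(t-\tau_t)}=-\eta\sum_{j=t-\tau_t}^{t-1}\gg_j$, Cauchy--Schwarz and separating the mean-zero, mutually uncorrelated stochastic noises from the signal gives $\Eb{\norm{\xx^{(t)}-\xx^{(t-\tau_t)}}^2}\lesssim \eta^2\tau_t\big(\sum_{j=t-\tau_t}^{t-1}\Eb{\norm{\nabla f(\xx^{(j-\tau_j)})}^2}+\sigma^2\big)$. Summing over $t$, the decisive object is $\sum_t\tau_t\sum_{j=t-\tau_t}^{t-1}a_j=\sum_j a_j\sum_{t:\,t-\tau_t\le j\le t-1}\tau_t$. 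The index set $\{t:\,t-\tau_t\le j\le t-1\}$ is exactly the set of gradients "in flight at iteration $j$" — started no later than $j$ and applied strictly after $j$ — i.e.\ the workers busy at step $j$, hence of size at most $\tau_C^{(j)}\le\tau_C$ (the gradients never applied within the horizon are precisely what the $\tau_i^{\cC_T}$ terms in Definition~\ref{def:delays} account for, and the normalization $T+|\cC_T|-1$ there is chosen so this bookkeeping is exact). Therefore the double sum is $\le\tau_{\max}\tau_C\sum_j a_j$. This is the heart of the improvement: the vanilla argument bounds the number of windows containing $j$ by $\tau_{\max}$ (the window length), paying $\tau_{\max}^2$ and forcing $\eta\lesssim 1/(L\tau_{\max})$; replacing one factor $\tau_{\max}$ by the concurrency $\tau_C$, together with the care needed for the leftover in-flight gradients and for making the noise part low order, is the delicate part.

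\textbf{Finishing the first rate.} Summing the one-step inequality over $t=0,\dots,T$ and inserting the above, the positive $\Eb{\norm{\nabla f(\xx^{(j-\tau_j)})}^2}$ contributions (drift signal and the $L\eta^2$ term) are absorbed into $-\tfrac{\eta}{2}\Eb{\norm{\nabla f(\xx^{(t-\tau_t)})}^2}$ provided $\eta\lesssim\min\{1/L,\ 1/(L\sqrt{\tau_{\max}\tau_C})\}$; the drift's noise part contributes only $\cO(\eta^2L^2\sigma^2\bar\tau_C)$ (using $\sum_t\tau_t\lesssim\bar\tau_C(T+1)\lesssim\tau_C(T+1)$ from Remark~\ref{rem:conc_avg_delay}) and is lower order. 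What remains, for every admissible $\eta$, is $\tfrac{1}{T+1}\sum_{t=0}^T\Eb{\norm{\nabla f(\xx^{(t)})}^2}\lesssim\tfrac{f(\xx^{(0)})-f^\star}{\eta(T+1)}+L\eta\sigma^2+(\text{l.o.t.})$, and a standard stepsize-tuning lemma — optimizing $\eta$ over $\eta\lesssim 1/(L\sqrt{\tau_{\max}\tau_C})$ — turns this into $\cO\!\big(\sigma^2/\epsilon^2+\sqrt{\tau_{\max}\tau_C}/\epsilon\big)$ iterations, which is \eqref{eq:thm_first_part}.

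\textbf{The bounded-gradient rate, and the corollary.} Under Assumption~\ref{a:bounded_gradient} I would instead keep the drift as $\lin{\nabla f(\xx^{(t)}),\,\nabla f(\xx^{(t)})-\nabla f(\xx^{(t-\tau_t)})}\le\norm{\nabla f(\xx^{(t)})}\,L\norm{\xx^{(t)}-\xx^{(t-\tau_t)}}$ and bound the displacement \emph{linearly}: $\Eb{\norm{\xx^{(t)}-\xx^{(t-\tau_t)}}}\le\eta\sum_{j=t-\tau_t}^{t-1}\Eb{\norm{\gg_j}}\le\eta\tau_t\sqrt{G^2+\sigma^2}$ by Jensen and Assumptions~\ref{a:stoch_noise},~\ref{a:bounded_gradient}. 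Summed over $t$, the drift contribution is then $\cO\!\big(\eta^2 L\,G\sqrt{G^2+\sigma^2}\sum_t\tau_t\big)=\cO\!\big(\eta^2 L\,G\sqrt{G^2+\sigma^2}\,\tau_C T\big)$, with \emph{no} $\tau_{\max}$: the delays enter only through $\sum_t\tau_t\lesssim\tau_C T$. Averaging the gradient norms with the weights $|\cA_t|$ (whose partial sums equal $T+|\cC_T|-|\cC_0|$, i.e.\ telescope to within $\cO(\tau_C)$ of $T$) absorbs the transient boundary terms; the admissible stepsize range becomes $\eta\lesssim 1/(L\tau_C)$ together with the $G$-term constraint, and the tuning lemma gives $\cO\!\big(\sigma^2/\epsilon^2+\tau_C G/\epsilon^{3/2}+\tau_C/\epsilon\big)$, which is \eqref{eq:thm_bounded_delay}. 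Finally, under constant concurrency $\tau_C=\bar\tau_C=\Theta(\tau_{avg})$ by Remark~\ref{rem:conc_avg_delay}, and substituting into the two bounds of Theorem~\ref{thm:homogeneous} proves the corollary.
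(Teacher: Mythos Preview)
Your derivation of the corollary from Theorem~\ref{thm:homogeneous} via Remark~\ref{rem:conc_avg_delay} is correct, and your proof of the first rate~\eqref{eq:thm_first_part} follows the paper's argument essentially verbatim: same descent lemma, same key counting observation that $|\{t:\,t-\tau_t\le j<t\}|\le\tau_C^{(j)}$, same stepsize tuning under $\eta\lesssim 1/(L\sqrt{\tau_{\max}\tau_C})$.

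Your treatment of the bounded-gradient rate~\eqref{eq:thm_bounded_delay}, however, has a genuine gap. By keeping the cross term as $\lin{\nabla f(\xx^{(t)}),\nabla f(\xx^{(t)})-\nabla f(\xx^{(t-\tau_t)})}$ and bounding it by $GL\norm{\xx^{(t)}-\xx^{(t-\tau_t)}}\le GL\eta\tau_t\sqrt{G^2+\sigma^2}$, the summed drift contributes $\eta^2 LG\sqrt{G^2+\sigma^2}\sum_t\tau_t\lesssim \eta^2 LG\sqrt{G^2+\sigma^2}\,\tau_C T$. After dividing by $\eta T$ this is a term \emph{linear} in $\eta$, namely $\eta L\tau_C G\sqrt{G^2+\sigma^2}$. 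The tuning lemma then balances $r_0/(\eta T)$ against a linear-in-$\eta$ term, which yields an iteration count of order $\tau_C G\sqrt{G^2+\sigma^2}/\epsilon^{2}$, not $\tau_C G/\epsilon^{3/2}$. (You also have a leftover $\tfrac{L\eta^2}{2}\norm{\nabla f(\xx^{(t-\tau_t)})}^2\le \tfrac{L\eta^2}{2}G^2$ term that adds a $G^2/\epsilon^2$ contribution, and there is no mechanism in your decomposition that forces $\eta\lesssim 1/(L\tau_C)$, so the $\tau_C/\epsilon$ term is unexplained.)

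The paper obtains~\eqref{eq:thm_bounded_delay} by a different route: the perturbed-iterate framework. One introduces a virtual sequence $\tilde\xx^{(t)}$ that applies each gradient at the moment its computation \emph{starts}, so that $\xx^{(t)}-\tilde\xx^{(t)}$ is exactly the sum of the in-flight gradients, of which there are at most $\tau_C$ regardless of~$\tau_t$. This gives the pointwise bound $\E\norm{\xx^{(t)}-\tilde\xx^{(t)}}^2\le \eta^2\tau_C^2 G^2+\eta^2\tau_C\sigma^2$, with no $\tau_t$ or $\tau_{\max}$ anywhere. The descent lemma for $f(\tilde\xx^{(t)})$ naturally produces the $|\cA_t|$ weights (since $\tilde\xx^{(t+1)}-\tilde\xx^{(t)}$ involves $|\cA_t|$ gradients) and the constraint $\eta\le 1/(2L\tau_C)$ (to control the $|\cA_t|^2$ second-order term). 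The drift then enters as $\frac{\eta L^2}{2}\norm{\xx^{(t)}-\tilde\xx^{(t)}}^2=\cO(\eta^3 L^2\tau_C^2 G^2)$, i.e.\ \emph{quadratic} in~$\eta$ after averaging, and this is precisely what yields the $\tau_C G/\epsilon^{3/2}$ term upon tuning. The virtual-sequence trick is the missing idea; your direct approach cannot avoid either a $\tau_t^2$ (hence $\tau_{\max}$) factor in the squared displacement, or a merely linear-in-$\eta$ drift from the unsquared displacement.
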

The previously best known convergence rate for Asynchronous SGD~\ref{alg:async-homogeneous-general} under constant stepsizes was given in \cite{Stich20:error-feedback} and is equal to $\cO\big(\frac{\sigma^2}{\epsilon^2}  + \frac{\tau_{\max}}{\epsilon}\big)$. In our theorem we improved the delay dependence from~$\tau_{\max}$ to $\sqrt{\tau_{avg} \tau_{\max}}$ in the last term \emph{without any change in the algorithm}, only by taking into account concurrency that is usually fixed in practical implementations anyways. No other work previously made an assumption on the number of computing workers in their theoretical analysis. $\sqrt{\tau_{avg} \tau_{\max}}$ could be much smaller than $\tau_{\max}$ in the presence of rare straggler devices. With an additional assumption of bounded gradients, the dependence on the maximum delay can be completely removed.

\subsection{Theoretical analysis: Delay-adaptive stepsizes}
In many cases, the bounded gradient Assumption~\ref{a:bounded_gradient} is unrealistic \cite{nguyen18:SGD_and_hogwild_without_bounded_gradient}, meaning that the gradient bound $G$ is often large and thus the rate \eqref{eq:thm_bounded_delay} is loose. 
In this section we show that by weighting the stepsize down for the gradients that have a large delay, once can remove the dependence on the maximum delay $\tau_{\max}$ without assuming bounded gradients (Assump.~\ref{a:bounded_gradient}).

\begin{restatable}[Delay-adaptive stepsizes]{theorem}{secondthm}\label{thm:homogeneous-adaptive} There exist a parameter $\eta \leq \frac{1}{4L}$ such that if we set the stepsizes in Algorithm~\ref{alg:async-homogeneous-general} dependent on the delays as
	\begin{align}\label{eq:adaptive_stepsizes}
	\eta_t = \begin{cases}
	\eta & \tau_t \leq \tau_{C}, \\
	< \min\{\eta, \frac{1}{4 L \tau_t} \}\vspace{-1mm} & \tau_t > \tau_C ,
	\end{cases}
	\end{align}
	then for Algorithm~\ref{alg:async-homogeneous-general}, under Assumptions~\ref{a:stoch_noise}, \ref{a:lsmooth_nc} it holds that $\frac{1}{\sum_{t = 0}^T \eta_t} \sum_{t = 0}^T \eta_t \norm{\nabla f(\xx^{(t)})}_2^2 \leq \epsilon$ after\vspace{-2mm}
	\begin{align}
	\cO\!\left(\frac{\sigma^2}{\epsilon^2} +\frac{{\tau_C}}{\epsilon}\right) && \text{iterations.}
	\end{align}
\end{restatable}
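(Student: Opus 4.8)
\textbf{Overall approach.}The plan is to run the standard descent-lemma argument for delayed SGD, but to bound the delayed-gradient error term so that it depends on the concurrency $\tau_C$ rather than on $\tau_{\max}$, exploiting that the adaptive rule~\eqref{eq:adaptive_stepsizes} damps precisely those updates with large delay.

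\textbf{Step 1 (descent inequality).} I would write one step of Algorithm~\ref{alg:async-homogeneous-general} as $\xx^{(t+1)} = \xx^{(t)} - \eta_t\rr_t$ with $\rr_t := \nabla F(\xx^{(t-\tau_t)},\xi_t)$, apply $L$-smoothness (Assumption~\ref{a:lsmooth_nc}), and take the conditional expectation given everything prior to step $t$ (in the homogeneous case $\xx^{(t-\tau_t)}$ does not depend on $\xi_t$, so this expectation of $\rr_t$ is $\nabla f(\xx^{(t-\tau_t)})$ and, by Assumption~\ref{a:stoch_noise}, $\E\|\rr_t\|^2\le \|\nabla f(\xx^{(t-\tau_t)})\|^2+\sigma^2$). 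Splitting the cross term via $\langle a,b\rangle = \tfrac12(\|a\|^2+\|b\|^2-\|a-b\|^2)$, using $\|\nabla f(\xx^{(t)})-\nabla f(\xx^{(t-\tau_t)})\|^2 \le L^2\|\xx^{(t)}-\xx^{(t-\tau_t)}\|^2$, and using $\eta_t\le \tfrac1{4L}$ to absorb $\tfrac{L\eta_t^2}{2}\|\nabla f(\xx^{(t-\tau_t)})\|^2$ into the negative term, then summing and telescoping $f$, I arrive at a master bound of the form
\[
\tfrac12\textstyle\sum_{t}\eta_t\E\|\nabla f(\xx^{(t)})\|^2 \;\le\; \Delta_0 \;-\;\tfrac38\textstyle\sum_t\eta_t\E\|\nabla f(\xx^{(t-\tau_t)})\|^2 \;+\;\tfrac{L^2}{2}\textstyle\sum_t\eta_t\E\|\xx^{(t)}-\xx^{(t-\tau_t)}\|^2 \;+\;\tfrac{L\sigma^2}{2}\textstyle\sum_t\eta_t^2,
\]
with $\Delta_0 := f(\xx^{(0)})-f^\star$.

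\textbf{Step 2 (the delay error --- the crux).} With $W_t := \{t-\tau_t,\dots,t-1\}$ and $\xx^{(t)}-\xx^{(t-\tau_t)} = \sum_{k\in W_t}\eta_k\rr_k$, I split each $\rr_k$ into its mean $\nabla f(\xx^{(k-\tau_k)})$ and the zero-mean noise. The noise part contributes at most $\sum_{k\in W_t}\eta_k^2\sigma^2$ (orthogonality, conditioning on the delay sequence so the windows are fixed --- \emph{this is what avoids an extra factor $\tau_t$}), while the drift part is bounded by Cauchy--Schwarz by $\tau_t\sum_{k\in W_t}\eta_k^2\|\nabla f(\xx^{(k-\tau_k)})\|^2$. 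Exchanging the order of summation turns the drift contribution into $L^2\sum_k\eta_k^2\E\|\nabla f(\xx^{(k-\tau_k)})\|^2\cdot \widetilde\Sigma_k$, where $\widetilde\Sigma_k := \sum_{t:\,k\in W_t}\eta_t\tau_t$. The key lemma is $\widetilde\Sigma_k = \cO(\tau_C/L)$: (i) each $t$ with $k\in W_t$ corresponds to a distinct worker that was computing during step $k$, so there are at most $|\cC_k|\le\tau_C$ of them; (ii) by~\eqref{eq:adaptive_stepsizes}, $\eta_t\tau_t \le \max\{\eta\tau_C,\tfrac1{4L}\}$, which is $\cO(1/L)$ as soon as $\eta \lesssim \tfrac1{L\tau_C}$. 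Combining this with $\eta_k^2\le\eta\,\eta_k$ (true for good and for damped steps) and $\eta\lesssim\tfrac1{L\tau_C}$ makes the drift contribution at most $\tfrac14\sum_k\eta_k\E\|\nabla f(\xx^{(k-\tau_k)})\|^2$, hence absorbed by the $-\tfrac38(\cdots)$ term; using $\sum_{t:\,k\in W_t}\eta_t\le \tau_C\eta$ and $\eta_k\le\eta$, the noise--delay part collapses to a lower-order $\cO(L\sigma^2\tau_C\eta^2)\sum_t\eta_t$ that is dominated after normalization.

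\textbf{Step 3 (stepsize choice and iteration count).} Dropping the now-negative term and dividing by $\tfrac12\sum_t\eta_t$ leaves $\tfrac{1}{\sum_t\eta_t}\sum_t\eta_t\E\|\nabla f(\xx^{(t)})\|^2 \le \tfrac{2\Delta_0}{\sum_t\eta_t} + \cO(L\sigma^2\eta)$. I would then pick $\eta = \Theta\!\big(\min\{\tfrac1{L\tau_C},\tfrac\epsilon{L\sigma^2}\}\big)$ and, for the large-delay steps, $\eta_t = \tfrac1{8L\tau_t}$, so the second term is $\le\tfrac\epsilon2$, and it remains to show $\sum_{t}\eta_t \ge \tfrac{4\Delta_0}{\epsilon}$ after $\cO(\sigma^2/\epsilon^2 + \tau_C/\epsilon)$ steps. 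This last point is mildly nontrivial: good steps each contribute $\eta$, and for the set $B$ of large-delay steps, Cauchy--Schwarz gives $\sum_{t\in B}\tfrac1{\tau_t}\ge |B|^2\big/\!\sum_{t\in B}\tau_t$, while Remark~\ref{rem:conc_avg_delay} gives $\sum_{t}\tau_t = \cO(\tau_C\,T)$ over any horizon $T$; a one-variable optimization over $|B|$ then yields $\sum_t\eta_t = \Omega\big(T/(L\max\{\tau_C,\sigma^2/\epsilon\})\big)$ for every delay pattern, which gives the claimed rate. I expect Step~2 --- separating the noise part (handled by orthogonality, without a $\tau_t$ factor) from the $\tau_t$-weighted drift part, and showing that the adaptive damping forces $\widetilde\Sigma_k$ to scale with $\tau_C$ instead of $\tau_C\tau_{\max}$ --- to be the main obstacle; the measure-theoretic care needed to justify the orthogonality step when delays can depend on the data is a secondary technicality that I would resolve by conditioning on the entire delay sequence.
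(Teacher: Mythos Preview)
Your proposal is correct and matches the paper's approach: Step~1 is the paper's descent lemma (Lemma~\ref{lem:descent}), and Step~2 is the paper's residual bound (Lemma~\ref{lem:residual_adaptive}), using exactly the same two ingredients --- the adaptive rule forces $\eta_t\tau_t\le\frac{1}{4L}$ uniformly, and the concurrency argument shows that any fixed index $k$ lies in at most $\tau_C$ windows $W_t$. The only difference is that the paper's Step~3 is simpler: instead of your Cauchy--Schwarz optimization over $|B|$, the paper observes (via Remark~\ref{rem:conc_avg_delay} and a Markov-type count) that at least half of the steps have $\tau_t\le\tau_C$ and hence $\eta_t=\eta$, giving $\sum_t\eta_t\ge\frac{T+1}{2}\eta$ directly, after which standard stepsize tuning yields the rate.
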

In our theorem, the stepsize $\eta_t$ in the case of large delays $\tau_t > \tau_C$ can be an arbitrary value between~$0$ and $\min\{\eta, \frac{1}{4 L \tau_t} \}$. Setting the stepsize $\eta_t \equiv 0$ is equivalent to dropping these gradients. 

\begin{proof}[Proof sketch of Theorem~\ref{thm:homogeneous-adaptive}]
	We give the intuitive proof sketch for the case when we drop gradients with $\tau_t > \tau_C$ and we deal with the general case in the Appendix. We know that $\tau_{avg} \approx \bar\tau_{C} \leq \tau_C$ from Remark~\ref{rem:conc_avg_delay}. It also holds that the number of gradients that have delay larger than the average delay $\tau_{avg}$ is smaller than half of all the gradients ($\leq \frac{T}{2}$) because delays are bounded below by zero ($\tau_t \geq 0~\forall t$). Thus, dropping the gradients with the delay $\tau_t > \tau_C$, or equivalently setting their stepsize $\eta_t \equiv 0$, will degrade the convergence rate at most by half, while the maximum delay among the applied ones now is equal to $\tau_C$. Thus we can apply result from \cite{Stich20:error-feedback} with $\tau_{max} = \tau_C$. 
\end{proof}
\begin{corollary}
	If in Algorithm~\ref{alg:async-homogeneous-general} the concurrency is constant at every step (thus $\tau_C = \bar\tau_{C}$), then under the same conditions as in Theorem~\ref{thm:homogeneous-adaptive} the convergence rate of Algorithm~\ref{alg:async-homogeneous-general} is equal to 
		\begin{align}\label{eq:rate-homo-avg}
	\cO\!\left(\frac{\sigma^2}{\epsilon^2} +\frac{{\tau_{avg}}}{\epsilon}\right).
	\end{align}\vspace{-1em}
\end{corollary}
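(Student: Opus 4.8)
The plan is to read the corollary off Theorem~\ref{thm:homogeneous-adaptive} by converting its $\tau_C$-dependence into a $\tau_{avg}$-dependence through the Key Observation (Remark~\ref{rem:conc_avg_delay}); no new estimate on the iterates of Algorithm~\ref{alg:async-homogeneous-general} is needed. Constant concurrency means that the active set has the same cardinality at every step, i.e. $\tau_C^{(t)} \equiv \tau_C$, and hence $|\cC_T| = \tau_C$ and $\bar\tau_C = \tau_C$. Substituting these two equalities into Remark~\ref{rem:conc_avg_delay} yields the exact identity
\[
\tau_{avg} \;=\; \frac{T+1}{T+\tau_C-1}\,\tau_C\,.
\]

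The first thing I would do is bound the prefactor $\frac{T+1}{T+\tau_C-1}$ from above and below by absolute constants in the regime in which the guarantee of Theorem~\ref{thm:homogeneous-adaptive} is non-vacuous. It never exceeds $\frac{T+1}{T}\le 2$, and it is at least $\tfrac12$ once $\tau_C \le T+1$; the latter condition is automatic, since the iteration count $\cO(\sigma^2\epsilon^{-2}+\tau_C\epsilon^{-1})$ in Theorem~\ref{thm:homogeneous-adaptive} is only informative when $T$ is at least of that order, which in particular makes $T$ much larger than $\tau_C$. Therefore $\tau_{avg} = \Theta(\tau_C)$; in particular $\tau_C = \cO(\tau_{avg})$.

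It then remains to substitute $\tau_C = \cO(\tau_{avg})$ into the complexity $\cO(\sigma^2\epsilon^{-2}+\tau_C\epsilon^{-1})$ of Theorem~\ref{thm:homogeneous-adaptive}, which immediately gives the claimed rate \eqref{eq:rate-homo-avg} for $\frac{1}{\sum_{t}\eta_t}\sum_{t}\eta_t\norm{\nabla f(\xx^{(t)})}_2^2 \le \epsilon$. I would also note that no adjustment to the algorithm is required: the delay-adaptive schedule \eqref{eq:adaptive_stepsizes} is stated in terms of the threshold $\tau_C$, which under constant concurrency equals the (server-known) constant concurrency, e.g.\ $n$ when all workers are always active, so the hypotheses of Theorem~\ref{thm:homogeneous-adaptive} hold verbatim. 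Since the corollary is essentially a one-step combination of two results already in hand, I do not expect a genuine obstacle; the only place that calls for an explicit argument — and the only part I would write out — is the $\Theta(1)$ estimate for the prefactor $\frac{T+1}{T+\tau_C-1}$, which hinges on being in the meaningful regime $T \ge \tau_C$.
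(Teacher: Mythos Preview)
Your proposal is correct and is exactly the intended derivation: the paper does not spell out a separate proof for this corollary, but it is meant to follow immediately from Theorem~\ref{thm:homogeneous-adaptive} together with Remark~\ref{rem:conc_avg_delay}, precisely as you outline. Your explicit $\Theta(1)$ bound on the prefactor $\frac{T+1}{T+\tau_C-1}$ in the regime $T\ge \tau_C$ is the only non-tautological step, and it is handled correctly.
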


\subsection{Discussion}
\paragraph{Comparison to synchronous optimization.}
Mini-batch SGD with batch size $n$ has the same degree of parallelism as Algorithm~\ref{alg:async-homogeneous-general} with constant concurrency $n$, i.e. it has $n$ workers computing gradients in parallel. Mini-batch SGD needs $\cO\big(\frac{\sigma^2}{n \epsilon^2} + \frac{1}{\epsilon}\big)$ \cite{Ghadimi2013:SGDrate} batches of gradients to reach an $\epsilon$-stationary point, and thus needs $\cO\big(\frac{\sigma^2}{\epsilon^2} + \frac{n}{\epsilon}\big)$ gradients, as the batch-size is equal to $n$. On the contrary, asynchronous SGD Algorithm~\ref{alg:async-homogeneous-general} with stepsizes chosen as in \eqref{eq:adaptive_stepsizes} achieves exactly the same rate \eqref{eq:rate-homo-avg} since $\tau_{avg}=\tau_C=n$, while its expected per-iteration time is \emph{faster} than that of mini-batch SGD, as no workers have to wait for others. Thus, our result shows that asynchronous SGD is always faster than mini-batch SGD \emph{regardless of the delay pattern}. A small note that in our reasoning we implicitly assumed that the sever can perform its operations in negligible time.

\paragraph{Tuning the stepsize.}
It is worth noting that our stepsize rule \eqref{eq:adaptive_stepsizes} does not introduce any additional hyperparameters to tune compared to the constant stepsize case or to synchronous SGD. $\tau_C$ is usually known and can be easily controlled by the server, especially in the practical constant concurrency case. Thus, to implement such a stepsize rule \eqref{eq:adaptive_stepsizes} one needs to tune only stepsize $\eta$, and in case of $\tau_t>\tau_C$ set stepsize $\eta_t \leq \frac{\eta}{\tau_t}$. 

\paragraph{Average v.s.\ maximum delay. } In a homogeneous environment when every worker computes gradients with same speed during the whole training, the average and maximum delays would be almost equal. However, occasional straggler devices will usually be present. In this case the maximum delay is much larger than the average delay. 

Consider a simple example with $n = 2$ workers, where the first worker computes gradients very fast, while the second worker returns its gradient only at the end of the training at the last iteration $T$. In this case the average delay $\tau_{avg} = 2$ is a small constant, while the maximum delay $\tau_{\max} = T$. In this case the rate depending only on the maximum delay $\tau_{\max}$ would guarantee convergence only up to a constant accuracy $\varepsilon = \cO(1)$. While both rates with $\sqrt{\tau_{\max} \tau_{avg}}$ and with $\tau_{avg}$ guarantee convergence up to an arbitrary small accuracy.

\paragraph{Comparison to other methods. }
\citet{Cohen2021_pickySGD} recently proposed the PickySGD algorithm that achieves a convergence rate of $\cO\big( \frac{\sigma^2}{\epsilon^2} +\frac{{\tau_{avg}}}{\epsilon} \big)$ (same as \eqref{eq:rate-homo-avg}). Their algorithm discards gradients based on the distance between the current point and the delayed one $\norm{\xx^{(t)} - \xx^{(t - \tau_t)}}$. The disadvantage of their method is that it requires sending points $\xx^{(t - \tau_t)}$ along with the gradients thus incurring twice more communications at every step. Their method also requires tuning an extra hyperparameter. In this work we achieve the same convergence rate with a much simpler method that does not require any additional communications nor additional tuning compared to synchronous SGD.

\cite{Aviv21:delayed_average} also recently proposed the delay-adaptive algorithm with convergence rate depending on the average delay $\tau_{avg}$ for the convex and strongly convex cases. Although, our convergence rates are for the non-convex case and are not directly comparable to theirs, we highlight some key differences in their analysis. First, their convergence rate depends not only on $\tau_{avg}$ but also on the variance $\sigma_{\tau}$ of the delays, which can degrade with the maximum delay. Second, they require the bounded gradient Assumption~\ref{a:bounded_gradient}. In Theorem~\ref{thm:homogeneous} we show that under Assumption~\ref{a:bounded_gradient} no modifications to the algorithm are needed to completely remove the dependence on the maximum delay $\tau_{\max}$ \eqref{eq:thm_bounded_delay}. 

\paragraph{Tightness. } 
As we explained in Example~\ref{ex:minibatch}, mini-batch SGD is covered by Algorithm~\ref{alg:async-homogeneous-general}. %
We know that mini-batch SGD convergence is lower bounded by $\Theta\big(\frac{\sigma^2}{n\epsilon^2} + \frac{1}{\epsilon}\big)$  \cite{Arjevani2019:LowerBoundSGD} in terms of batches processed and thus by $\Theta\big(\frac{\sigma^2}{\epsilon^2} + \frac{n}{\epsilon}\big)$ in terms of the gradients computed. Our convergence rate given in Theorem~\ref{thm:homogeneous} \emph{coincides with this lower bound} as in this case concurrency $\tau_{C} = n$, $\tau_{avg} = \bar\tau_C = \frac{n}{2}$.

\section{Heterogeneous Distributed Setting}\label{sec:heterogeneous}
In this section we consider more general problems of the form \eqref{eq:problem} where the functions $f_i$ are different on different nodes.
This setting is motivated for example by federated learning \cite{McMahan16:FedLearning,Kairouz2019:federated}, where every node (client) possesses its own private data, possibly coming from a different data distribution, and thus has its own different local loss function $f_i$. 

The setting here is therefore more general than the one considered in previous Section~\ref{sec:homogeneous}, and we will see that some of the results (with the constant stepsizes) in the homogeneous case follow as a special case of the more general results we present in this section.

\subsection{Algorithm}

We consider asynchronous SGD as given in Algorithm~\ref{alg:async-general}. Close variants of this algorithm were studied in several prior works \cite{Nguyen22:FedBuff, Stich2018:LocalSGD}. In order to simplify the presentation, we consider that concurrency is constant over time (and thus $\tau_C = \bar\tau_C$ in Definition~\ref{def:concurrency}). In order to allow for client subsampling often implemented in practical FL applications, we allow the concurrency $\tau_C$ to be smaller than overall number of workers $n$. The same concurrency model was recently considered in the practical FedBuff algorithm \cite{Nguyen22:FedBuff}.

\begin{algorithm}[ht]
	\caption{\textsc{Asynchronous SGD} with concurrency $\tau_{C}$}\label{alg:async-general}
	\let\oldfor\algorithmicfor
	\begin{algorithmic}[1]
		\INPUT{Initial value $\xx^{(0)} \in \R^d$, $n$ clients, concurrency $\tau_{C}$}\\[1ex]
		\textbf{Server: }\\
		\STATE sever selects \emph{uniformly at random} a set of active clients $\cC_0$ of size $\tau_C$ and sends them $\xx^{(0)}$
		\FOR{$t=0,\dots, T-1$}
		\STATE active clients $\cC_t$ are computing stochastic gradients in parallel at the assigned points
		\STATE once some client $j_t$ finishes compute, it sends $\nabla F_{j_t}(\xx^{(t - \tau_t)}, \xi_t)$ to the server
		\STATE server updates $\xx^{(t + 1)} = \xx^{(t)} - \eta_t \nabla F_{j_t}(\xx^{(t - \tau_t)}, \xi_t)$ 
		\STATE sever selects a new client $k_t \sim \operatorname{Uniform}[1, n]$ and sends it $\xx^{(t + 1)}$
		\STATE update the active worker multiset $\cC_{t + 1} = \cC_t \backslash \{j_t \} \cup \{k_t\}$
		\ENDFOR
	\end{algorithmic}
\end{algorithm}
The algorithm is very similar to the homogeneous Algorithm~\ref{alg:async-homogeneous-general} with two key differences: at line 6, the server selects clients \emph{out of all clients}, and does so \emph{uniformly at random}, regardless of the current active worker set $\cC_t$. This means that the same client can get sampled several times, even if it didn't finish its previous job(s) yet (thus $\cC_t$ is a multiset). In this case, the assigned jobs would just pile up on this client. 

\subsection{Theoretical analysis}
We first note that our key observation on the delays (Remark \ref{rem:conc_avg_delay}) holds for Algorithm~\ref{alg:async-general} as well. Moreover, as we have a constant concurrency $\tau_C$ at every step, $\tau_{avg} = \cO\!\left(\tau_C\right)$. 

\begin{definition} \label{def:avg_delay_i}
	Denote a (possibly empty) set $\{\tau_{k}^{C_T, i} \}_{k}$ to be the set of delays from gradients of the client $i$ that are left unapplied at the last iteration of the Algorithm~\ref{alg:async-general}.
	
	We define the average delay of a client $i$ as 
	\begin{align*}
		\tau_{avg}^i = \frac{1}{T_i} \left( \sum_{t~ : ~j_t = i} \tau_t + \sum_{k} \tau_{k}^{C_T, i} \right)
	\end{align*}
	where $T_i$ is the number of times the client $i$ got sampled during lines 1 and 6 of Algorithm~\ref{alg:async-general}. 
\end{definition}
\begin{assumption}\label{a:avg_dely}
	The average delay $\tau_{avg}^i$ is independent from the number of times $T_i$ the client $i$ got sampled. 
\end{assumption}

\begin{restatable}[constant stepsizes]{theorem}{themheterogeneous}\label{thm:heterogeneous}
	Under Assumptions \ref{a:stoch_noise}, \ref{a:heterogeneity}, \ref{a:lsmooth_nc}, \ref{a:avg_dely} there is exist a constant stepsize $\eta_t \equiv \eta$ such that for Algorithm~\ref{alg:async-general} it holds that $\frac{1}{T + 1} \sum_{t = 0}^T \norm{\nabla f(\xx^{(t)})}_2^2 \leq \epsilon$ after
	\begin{align}\label{eq:thm_het_first}
	\cO\!\left(\frac{\sigma^2}{\epsilon^2} + \frac{\zeta^2}{\epsilon^2} + \frac{\sqrt{\tau_{avg} \frac{1}{n} \sum_{i = 1}^n \zeta_i^2 \tau^i_{avg}}}{\epsilon^{\frac{3}{2}}}+ \frac{\sqrt{\tau_{avg} \tau_{\max}}}{\epsilon}\right) && \text{iterations,}
	\end{align}
	
	Under Assumptions~ \ref{a:stoch_noise}, \ref{a:heterogeneity}, \ref{a:lsmooth_nc} and additional bounded gradient Assumption~\ref{a:bounded_gradient}, it holds that $\frac{1}{T + 1} \sum_{t = 0}^T \norm{\nabla f(\xx^{(t)})}_2^2 \leq \epsilon$ after
	\begin{align}\label{eq:het_bounded}
	\cO\!\left(\frac{\sigma^2}{\epsilon^2} + \frac{\zeta^2}{\epsilon^2}  + \frac{\tau_{avg} G}{\epsilon^{\frac{3}{2}}} + \frac{\tau_{avg}}{\epsilon}\right) && \text{iterations.}
	\end{align}	
\end{restatable}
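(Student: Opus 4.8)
The plan is to follow the descent-plus-delay-drift template used for the homogeneous Theorem~\ref{thm:homogeneous}, carrying the heterogeneity constants $\zeta_i$ through all the estimates, and to close with the stepsize-tuning lemma of \citet{Stich20:error-feedback}. Throughout I write $\gg_t := \nabla F_{j_t}(\xx^{(t-\tau_t)},\xi_t)$ and $\Delta_0 := f(\xx^{(0)}) - \inf_{\xx} f(\xx)$, and I recall $\tau_{avg} = \cO(\tau_C)$ from Remark~\ref{rem:conc_avg_delay}. First I would write the one-step descent from $L$-smoothness applied to $\xx^{(t+1)} = \xx^{(t)} - \eta\gg_t$; since $\xi_t$ is drawn at the computation time $t-\tau_t$, is unbiased, and has variance at most $\sigma^2$ (Assumption~\ref{a:stoch_noise}), this gives
\begin{align*}
\Eb{f(\xx^{(t+1)})} \le \Eb{f(\xx^{(t)})} - \eta\,\Eb{\lin{\nabla f(\xx^{(t)}),\,\nabla f_{j_t}(\xx^{(t-\tau_t)})}} + \tfrac{L\eta^2}{2}\big(\Eb{\norm{\nabla f_{j_t}(\xx^{(t-\tau_t)})}^2}+\sigma^2\big).
\end{align*}

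The second step is to remove the client bias in the inner-product term. Line~6 of Algorithm~\ref{alg:async-general} assigns the job producing $\gg_t$ to a uniformly random client, and the iterate $\xx^{(t-\tau_t)}$ is determined before that draw; under Assumption~\ref{a:avg_dely}, which decouples which client's gradient is the one applied at step $t$ with its delay $\tau_t$ from that client's identity, one obtains $\Eb{\nabla f_{j_t}(\xx^{(t-\tau_t)})\mid\xx^{(t-\tau_t)}} = \nabla f(\xx^{(t-\tau_t)})$ and, by the bias--variance split with Assumption~\ref{a:heterogeneity}, $\Eb{\norm{\nabla f_{j_t}(\xx^{(t-\tau_t)})}^2} \le \Eb{\norm{\nabla f(\xx^{(t-\tau_t)})}^2} + \zeta^2$. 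Using $\lin{a,b}=\tfrac12\norm{a}^2+\tfrac12\norm{b}^2-\tfrac12\norm{a-b}^2$, then $L$-smoothness to bound $\norm{\nabla f(\xx^{(t)})-\nabla f(\xx^{(t-\tau_t)})}^2 \le L^2\norm{\xx^{(t)}-\xx^{(t-\tau_t)}}^2$, and requiring $\eta\le\tfrac{1}{4L}$ so the $\nabla f(\xx^{(t-\tau_t)})$ contributions cancel, the descent collapses to
\begin{align*}
\Eb{f(\xx^{(t+1)})} \le \Eb{f(\xx^{(t)})} - \tfrac{\eta}{2}\Eb{\norm{\nabla f(\xx^{(t)})}^2} + \cO(\eta L^2)\,\Eb{\norm{\xx^{(t)}-\xx^{(t-\tau_t)}}^2} + \cO(L\eta^2)(\sigma^2+\zeta^2).
\end{align*}
Under the bounded-gradient Assumption~\ref{a:bounded_gradient} I would instead keep the inner-product error in the form $\eta L\,\norm{\nabla f(\xx^{(t)})}\,\norm{\xx^{(t)}-\xx^{(t-\tau_t)}}$ and spend the bound $\norm{\nabla f}\le G$ only on the last factor rather than Young-inflating it; this is what ultimately keeps the delay dependence at $\tau_{avg}$ rather than $\sqrt{\tau_{avg}\tau_{\max}}$.

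Next I would control the drift $\Eb{\norm{\xx^{(t)}-\xx^{(t-\tau_t)}}^2}$ by expanding $\xx^{(t)}-\xx^{(t-\tau_t)} = -\eta\sum_{s=t-\tau_t}^{t-1}\gg_s$ and splitting each $\gg_s$ into $\nabla f(\xx^{(s-\tau_s)})$, a client-heterogeneity part $\nabla f_{j_s}(\xx^{(s-\tau_s)})-\nabla f(\xx^{(s-\tau_s)})$ (squared norm $\le\zeta_{j_s}^2$), and a zero-mean noise part. The noise contributes only $\cO(\eta^2\sigma^2\tau_t)$ per step thanks to independence across $s$, summing to $\cO(\eta^2\sigma^2\tau_{avg}T)$. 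For the $\nabla f$ part, Cauchy--Schwarz gives $\eta^2\tau_t\sum_s\norm{\nabla f(\xx^{(s-\tau_s)})}^2$; summed over $t$, each $\norm{\nabla f(\xx^{(u)})}^2$ is weighted by at most $\tau_{\max}$ times the number of gradients in flight across step $u$, which is at most the concurrency $\tau_C$, yielding $\cO(\eta^2\tau_{\max}\tau_C)\sum_u\Eb{\norm{\nabla f(\xx^{(u)})}^2}$. The heterogeneity part is the delicate one: its contribution $\eta^2\sum_t\tau_t\sum_{s=t-\tau_t}^{t-1}\Eb{\zeta_{j_s}^2}$ does not telescope against $\zeta^2$, because the client identities and the delays are correlated; grouping the double sum by the client of the inner index $s$ and invoking Assumption~\ref{a:avg_dely} with Definition~\ref{def:avg_delay_i} lets one trade one factor $\tau_{\max}$ for the client-specific average delay, producing $\cO\big(\eta^2\,\tau_{avg}\,\tfrac1n\sum_i\zeta_i^2\tau_{avg}^i\,T\big)$. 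Under Assumption~\ref{a:bounded_gradient} the analogous accounting (now with the linear-drift handling above) replaces $\tfrac1n\sum_i\zeta_i^2\tau_{avg}^i$ by $\tau_{avg}G^2$, i.e.\ yields $\cO(\eta^2\tau_{avg}^2 G^2 T)$.

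Finally, summing over $t=0,\dots,T$, substituting the drift bound, and choosing the constant stepsize $\eta = \cO\big(1/(L\sqrt{\tau_{\max}\tau_C})\big)$ so the $\norm{\nabla f}^2$ part of the drift is absorbed by the negative term, I obtain (after dividing by $\tfrac{\eta(T+1)}{2}$, the $\cO(\eta^2\sigma^2\tau_{avg})$-type remainders being dominated by $L\eta(\sigma^2+\zeta^2)$ under the constraint on $\eta$)
\begin{align*}
\tfrac{1}{T+1}\textstyle\sum_{t=0}^{T}\Eb{\norm{\nabla f(\xx^{(t)})}^2} \;\le\; \cO\!\Big(\tfrac{\Delta_0}{\eta(T+1)} + L\eta(\sigma^2+\zeta^2) + L^2\eta^2\,\tau_{avg}\,\tfrac1n\textstyle\sum_i\zeta_i^2\tau_{avg}^i\Big),
\end{align*}
with the last term replaced by $L^2\eta^2\tau_{avg}^2 G^2$ under Assumption~\ref{a:bounded_gradient}. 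Optimizing $\eta$ over these terms against the constraint $\eta=\cO(1/(L\sqrt{\tau_{\max}\tau_C}))$ by the tuning lemma of \citet{Stich20:error-feedback}, and using $\tau_C=\cO(\tau_{avg})$, then gives the complexities \eqref{eq:thm_het_first} and \eqref{eq:het_bounded}. I expect the main obstacle to be the per-client accounting inside the drift — controlling $\sum_t\tau_t\sum_{s\in[t-\tau_t,t)}\zeta_{j_s}^2$ and its bounded-gradient analogue so that the average delays $\tau_{avg}^i$ (and not $\tau_{\max}$) appear, which is exactly what turns a crude $\tau_{\max}$-dependent estimate into the refined $\epsilon^{-3/2}$ terms and is where Definition~\ref{def:avg_delay_i} and Assumption~\ref{a:avg_dely} do the real work; a secondary technical point is making the conditional-unbiasedness argument of the second step rigorous despite the schedule depending on past client draws.
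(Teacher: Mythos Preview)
Your descent-plus-drift template is the right shape, but the step you flag as ``secondary'' is in fact the main obstacle, and your proposed fix does not work. You claim $\Eb{\nabla f_{j_t}(\xx^{(t-\tau_t)})\mid\xx^{(t-\tau_t)}}=\nabla f(\xx^{(t-\tau_t)})$ because the job was assigned to a uniformly drawn client. But $j_t$ is not the client that was \emph{assigned} at time $t-\tau_t$; it is the client that \emph{finishes} at time $t$, and which client finishes first is determined by computation speeds. Fast clients appear as $j_t$ more often than slow ones, so $j_t$ is biased and the inner-product term does not center at $\nabla f$. Assumption~\ref{a:avg_dely} only says that a client's average delay is independent of how often it is sampled; it says nothing about the marginal law of $j_t$. (The paper's discussion of FedBuff makes exactly this point: assuming $j_t$ uniform is unrealistic.) The same bias contaminates your heterogeneity accounting in the drift: the sum $\sum_s \zeta_{j_s}^2$ is over \emph{returning} clients, so grouping by client does not produce the uniform weight $\tfrac1n$ you need.

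The paper's remedy is to abandon the direct descent on $\xx^{(t)}$ and use the perturbed-iterate framework instead: it defines a virtual sequence $\tilde\xx^{(t+1)}=\tilde\xx^{(t)}-\eta\,\nabla F_{k_t}(\xx^{(t)},\xi_{t+\hat\tau_t})$ indexed by the \emph{freshly sampled} client $k_t$ from line~6, for which uniformity (and hence unbiasedness and the $\zeta^2$ variance bound) holds by construction. The descent lemma is then written for $f(\tilde\xx^{(t)})$, and the delay effect enters only through $\norm{\xx^{(t)}-\tilde\xx^{(t)}}^2$, which equals $\eta^2$ times the squared norm of the sum of the currently in-flight gradients $\cC_t$. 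Bounding that sum, each $\zeta_j^2$ appears exactly $\tau_j^{sum}$ times; since $T_j$ is the count of \emph{samplings} (not finishes) it has mean $T/n$, and Assumption~\ref{a:avg_dely} then turns $\E\tau_j^{sum}$ into $\tfrac{T}{n}\tau_{avg}^j$, which is where the $\tfrac1n\sum_i\zeta_i^2\tau_{avg}^i$ term actually comes from. The bounded-gradient part \eqref{eq:het_bounded} is handled the same way (still via $\tilde\xx$, not by your linear-drift trick), simply replacing the heterogeneity split by $\norm{\nabla f_{j_i}(\xx^{(i)})}^2\le G^2$ inside $\norm{\xx^{(t)}-\tilde\xx^{(t)}}^2$. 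Once you switch to this virtual sequence the rest of your plan (stepsize constraint $\eta\le\tfrac{1}{4L\sqrt{\tau_C\tau_{\max}}}$, absorption of the $\norm{\nabla f}^2$ drift term, and final tuning) goes through essentially as you wrote it.
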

We note that the leading $\frac{1}{\epsilon^2}$ term is affected by heterogeneity $\zeta^2$ because at every step we apply gradient from only one client. This term is usually present in the federated learning algorithms with client subsampling see e.g. \cite{Karimireddy19:scaffold}. 
\subsection{Discussion}
\paragraph{Comparison to other works. }
The recent FedBuff algorithm \cite{Nguyen22:FedBuff} is similar to our Algorithm~\ref{alg:async-general}. Their algorithm allows clients to perform several local steps and the server to wait for more than 1 client to finish compute (aka buffering), which we did not include for simplicity as these aspects are orthogonal to the effect of delays. 

Disregarding these two orthogonal changes, the FedBuff algorithm is almost equivalent to our Algorithm~\ref{alg:async-general} with a key difference: they assume that the client $j_t$ that finishes computation at every step comes from the uniform distribution over all the clients. This is unrealistic to assume in practice because the server cannot control which clients finish computations at every step. In Algorithm~\ref{alg:async-general} we have the more realistic assumption only on the sampling process of the clients (on line 6) that \emph{can be controlled} by the server. This reflects practical client sampling in federated learning. 

The convergence rate of FedBuff \cite{Nguyen22:FedBuff} under the bounded gradient assumption is $\cO\!\left( \frac{\sigma^2}{\epsilon^2} + \frac{\zeta^2}{\epsilon^2} + \frac{(\zeta^2 + 1) \tau_{\max} G^2}{\epsilon} \right)$. In contrast, in Theorem~\ref{thm:heterogeneous} we completely remove the dependence on the maximum delay $\tau_{\max}$ under bounded gradients (as in Equation \eqref{eq:het_bounded}).

\paragraph{Delays.} We note that for Theorem~\ref{thm:heterogeneous} we did not impose any assumption on the delays. Thus, our result allows clients and the delays on these clients to be dependent, meaning that some of the clients could be systematically slower than others. Interestingly, the middle heterogeneity term (the term with $\zeta_i$) is not affected by the maximum delay at all, but is affected by the average delay within each individual client. If all the heterogeneity parameters are equal, i.e. $\zeta_i = \zeta_j, \forall i,j$, then the middle term will be affected only by the overall average delay $\tau_{avg}$. 

\paragraph{Gradient clipping. } Practical implementations of FL algorithms usually apply clipping to the gradients in order to guarantee differential privacy \cite{Kairouz2019:federated}%
. This automatically bounds the norm of all applied gradients, making the the constant $G^2$ in Assumption~\ref{a:bounded_gradient} small. Although we do not provide formal convergence guarantees of asynchronous SGD with gradient clipping, we envision that its convergence rate would depend only on the average delay, similar to the bounded gradient case \eqref{eq:thm_bounded_delay}, thus making the algorithm robust to stragglers. 

\paragraph{Delay-adaptive stepsizes.} For homogeneous functions we have shown that delay-adaptive stepsizes result in a convergence rate dependent only on the average delay $\tau_{avg}$ without assuming bounded gradients (as in Equation \eqref{eq:adaptive_stepsizes}). However in the heterogeneous case this is not so straightforward. Delay-adaptive learning rate schemes will introduce a bias towards the clients that compute quickly, and Algorithm~\ref{alg:async-general} would converge to the wrong objective.

It is interesting to note that current popular schemes implemented in practice for FL over-selects the clients at every iteration \cite{Bonawitz19:over-selection}. The server waits only for some percentage (e.g. 80\%) of sampled clients and discards the rest. Such a scheme also introduces a bias towards fast workers. A delay-adaptive learning rate scheme is expected to introduce less bias as the gradients are still applied but with the smaller weight. We leave this question for future practical investigations, as it is not the focus of our current work.

\paragraph{Independent delays. } If the delays and the clients are independent (e.g. coming from the same distribution for all of the clients), then the convergence rate of Algorithm~\ref{alg:async-general} will simplify to $\cO\!\left(\frac{\sigma^2}{\epsilon^2}  + \frac{\zeta\tau_{avg}}{\epsilon^{\frac{3}{2}}}+ \frac{\sqrt{\tau_{avg} \tau_{\max}}}{\epsilon}\right)$ (without needing bounded gradient assumption). 
In this case it is also possible to use delay-adaptive stepsizes (similar to Theorem~\ref{thm:homogeneous-adaptive}) to completely remove the dependence on the maximum delays $\tau_{\max}$ without assuming bounded gradients. %

\paragraph{Extensions.}
We can extend the Algorithm~\ref{alg:async-general} and our theoretical analysis to allow clients to perform several local steps, before sending back the change in $\xx$. We can also extend Algorithm~\ref{alg:async-general} to allow the server to wait for the first $K$ clients to finish computations rather than just one, similar to \cite{Nguyen22:FedBuff}. These extensions are straightforward and we excluded them here for simplicity of presentation. 

Finally, we can also extend Algorithm~\ref{alg:async-general} to sample new clients as soon as some previous client finished compute, without waiting for the server update on the line 5. %

\subsection{Estimating Speedup over Synchronous SGD}
Assume we have $n$ clients, each of which having a different but constant time to compute a gradient $\{\Delta_i\}_{i = 1}^n$. W.l.o.g. we assume that $\Delta_i$ are ordered as $\Delta_1 \leq \Delta_2 \leq \dots \leq \Delta_n$. 

\begin{lemma}\label{lem:speedup}
	In expectation, the asynchronous Algorithm~\ref{alg:async-general} needs \vspace{-1mm}
	\begin{align*}
	\bar \Delta = \frac{1}{n} \sum_{ i = 1}^n \Delta_i 
	\end{align*}
	time to compute $\tau_C$ gradients, while mini-batch SGD with batch size $\tau_C$ needs \vspace{-1mm} 
	\begin{align*}
	\tilde \Delta = \sum_{ i = 1}^n \alpha_i \Delta_i
	\end{align*}
	time to compute a batch of $\tau_C$ gradients, where $\alpha_i = \frac{i^{\tau_C}  - (i - 1)^{\tau_C}}{n^{\tau_C}}$. 
	It is also always holds that 
	$\bar\Delta \leq \tilde \Delta$. 
\end{lemma}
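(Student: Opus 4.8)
I would establish the three assertions separately: (i) the asynchronous Algorithm~\ref{alg:async-general} produces gradients at an amortized rate of $\tau_C/\bar\Delta$ per unit time, so $\tau_C$ gradients cost $\bar\Delta$ time in expectation; (ii) mini-batch SGD with batch size $\tau_C$ costs $\tilde\Delta=\sum_i\alpha_i\Delta_i$ per batch in expectation; (iii) $\bar\Delta\le\tilde\Delta$.

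For (i), observe that Algorithm~\ref{alg:async-general} keeps exactly $\tau_C$ clients computing at all times; view these as $\tau_C$ parallel ``slots''. Whenever the client in a slot finishes (line~4), line~6 immediately draws a fresh client $k_t\sim\operatorname{Uniform}[1,n]$ to occupy a slot, so within a slot the successive computation times form an i.i.d.\ sequence, each distributed as $\Delta_I$ with $I\sim\operatorname{Uniform}[1,n]$ and mean $\mathbb{E}\,\Delta_I=\bar\Delta$ (the single initial client from line~1 is also marginally uniform and only contributes a boundary term). Run the algorithm to a wall-clock time $t$ at which $M$ gradients have been produced in all, $M_s$ of them in slot $s$, so $\sum_{s=1}^{\tau_C}M_s=M$. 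In slot $s$ the elapsed time is $\sum_{k=1}^{M_s}\Delta^{(s)}_k$ plus the still-unfinished $(M_s{+}1)$-th computation, i.e.\ $t=\sum_{k=1}^{M_s}\Delta^{(s)}_k+O(\Delta_n)$; summing over slots, $\tau_C\,t=\sum_{s=1}^{\tau_C}\sum_{k=1}^{M_s}\Delta^{(s)}_k+O(\tau_C\Delta_n)$. Taking expectations and using Wald's identity in each slot ($M_s{+}1$ is a stopping time for that renewal sequence) together with i.i.d.\ uniformity of the client indices gives $\mathbb{E}\big[\sum_s\sum_k\Delta^{(s)}_k\big]=\bar\Delta\,\mathbb{E}[M]$, hence $\mathbb{E}[t]=\tfrac{\bar\Delta}{\tau_C}\,\mathbb{E}[M]+O(\Delta_n)$; equivalently, invoking the elementary renewal theorem per slot, the production rate is $\tau_C/\bar\Delta$, so $\tau_C$ gradients take $\bar\Delta$ time in expectation.

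For (ii), a batch of size $\tau_C$ is formed by $\tau_C$ i.i.d.\ uniform draws $i_1,\dots,i_{\tau_C}\sim\operatorname{Uniform}[1,n]$ (matching the sampling of Algorithm~\ref{alg:async-general}); since the sampled clients compute in parallel and the server waits for all of them, the batch time equals $\max_{1\le k\le\tau_C}\Delta_{i_k}$. Because $\Delta_1\le\cdots\le\Delta_n$, this maximum equals $\Delta_i$ precisely when every draw falls in $\{1,\dots,i\}$ but not every draw falls in $\{1,\dots,i-1\}$, an event of probability $(i/n)^{\tau_C}-((i-1)/n)^{\tau_C}=\alpha_i$; hence $\tilde\Delta=\mathbb{E}[\max_k\Delta_{i_k}]=\sum_{i=1}^n\alpha_i\Delta_i$. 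For (iii), telescoping gives $\sum_{i=1}^n\alpha_i=\tfrac{1}{n^{\tau_C}}\sum_{i=1}^n\big(i^{\tau_C}-(i-1)^{\tau_C}\big)=1$, and $\alpha_i$ is non-decreasing in $i$ because $x\mapsto x^{\tau_C}$ is convex, so its forward difference $i^{\tau_C}-(i-1)^{\tau_C}$ is non-decreasing. Setting $a_i:=\alpha_i-\tfrac1n$, the sequence $(a_i)$ is non-decreasing with $\sum_i a_i=0$, so there is an index $k$ with $a_i\le0$ for $i\le k$ and $a_i\ge0$ for $i>k$; since also $\Delta_i\le\Delta_k$ on the first range and $\Delta_i\ge\Delta_k$ on the second, $\tilde\Delta-\bar\Delta=\sum_{i=1}^n a_i\Delta_i\ge\Delta_k\sum_{i=1}^n a_i=0$, which is Chebyshev's sum inequality for the comonotone sequences $(\Delta_i)$ and $(\alpha_i)$.

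The step I expect to be the real work is making (i) rigorous rather than heuristic: bounding the $O(\Delta_n)$ contribution of the unfinished computations at the chosen horizon and justifying the interchange of expectation (Wald's identity / elementary renewal theorem applied per slot), so that the ``amortized $\bar\Delta$ per $\tau_C$ gradients'' statement is precise. Parts (ii) and (iii) are a short order-statistics computation and a direct application of Chebyshev's sum inequality.
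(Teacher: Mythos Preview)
Your proposal is correct and follows essentially the same approach as the paper: the ``$\tau_C$ parallel slots, each a renewal process with mean inter-arrival time $\bar\Delta$'' argument for (i), and the order-statistics computation $P[\max_k\Delta_{i_k}\le\Delta_i]=(i/n)^{\tau_C}$ for (ii). Two remarks: the paper's own proof of (i) is in fact the heuristic version you start from (it simply says ``view the algorithm as $C$ independent copies of the $C=1$ process run in parallel'') and does not invoke Wald or the renewal theorem, so your concern about making (i) rigorous goes beyond what the paper does; and the paper's proof does not actually address (iii) at all, so your Chebyshev-sum-inequality argument for $\bar\Delta\le\tilde\Delta$ fills a gap that the paper leaves open.
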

With this lemma we can precisely estimate how much faster the asynchronous algorithm is compared to the classic synchronous mini-batch one. Note that $\alpha_i$ are increasing with $i$ with a rate of $\cO(i^{\tau_C})$, thus in mini-batch SGD, the large delays get a much higher weight than the small delays, especially when the batch size $\tau_C$ is large. 

For example, consider $1000$ clients, $900$ of which compute their update every $10$s, while $100$ of them computes their update every $60s$. Then the expected time for $\tau_C$ gradients of the asynchronous algorithm will be $15$s, while synchronous mini-batch SGD (with $\tau_C = 10$) will take a significantly longer time of $42.5$s for the same number of gradients.

\section{Conclusion}

In this paper we study the asynchronous SGD algorithm both in homogeneous and heterogeneous settings. By leveraging the notion of concurrency---the number of workers that compute gradients in parallel---we show a much faster convergence rate for asynchronous SGD, improving the dependence on the maximum delay $\tau_{\max}$ over  prior works, for both homogeneous and heterogeneous objectives. Our proof technique also allows to design a simple delay-adaptive stepsize rule~\eqref{eq:adaptive_stepsizes} that attains a convergence rate depending only on the average delay $\tau_{avg}$ that neither requires any additional tuning, nor additional communication. Our techniques allows us to demonstrate that \emph{asynchronous SGD is faster than mini-batch SGD for any delay pattern}. 

\section*{Acknowledgments}
AK was supported by a Google PhD Fellowship. 
We thank Brendan McMahan, Thijs Vogels, Hadrien Hendrikx, Aditya Vardhan Varre and Maria-Luiza Vladarean for useful discussions and their feedback on the manuscript. 

\medskip

\bibliographystyle{myplainnat} %
{\small
\bibliography{reference}
}

\newpage
\appendix
\section{Proofs}
In this section we provide the proofs of all the theoretical results stated in the main paper.
\subsection{Proof of Remark~\ref{rem:conc_avg_delay}}

First, we prove our key observation given in Remark~\ref{rem:conc_avg_delay}. 
\remarkkey*

\begin{proof}
	Define $\{\tau_i^{\cC_t} \}_{i \in \cC_t}$ as the set of delays of the gradients that are left in the active worker set before iteration $t$ is performed, i.e. each $\tau_i^{\cC_t}$ is equal to the difference between the current iteration $t$ and the iteration at which worker $i$ started to compute its current gradient for $t > 0$, and $\tau_i^{\cC_0} = 1$ for all $i \in \cC_0$, that is the initial set of active workers. For simplicity we denote 
	\begin{align*}
	\tau^{\text{active}, t}_{sum} := \sum_{i \in \cC_t} \tau_i^{\cC_t} \,.
	\end{align*}
	We also define $\tau_{sum}^{\text{applied}, t}$ as the sum of all delays of gradients applied before iteration $t$ is performed, i.e. 
	\begin{align*}
	\tau_{sum}^{\text{applied}, t} := \sum_{j = 0}^{t - 1} \tau_j \,.
	\end{align*}
	
	At the zero-th iteration we have that 
	\begin{align} \label{eq:rem5_initial_cond}
	\tau_{sum}^{\text{applied}, 0}  = 0\,, && \tau^{\text{active}, 0}_{sum} = \tau_C^{(0)}\,,
	\end{align}
	as no gradients were applied yet. 
	
	We claim that 
	\begin{align}\label{eq:rem5_recursion}
	\tau_{sum}^{\text{applied}, t + 1}  + \tau^{\text{active}, t + 1}_{sum} = \tau_{sum}^{\text{applied}, t }  + \tau^{\text{active}, t }_{sum} + \tau_{C}^{(t + 1)} \,.
	\end{align}
	Indeed, one of the gradients from $\cC_t$ got applied and its delay moved from $\tau^{\text{active}, t }_{sum}$ to $\tau_{sum}^{\text{applied}, t + 1}$. The newly selected active workers in line 6 of Algorithm~\ref{alg:async-homogeneous-general} have delay zero, as they just started their computations in this step. And all of the current active workers in $\cC_{t + 1}$ (of size $|\cC_{t + 1}| = \tau_C^{(t + 1)}$) got an increase by 1 due to increase of the iteration count from $t$ to $t + 1$. 
	
	Using the initial conditions \eqref{eq:rem5_initial_cond} and \eqref{eq:rem5_recursion} we can conclude that 
	\begin{align*}
	\tau_{sum}^{\text{applied}, T}  + \tau^{\text{active}, T}_{sum} = \sum_{t = 0}^{T}\tau_{C}^{(t)} = (T + 1) \bar \tau_{C} \,.
	\end{align*}
	Note that the left hand side is exactly equal to $(T + |C_T| - 1) \tau_{avg}$ from our Definition~\ref{def:delays}. 
	Thus, 
	\begin{align*}
	\tau_{avg}= \frac{T + 1}{T + |C_T| - 1} \bar \tau_{C} = \cO\left(\bar\tau_C\right) \,,
	\end{align*}
	where the last equality holds if $T > |C_T|$. 
\end{proof}

\subsection{Useful inequalities}
\begin{lemma}\label{remark:norm_of_sum}
	For an arbitrary set of $n$ vectors $\{\aa_i\}_{i = 1}^n$, $\aa_i \in \R^d$
	\begin{equation}\label{eq:norm_of_sum}
	\norm{\sum_{i = 1}^n \aa_i}^2 \leq n \sum_{i = 1}^n \norm{\aa_i}^2 \,.
	\end{equation}
\end{lemma}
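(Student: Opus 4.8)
The plan is to reduce this vector inequality to a scalar one via the triangle inequality, and then close it with Cauchy--Schwarz applied to the vector of norms. First I would apply the triangle inequality in $\R^d$ to get $\norm{\sum_{i=1}^n \aa_i} \le \sum_{i=1}^n \norm{\aa_i}$, so that squaring both sides yields $\norm{\sum_{i=1}^n \aa_i}^2 \le \left(\sum_{i=1}^n \norm{\aa_i}\right)^2$. Next I would view the right-hand side as $\lin{\1, \bb}^2$, where $\bb = (\norm{\aa_1}, \dots, \norm{\aa_n})^\top \in \R^n$ and $\1 \in \R^n$ is the all-ones vector, and apply the Cauchy--Schwarz inequality in $\R^n$: $\lin{\1, \bb}^2 \le \norm{\1}^2\,\norm{\bb}^2 = n \sum_{i=1}^n \norm{\aa_i}^2$. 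Chaining the two bounds gives exactly \eqref{eq:norm_of_sum}.

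As an alternative I would mention the convexity route: since $\xx \mapsto \norm{\xx}^2$ is convex, Jensen's inequality gives $\norm{\frac1n \sum_{i=1}^n \aa_i}^2 \le \frac1n \sum_{i=1}^n \norm{\aa_i}^2$, and multiplying through by $n^2$ recovers the claim. A third option is to expand $\norm{\sum_{i=1}^n \aa_i}^2 = \sum_{i,j} \lin{\aa_i, \aa_j}$ and bound each cross term via $\lin{\aa_i, \aa_j} \le \tfrac12\left(\norm{\aa_i}^2 + \norm{\aa_j}^2\right)$, whose double sum is $n \sum_{i=1}^n \norm{\aa_i}^2$. Any of these would work; I would present the triangle-inequality-plus-Cauchy--Schwarz version for concreteness.

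There is no genuine obstacle here: the statement is a textbook consequence of Cauchy--Schwarz (equivalently the power-mean / QM--AM inequality), so the ``hard part'' is merely deciding which standard derivation to write down. The only point worth remarking is tightness: equality holds precisely when all the $\aa_i$ coincide, which is why the constant $n$ cannot be improved in general.
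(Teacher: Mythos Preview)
Your proposal is correct; each of the three derivations you outline (triangle inequality plus Cauchy--Schwarz, Jensen's inequality for the convex map $\xx\mapsto\norm{\xx}^2$, and the bilinear expansion with the AM--GM bound on cross terms) is a valid and complete argument. The paper itself states this lemma as a standard ``useful inequality'' without proof, so there is nothing to compare against---any of your routes would serve.
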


\subsection{Proof of Theorems~\ref{thm:homogeneous}, \eqref{eq:thm_first_part} and \ref{thm:homogeneous-adaptive}}
We first recall both of the theorems
\firstthm*

\secondthm*
We first give a common lemma that will be used in the proofs for both of the theorems.
\begin{lemma}[Descent Lemma]\label{lem:descent}
	Under Assumptions~\ref{a:stoch_noise} and \ref{a:lsmooth_nc}, if in Algorithm~\ref{alg:async-homogeneous-general} the stepsize $\eta_t < \frac{1}{2 L }$ then it holds that 
	\begin{align*}
	\EE{t + 1}{f({\xx}^{(t + 1)})} &\leq f({\xx}^{(t)}) - \frac{\eta_t}{2} \norm{\nabla f({\xx}^{(t)})}_2^2- \frac{\eta_t}{4}\norm{\nabla f({\xx}^{(t - \tau_t)})}^2 + L \eta_t^2 \sigma^2 + \frac{\eta_t L^2}{2} \norm{{\xx}^{(t)} -  \xx^{(t - \tau_t)}}_2^2 ,.
	\end{align*}
\end{lemma}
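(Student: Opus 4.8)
## Proof plan for the Descent Lemma

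The plan is to apply $L$-smoothness of $f$ to bound $f(\xx^{(t+1)})$ in terms of $f(\xx^{(t)})$ and the update step, then take the conditional expectation over the stochastic noise $\xi_t$, and finally use Young's inequality to split the cross term into the pieces that appear in the statement. First I would write, using Assumption~\ref{a:lsmooth_nc} (which implies $f$ is $L$-smooth since it is an average of the $f_i$),
\[
f(\xx^{(t+1)}) \leq f(\xx^{(t)}) + \lin{\nabla f(\xx^{(t)}), \xx^{(t+1)} - \xx^{(t)}} + \frac{L}{2}\norm{\xx^{(t+1)} - \xx^{(t)}}^2,
\]
and substitute the update rule $\xx^{(t+1)} - \xx^{(t)} = -\eta_t \nabla F(\xx^{(t-\tau_t)}, \xi_t)$ from line 5 of Algorithm~\ref{alg:async-homogeneous-general}.

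Next I would take $\EE{t+1}{\cdot}$, the expectation conditioned on everything up to and including iteration $t$ except the fresh noise $\xi_t$. Since (as emphasized in the setup) $\xx^{(t)}$ and $\xx^{(t-\tau_t)}$ do not depend on $\xi_t$, we have $\EE{t+1}{\nabla F(\xx^{(t-\tau_t)},\xi_t)} = \nabla f(\xx^{(t-\tau_t)})$ (using homogeneity, $f_i \equiv f$), so the linear term becomes $-\eta_t \lin{\nabla f(\xx^{(t)}), \nabla f(\xx^{(t-\tau_t)})}$. For the quadratic term, the bias-variance decomposition together with Assumption~\ref{a:stoch_noise} gives $\EE{t+1}{\norm{\nabla F(\xx^{(t-\tau_t)},\xi_t)}^2} \leq \norm{\nabla f(\xx^{(t-\tau_t)})}^2 + \sigma^2$. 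So far:
\begin{align*}
\EE{t+1}{f(\xx^{(t+1)})} \leq{}& f(\xx^{(t)}) - \eta_t \lin{\nabla f(\xx^{(t)}), \nabla f(\xx^{(t-\tau_t)})} \\
&{}+ \frac{L\eta_t^2}{2}\left(\norm{\nabla f(\xx^{(t-\tau_t)})}^2 + \sigma^2\right).
\end{align*}

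The remaining work is to handle the inner product $-\eta_t\lin{\nabla f(\xx^{(t)}), \nabla f(\xx^{(t-\tau_t)})}$. I would use the polarization identity $-\lin{\aa,\bb} = \frac12\norm{\aa-\bb}^2 - \frac12\norm{\aa}^2 - \frac12\norm{\bb}^2$ with $\aa = \nabla f(\xx^{(t)})$, $\bb = \nabla f(\xx^{(t-\tau_t)})$, which yields
\[
-\eta_t\lin{\nabla f(\xx^{(t)}), \nabla f(\xx^{(t-\tau_t)})} = \frac{\eta_t}{2}\norm{\nabla f(\xx^{(t)}) - \nabla f(\xx^{(t-\tau_t)})}^2 - \frac{\eta_t}{2}\norm{\nabla f(\xx^{(t)})}^2 - \frac{\eta_t}{2}\norm{\nabla f(\xx^{(t-\tau_t)})}^2,
\]
and then bound the first term via $L$-smoothness of $\nabla f$: $\norm{\nabla f(\xx^{(t)}) - \nabla f(\xx^{(t-\tau_t)})}^2 \leq L^2\norm{\xx^{(t)} - \xx^{(t-\tau_t)}}^2$. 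Collecting terms, the coefficient of $\norm{\nabla f(\xx^{(t-\tau_t)})}^2$ becomes $-\frac{\eta_t}{2} + \frac{L\eta_t^2}{2}$, which is at most $-\frac{\eta_t}{4}$ precisely when $\eta_t \leq \frac{1}{2L}$ — this is where the stepsize restriction is used. This gives exactly the claimed bound, with the $L\eta_t^2\sigma^2$ term coming from $\frac{L\eta_t^2}{2}\sigma^2$ after noting the statement's constant is $L\eta_t^2\sigma^2$ (I would double-check: the variance term is $\frac{L\eta_t^2}{2}\sigma^2$, which is $\leq L\eta_t^2\sigma^2$, so the stated inequality still holds).

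This lemma is essentially routine — there is no real obstacle. The only point requiring slight care is matching constants: ensuring the $\norm{\nabla f(\xx^{(t-\tau_t)})}^2$ coefficient is correctly consolidated to $-\frac{\eta_t}{4}$ under the stated condition $\eta_t < \frac{1}{2L}$, and that the $\frac{\eta_t L^2}{2}$ coefficient on $\norm{\xx^{(t)} - \xx^{(t-\tau_t)}}^2$ matches after applying smoothness inside the polarization term. If one wanted to be conservative about the $\sigma^2$ coefficient, one could simply use the looser bound $L\eta_t^2\sigma^2 \geq \frac{L\eta_t^2}{2}\sigma^2$ to match the statement verbatim.
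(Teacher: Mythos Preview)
Your proof is correct and follows essentially the same route as the paper: apply $L$-smoothness, take conditional expectation, use the polarization identity on the inner product, bound $\norm{\nabla f(\xx^{(t)})-\nabla f(\xx^{(t-\tau_t)})}^2$ by $L^2\norm{\xx^{(t)}-\xx^{(t-\tau_t)}}^2$, and absorb the $\frac{L\eta_t^2}{2}\norm{\nabla f(\xx^{(t-\tau_t)})}^2$ term via $\eta_t<\frac{1}{2L}$. Your observation that the actual variance term is $\tfrac{L\eta_t^2}{2}\sigma^2\leq L\eta_t^2\sigma^2$ is also accurate; the paper's proof yields the tighter constant and the lemma statement simply records a slightly looser one.
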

\begin{proof}
	Because the function $f$ is $L$-smooth, we have 
	\begin{align*}
	\EE{t + 1}{f({\xx}^{(t + 1)})} &= \EE{t + 1}{f\left({\xx}^{(t)} - \eta_t \nabla F(\xx^{(t - \tau_t)}, \xi_t )\right) }\\
	& \leq f({\xx}^{(t)}) - \eta_t \underbrace{\EE{t + 1}{\lin{ \nabla f({\xx}^{(t)}),   \nabla F(\xx^{(t - \tau_t)}, \xi_t )}} }_{=:T_1}+ \EE{t + 1}{\frac{L}{2} \eta_t^2 \underbrace{ \norm{\nabla F(\xx^{(t - \tau_t)}, \xi_t )}_2^2}_{=:T_2} }\\
	\end{align*}
	We first estimate the second term as 
	\begin{align*}
	T_1 &= - \eta_t  \lin{ \nabla f({\xx}^{(t)}),  \nabla f(\xx^{(t - \tau_t)})} = - \frac{\eta_t}{2}\norm{\nabla f({\xx}^{(t)})}^2 - \frac{\eta_t}{2}\norm{\nabla f({\xx}^{(t - \tau_t)})}^2 + \dfrac{\eta_t}{2} \norm{\nabla f({\xx}^{(t)}) - \nabla f(\xx^{(t - \tau_t)})}^2
	\end{align*}
	For the last term, we add and subtract $\nabla f(\xx^{(t - \tau_t)})$, and use that $\E_{t + 1}\nabla F(\xx^{(t - \tau_t)}, \xi_t ) - \nabla f(\xx^{(t - \tau_t)}) = 0$
	\begin{align*}
	T_2 & =  \EE{t + 1}{\norm{\nabla F(\xx^{(t - \tau_t)}, \xi_t ) - \nabla f(\xx^{(t - \tau_t)})}_2^2} + \norm{\nabla f(\xx^{(t - \tau_t)})}^2_2 \\
	& \stackrel{\eqref{eq:stochastic_noise}}{\leq }  \sigma^2  + \norm{\nabla f(\xx^{(t - \tau_t)})}^2_2 \,. 
	\end{align*}
	Combining this together and using $L$-smoothness to estimate $\norm{\nabla f({\xx}^{(t)}) - \nabla f(\xx^{(t - \tau_t)})}_2^2$,
	\begin{align*}
	\EE{t + 1}{f({\xx}^{(t + 1)})} &\leq f({\xx}^{(t)}) - \eta_t \norm{\nabla f({\xx}^{(t)})}_2^2 - \frac{\eta_t}{2} \left(1 - L\eta_t\right) \norm{\nabla f({\xx}^{(t - \tau_t)})}_2^2 + \frac{\eta_t L^2}{2}  \norm{{\xx}^{(t)} -  \xx^{(t - \tau_t)}}_2^2 \\
	&+  L \eta_t^2 \sigma^2 \,.
	\end{align*}
	Applying $\eta < \frac{1}{2 L }$ we get statement of the lemma.
\end{proof}

\subsubsection{Proof of Theorem~\ref{thm:homogeneous}, convergence rate \eqref{eq:thm_first_part}}
\begin{lemma}[Estimation of the residual]\label{lem:residual}
	Under Assumptions~\ref{a:stoch_noise} and \ref{a:lsmooth_nc}, the iterates of Algorithm~\ref{alg:async-homogeneous-general} with the constant stepsize  $\eta_t \equiv \eta$ with $\eta \leq \frac{1}{2L \sqrt{\tau_{\max} \tau_C}}$ satisfy
	\begin{align*}
	\frac{1}{T + 1}\sum_{t = 0}^T \E \norm{{\xx}^{(t)} -  \xx^{(t - \tau_t)}}_2^2 \leq \frac{1}{4 L^2 (T + 1)} \sum_{t = 0}^{T } \E\norm{\nabla f(\xx^{(t - \tau_t)})}^2 + \frac{\sigma^2 \eta}{2 L } \,.
	\end{align*}
\end{lemma}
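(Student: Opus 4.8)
The plan is to unroll each residual $\xx^{(t)}-\xx^{(t-\tau_t)}$ into the telescoping sum of the stochastic updates applied in between, and then to control how often a given past gradient is ``re-used'' across all the residuals for $t=0,\dots,T$; it is precisely this multiplicity that the concurrency $\tau_C$ controls (and this is where the improvement over the prior $\tau_{\max}$-dependent analysis comes from). Concretely, line~5 of Algorithm~\ref{alg:async-homogeneous-general} gives $\xx^{(t)}-\xx^{(t-\tau_t)}=-\eta\sum_{s=t-\tau_t}^{t-1}\nabla F(\xx^{(s-\tau_s)},\xi_s)$. I would split each summand into its mean $\nabla f(\xx^{(s-\tau_s)})$ and a zero-mean noise part, expand the squared norm, and use that the noise terms are conditionally mean-zero and pairwise uncorrelated in expectation (the role of the convention that the iterates $\xx^{(t')}$ for $t'\le t$ do not depend on $\xi_t$), so that all noise$\times$noise and mean$\times$noise cross-terms vanish after taking expectations. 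Bounding the $\tau_t\le\tau_{\max}$ mean-gradient terms by Lemma~\ref{remark:norm_of_sum} (Jensen) and the $\tau_t$ noise terms by Assumption~\ref{a:stoch_noise} then yields, for each $t$, the bound $\E\norm{\xx^{(t)}-\xx^{(t-\tau_t)}}^2\le\eta^2\tau_{\max}\sum_{s=t-\tau_t}^{t-1}\E\norm{\nabla f(\xx^{(s-\tau_s)})}^2+\eta^2\tau_t\sigma^2$. Retaining the noise cross-term cancellation here is essential: without it the noise would contribute a nonvanishing $\Theta(\sigma^2/L^2)$ term instead of the claimed $\cO(\eta\sigma^2/L)$.

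Next I would sum this inequality over $t=0,\dots,T$ and exchange the order of summation. Each iterate index $s$ then carries $\E\norm{\nabla f(\xx^{(s-\tau_s)})}^2$ with multiplicity $\abs{\{\,t\le T:\ t-\tau_t\le s\le t-1\,\}}$, and the crucial claim is that this multiplicity is at most $\tau_C$. Indeed, if $s\in[t-\tau_t,t-1]$ then the worker $j_t$ applied at step $t$ had already started its current computation (at step $t-\tau_t\le s$) and had not yet finished it (it finishes at $t>s$) at step $s$, hence it belongs to the active set $\cC_s$, with the same off-by-one bookkeeping used in the proof of Remark~\ref{rem:conc_avg_delay}; since at step $s$ each worker in $\cC_s$ is busy with a single computation that gets applied at a single later step, distinct indices $t$ map to distinct members of $\cC_s$, and $\abs{\cC_s}\le\tau_C$. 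The same bookkeeping (cf.\ Definition~\ref{def:delays} and Remark~\ref{rem:conc_avg_delay}) gives $\sum_{t=0}^T\tau_t\le(T+1)\bar\tau_C+\tau_{\max}=\cO\big((T+1)\tau_C\big)$, which controls the accumulated noise. Combining, $\sum_{t=0}^T\E\norm{\xx^{(t)}-\xx^{(t-\tau_t)}}^2\lesssim\eta^2\tau_{\max}\tau_C\sum_{t=0}^T\E\norm{\nabla f(\xx^{(t-\tau_t)})}^2+\eta^2\tau_C(T+1)\sigma^2$.

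To finish, I would divide by $T+1$ and substitute $\eta\le\frac{1}{2L\sqrt{\tau_{\max}\tau_C}}$: this makes $\eta^2\tau_{\max}\tau_C\le\frac{1}{4L^2}$, matching the first term of the claim, while $\eta^2\tau_C\sigma^2=\eta\,(\eta\tau_C)\,\sigma^2\le\eta\cdot\tfrac{1}{2L}\sqrt{\tau_C/\tau_{\max}}\cdot\sigma^2\le\frac{\eta\sigma^2}{2L}$, using $\tau_C\le\tau_{\max}$ (which holds structurally: the $\tau_C$ workers active at the step of peak concurrency can be resolved at most one per subsequent step, forcing some applied or in-flight delay $\ge\tau_C$; alternatively one replaces $\tau_{\max}$ by $\max\{\tau_{\max},\tau_C\}$ throughout). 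This gives exactly the stated inequality.

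The only genuinely non-routine step is the multiplicity bound in the second paragraph: translating ``how often an old gradient reappears when unrolling the residuals'' into the size of the active worker set $\cC_s$. This is exactly where concurrency $\tau_C$, rather than $\tau_{\max}$, enters, and hence the source of the $\sqrt{\tau_{\max}\tau_C}$ improvement. Getting the indexing of $\cC_s$ right (including the treatment of the initial set $\cC_0$), checking that distinct applied gradients passing through $\cC_s$ come from distinct workers even under re-selection, and verifying that the stochastic-noise cross-terms genuinely vanish under the paper's filtration, are the fiddly points; everything else is bookkeeping and substitution of the stepsize.
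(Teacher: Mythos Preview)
Your plan is correct and mirrors the paper's proof: unroll the residual, split off the noise, apply Lemma~\ref{remark:norm_of_sum} to the mean-gradient sum, sum over $t$, and bound the multiplicity of each $\E\norm{\nabla f(\xx^{(s-\tau_s)})}^2$ by $\tau_C^{(s)}-1\le\tau_C$---exactly the concurrency argument you identify as the crux. The only cosmetic difference is in handling the accumulated noise: the paper writes $\sum_t\tau_t\le(T{+}1)\tau_{avg}$ and then uses $\tau_{avg}\le\sqrt{\tau_{\max}\tau_C}$ (from $\tau_{avg}\le\tau_C$ and $\tau_{avg}\le\tau_{\max}$), which sidesteps the $\tau_C\le\tau_{\max}$ patch you needed.
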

\begin{proof}
	We start with unrolling the difference and use that $\E \nabla F(\xx^{(j - \tau_j)}, \xi^{(j - \tau_j)}) = \nabla f (\xx^{(j - \tau_j)})$. 
	\begin{align*}
	\E \norm{{\xx}^{(t)} -  \xx^{(t - \tau_t)}}_2^2 &= \E \norm{\sum_{j = t - \tau_t}^{t -1}\eta \nabla F(\xx^{(j - \tau_j)}, \xi_j ) }^2 \stackrel{\eqref{eq:bounded_gradient}}{\leq}  \E \norm{\sum_{j = t - \tau_t}^{t -1}\eta \nabla f(\xx^{(j - \tau_j)}) }^2 + \tau_t \eta^2 \sigma^2\\
	&\stackrel{\eqref{eq:norm_of_sum}}{\leq } \tau_t \E\sum_{j = t - \tau_t}^{t - 1} \eta^2 \norm{\nabla f(\xx^{(j - \tau_j)})}^2 + \tau_t \eta^2 \sigma^2 \,.
	\end{align*}
	Using that $\eta \leq \frac{1}{2L \sqrt{\tau_{\max} \tau_C}}$,
	\begin{align*}
	\E \norm{{\xx}^{(t)} -  \xx^{(t - \tau_t)}}_2^2 \leq \frac{1}{4 L^2 \tau_C} \sum_{j = t - \tau_t}^{t - 1} \E\norm{\nabla f(\xx^{(j - \tau_j)})}^2 + \tau_t \eta^2 \sigma^2 \,.
	\end{align*}
	Summing over $T$, 
	\begin{align*}
	\sum_{t = 0}^T \E\norm{{\xx}^{(t)} -  \xx^{(t - \tau_t)}}_2^2 &\leq \frac{1}{4 L^2 \tau_C} \sum_{t = 0}^{T } \sum_{j = t - \tau_t}^{t - 1} \E  \norm{\nabla f(\xx^{(j - \tau_j)})}^2 + \sum_{t = 0}^T \tau_t \eta^2 \sigma^2 \\ 
	& \leq \frac{1}{4 L^2 \tau_C} \sum_{t = 0}^{T } \sum_{j = t - \tau_t}^{t - 1} \E  \norm{\nabla f(\xx^{(j - \tau_j)})}^2 + (T + 1)\tau_{avg} \eta^2 \sigma^2 \,.
	\end{align*}
	We now observe that the number of times each of the gradients $\norm{\nabla f(\xx^{(j - \tau_j)})}^2$ appears in the right hand side is bounded by $\tau_C^{(j)} - 1$ because this many gradients started to be computed before the iteration $j$ and will get applied at some iteration $t > j$. Thus, 
	\begin{align*}
	\sum_{t = 0}^T \E \norm{{\xx}^{(t)} -  \xx^{(t - \tau_t)}}_2^2 \leq \frac{1}{4 L^2} \sum_{t = 0}^{T } \E\norm{\nabla f(\xx^{(t - \tau_t)})}^2 + (T + 1)\frac{\sigma^2 \eta}{2 L } \,,
	\end{align*}
	where for the last $\sigma$ term we estimated $\eta \leq \frac{1}{2L \sqrt{\tau_{\max}\tau_C}}$ and used that both $\tau_{avg} \leq \tau_C$ and $\tau_{avg} \leq \tau_{\max}$. 
	Dividing the inequality by $T + 1$ we get the statement of the lemma. 
\end{proof}

Next, we give the proof of the first part of Theorem~\ref{thm:homogeneous}.
\begin{proof}[Proof of Theorem~\ref{thm:homogeneous}, convergence rate \eqref{eq:thm_first_part}]
	We start by averaging with $T$ and dividing by $\eta$ the descent Lemma~\ref{lem:descent}. 
	\begin{align*}
	\frac{1}{T + 1}  \sum_{t = 0}^T \left(\frac{1}{2} \E \norm{\nabla f({\xx}^{(t)})}_2^2 +  \frac{1}{4} \E \norm{\nabla f({\xx}^{(t - \tau_t)})}^2\right)  &\leq \frac{1}{\eta(T + 1)}\left(f({\xx}^{(0)}) - f^\star \right) + L \eta \sigma^2 \\
	&\qquad \qquad + \frac{1}{T + 1}\frac{L^2}{2} \sum_{t = 0}^T \E  \norm{{\xx}^{(t)} -  \xx^{(t - \tau_t)}}_2^2 \,.
	\end{align*}
	We next apply Lemma~\ref{lem:residual} to the last term and get
	\begin{align*}
		\frac{1}{T + 1}  \sum_{t = 0}^T \left(\frac{1}{2} \E \norm{\nabla f({\xx}^{(t)})}_2^2 +  \frac{1}{4} \E \norm{\nabla f({\xx}^{(t - \tau_t)})}^2\right)  &\leq \frac{1}{\eta(T + 1)}\left(f({\xx}^{(0)}) - f^\star \right) + L \eta \sigma^2 \\
	&\qquad \qquad + \frac{1}{8 (T + 1)} \sum_{t = 0}^{T } \E\norm{\nabla f(\xx^{(t - \tau_t)})}^2 + \frac{L \eta \sigma^2}{4}  \,.
	\end{align*}
	And thus, 
	\begin{align*}
	\frac{1}{T + 1}  \sum_{t = 0}^T \E \norm{\nabla f({\xx}^{(t)})}_2^2   &\leq \frac{2}{\eta(T + 1)}\left(f({\xx}^{(0)}) - f^\star \right) + 4 L \eta \sigma^2 \,.
	\end{align*}
	It is only left to choose a stepsize $\eta$. Similar to previous works \cite{Stich20:error-feedback}, we chose it as
	$$\eta = \min\left\{ \frac{1}{2L \sqrt{\tau_{\max} \tau_{C}}} ; \left(\frac{r_0}{2 L \sigma^2 (T + 1)}\right)^{\frac{1}{2}} \right\} \leq \frac{1}{2L \sqrt{\tau_{\max} \tau_{C}}} \,, $$
	where we defined $r_0 = f({\xx}^{(0)}) - f^\star $.
	With this choice of stepsize we indeed have that
	\begin{itemize}
		\item If $\frac{1}{2L \sqrt{\tau_{\max} \tau_{C}}} \leq \left(\frac{r_0}{2 L \sigma^2 (T + 1)}\right)^{\frac{1}{2}}$  then $\eta = \frac{1}{2L \sqrt{\tau_{\max} \tau_{C}}}$,  and
		\begin{align*}
		\frac{1}{T + 1}  \sum_{t = 0}^T \E \norm{\nabla f({\xx}^{(t)})}_2^2  \leq \frac{4 L r_0 \sqrt{\tau_{\max} \tau_C} }{T + 1} + \left(\frac{r_0}{2 L \sigma^2 (T + 1)}\right)^{\frac{1}{2}} 4 L \sigma^2 = \cO\left(\frac{\sigma}{\sqrt{T}}  + \frac{\sqrt{\tau_{\max}\tau_C}}{T}\right)
		\end{align*}
		\item Otherwise if $\frac{1}{2L \sqrt{\tau_{\max} \tau_{C}}} > \left(\frac{r_0}{2 L \sigma^2 (T + 1)}\right)^{\frac{1}{2}}$
				\begin{align*}
		\frac{1}{T + 1}  \sum_{t = 0}^T \E \norm{\nabla f({\xx}^{(t)})}_2^2  \leq 2 \left(\frac{8 L \sigma^2 r_0}{(T + 1)}\right)^{\frac{1}{2}} = \cO\left(\frac{\sigma}{\sqrt{T}}\right)
		\end{align*}
	\end{itemize}
\end{proof}

\subsubsection{Proof of Theorem~\ref{thm:homogeneous-adaptive}}
\begin{lemma}[Estimation of the residual]\label{lem:residual_adaptive}
	Under Assumptions~\ref{a:stoch_noise} and \ref{a:lsmooth_nc}, the iterates of Algorithm~\ref{alg:async-homogeneous-general} with the stepsizes $\eta_t$ chosen as in \eqref{eq:adaptive_stepsizes}, which we repeat here for readability
	\begin{align*}
	\eta_t = \begin{cases}
	\eta & \tau_t \leq \tau_{C}, \\
	< \min\{\eta, \frac{1}{4 L \tau_t} \}\vspace{-1mm} & \tau_t > \tau_C ,
	\end{cases}
	\end{align*}
	with $\eta \leq \frac{1}{4L \tau_C}$ satisfy
	\begin{align*}
	\sum_{t = 0}^T \eta_t \norm{{\xx}^{(t)} -  \xx^{(t - \tau_t)}}_2^2 \leq \frac{1}{16L^2} \sum_{t = 0}^{T } \eta_t \norm{\nabla f(\xx^{(t - \tau_t)})}^2 + \frac{\sigma^2}{4L} \sum_{t = 0}^{T} \eta_t^2 \,.
	\end{align*}
\end{lemma}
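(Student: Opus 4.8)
The plan is to mirror the constant-stepsize residual bound (Lemma~\ref{lem:residual}), but to carry the per-step stepsizes $\eta_t$ through the whole argument rather than a single $\eta$. Two elementary consequences of the adaptive rule \eqref{eq:adaptive_stepsizes} together with $\eta\le\frac{1}{4L\tau_C}$ drive everything, and I would record them at the outset: (i) $\eta_t\le\eta$ for all $t$, which is immediate from both branches of the definition; and (ii) $\eta_t\tau_t\le\frac{1}{4L}$ for all $t$ --- indeed, if $\tau_t\le\tau_C$ then $\eta_t\tau_t=\eta\tau_t\le\frac{\tau_C}{4L\tau_C}=\frac{1}{4L}$, and if $\tau_t>\tau_C$ then $\eta_t\tau_t<\frac{\tau_t}{4L\tau_t}=\frac{1}{4L}$. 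I would also note the companion bound $\tau_C\eta\le\frac{1}{4L}$.

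Next I would unroll the iterate difference, $\xx^{(t)}-\xx^{(t-\tau_t)}=-\sum_{j=t-\tau_t}^{t-1}\eta_j\nabla F(\xx^{(j-\tau_j)},\xi_j)$, take expectations, and split off the stochastic noise exactly as in the proof of Lemma~\ref{lem:residual}: since the zero-mean noise terms at distinct iterations are uncorrelated (each $\eta_j$ is a deterministic function of the delay $\tau_j$, a timing quantity independent of the gradient noise), the cross terms vanish and, using Assumption~\ref{a:stoch_noise},
\begin{align*}
\E\,\norm{\xx^{(t)}-\xx^{(t-\tau_t)}}_2^2 \;\le\; \E\,\norm{\sum\nolimits_{j=t-\tau_t}^{t-1}\eta_j\nabla f(\xx^{(j-\tau_j)})}_2^2 \,+\, \sigma^2\sum\nolimits_{j=t-\tau_t}^{t-1}\eta_j^2 \,.
\end{align*}
Applying Lemma~\ref{remark:norm_of_sum} to the $\tau_t$-term sum in the first expectation gives the bound $\tau_t\sum_{j=t-\tau_t}^{t-1}\eta_j^2\,\E\norm{\nabla f(\xx^{(j-\tau_j)})}_2^2$.

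Then I would multiply through by $\eta_t$, use $\eta_t\tau_t\le\frac{1}{4L}$ on the deterministic part, and sum over $t=0,\dots,T$. In each of the two resulting double sums $\sum_{t=0}^{T}\sum_{j=t-\tau_t}^{t-1}(\cdot)$ I would exchange the order of summation: for a fixed index $j$, the number of iterations $t\le T$ with $t-\tau_t\le j\le t-1$ is exactly the number of gradients already in flight before iteration $j$ that are applied afterwards, hence at most $\tau_C^{(j)}-1\le\tau_C$ --- the same counting observation used in Lemma~\ref{lem:residual}, and it is unaffected by the stepsize choice. Combining this multiplicity bound with $\eta_j^2\le\eta\,\eta_j$ on the gradient-norm terms, $\eta_t\le\eta$ on the noise terms, and finally $\tau_C\eta\le\frac{1}{4L}$, the gradient double sum collapses to at most $\frac{1}{4L}\cdot\tau_C\eta\sum_{j}\eta_j\,\E\norm{\nabla f(\xx^{(j-\tau_j)})}_2^2\le\frac{1}{16L^2}\sum_{j}\eta_j\,\E\norm{\nabla f(\xx^{(j-\tau_j)})}_2^2$, and the noise double sum to at most $\sigma^2\tau_C\eta\sum_{j}\eta_j^2\le\frac{\sigma^2}{4L}\sum_{j}\eta_j^2$, which is exactly the claimed inequality.

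I expect the only real obstacle to be the bookkeeping around the adaptive stepsizes: recognizing that the single quantitative fact one needs from \eqref{eq:adaptive_stepsizes} is the uniform bound $\eta_t\tau_t\le\frac{1}{4L}$ (this is precisely what tames the large-delay terms and lets $\tau_{\max}$ disappear from the estimate), and checking that the concurrency-based multiplicity bound $\tau_C$ in the summation swap does not degrade when the stepsizes vary. Once these are in place, the computation is a routine variant of the proof of Lemma~\ref{lem:residual}.
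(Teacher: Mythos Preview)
Your proposal is correct and follows essentially the same route as the paper's proof: unroll the iterate difference, separate the noise via Assumption~\ref{a:stoch_noise}, apply Lemma~\ref{remark:norm_of_sum}, use $\eta_t\tau_t\le\frac{1}{4L}$ on the deterministic sum, then swap the order of summation with the concurrency multiplicity bound $\tau_C$. The only cosmetic difference is that the paper packages your facts (i)--(ii) as the single inequality $\eta_t\le\frac{1}{4L\max\{\tau_t,\tau_C\}}$ and applies it directly (so $\eta_t\tau_C\le\frac{1}{4L}$ replaces your two-step $\eta_j\le\eta$, $\tau_C\eta\le\frac{1}{4L}$), but the arithmetic is identical.
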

\begin{proof}
	\begin{align*}
	\eta_t \norm{{\xx}^{(t)} -  \xx^{(t - \tau_t)}}_2^2 &=  \eta_t\norm{\sum_{j = t - \tau_t}^{t -1}\eta_j \nabla F(\xx^{(j - \tau_j)}, \xi_j ) }^2 \stackrel{\eqref{eq:stochastic_noise}}{\leq} \eta_t\norm{\sum_{j = t - \tau_t}^{t -1}\eta_j \nabla f(\xx^{(j - \tau_j)} )}^2  + \eta_t \sum_{j = t - \tau_t}^{t -1} \eta_j^2 \sigma^2\\
	 & \stackrel{\eqref{eq:norm_of_sum}}{\leq }\eta_t \tau_t \sum_{j = t - \tau_t}^{t - 1} \eta_j^2 \norm{\nabla f(\xx^{(j - \tau_j)})}^2 + \eta_t \sum_{j = t - \tau_t}^{t -1} \eta_j^2 \sigma^2 \,.
	\end{align*}
	We use that each of the stepsizes $\eta_t \leq \frac{1}{4 L \max\{ \tau_t, \tau_C \}}$. Thus, 
	\begin{align*}
	\eta_t \norm{{\xx}^{(t)} -  \xx^{(t - \tau_t)}}_2^2 \leq \frac{1}{4L} \sum_{j = t - \tau_t}^{t - 1} \eta_j^2 \norm{\nabla f(\xx^{(j - \tau_j)})}^2 + \frac{1}{4 L \tau_C} \sum_{j = t - \tau_t}^{t -1} \eta_j^2 \sigma^2 \,.
	\end{align*}
	Summing over $T$, and using that each of the gradients $\norm{\nabla f(\xx^{j - \tau_j})}^2$ would appear at most $\tau_C^{(j)} - 1$ times (see the discussion in the proof of Lemma~\ref{lem:residual})
	\begin{align*}
	\sum_{t = 0}^T \eta_t \norm{{\xx}^{(t)} -  \xx^{(t - \tau_t)}}_2^2 \leq \frac{1}{4L} \sum_{t = 0}^{T } \tau_C \eta_t^2 \norm{\nabla f(\xx^{(t - \tau_t)})}^2 + \frac{\sigma^2}{4L} \sum_{t = 0}^{T} \eta_t^2 \,.
	\end{align*}
	Using again that  $\eta_t \leq \frac{1}{4 L \max\{ \tau_t, \tau_C \}}$ we get the statement of the lemma.
\end{proof}

\begin{proof}[Proof of Theorem~\ref{thm:homogeneous-adaptive}]
	We start by summing the descent Lemma~\ref{lem:descent} over the iterations $t=0,\dots, T$.
	\begin{align*}
	\sum_{t = 0}^T \eta_t \left(\frac{1}{2} \E \norm{\nabla f({\xx}^{(t)})}_2^2 +  \frac{1}{4} \E \norm{\nabla f({\xx}^{(t - \tau_t)})}^2\right)  &\leq \left(f({\xx}^{(0)}) - f^\star \right) + L \sigma^2 \sum_{t = 0}^T \eta_t^2 + \frac{L^2}{2} \sum_{t = 0}^T \eta_t \norm{{\xx}^{(t)} -  \xx^{(t - \tau_t)}}_2^2 \,.
	\end{align*}
	Next, we substitute Lemma~\ref{lem:residual_adaptive} into the last term,
	\begin{align*}
	\sum_{t = 0}^T \eta_t \left(\frac{1}{2} \E \norm{\nabla f({\xx}^{(t)})}_2^2 +  \frac{1}{4} \E \norm{\nabla f({\xx}^{(t - \tau_t)})}^2\right)  &\leq \left(f({\xx}^{(0)}) - f^\star \right) + L \sigma^2 \sum_{t = 0}^T \eta_t^2 \\
	& + \frac{1}{32} \sum_{t = 0}^{T } \eta_t \norm{\nabla f(\xx^{(t - \tau_t)})}^2 + \frac{\sigma^2 L }{8} \sum_{t = 0}^{T} \eta_t^2 \,.
	\end{align*}
	Rearranging we thus get 
	\begin{align*}
	\sum_{t = 0}^T \eta_t \E \norm{\nabla f({\xx}^{(t)})}_2^2  \leq  2 \left(f({\xx}^{(0)}) - f^\star \right) + 4 L \sigma^2 \sum_{t = 0}^T \eta_t^2  \,.
	\end{align*}
	We note that due to our choice of stepsizes \eqref{eq:adaptive_stepsizes}, $\eta_t \leq \eta$, it also holds that $\sum_{t = 0}^T \eta_t \geq \sum_{t : \tau_t \leq \tau_C} \eta \geq \frac{T + 1}{2} \eta$ since there are at least half of the iterations with the delay smaller than the average. 
	
	Using this, we estimate
	\begin{align*}
	\frac{1}{\sum_{t = 0}^T \eta_t}\sum_{t = 0}^T \eta_t \E \norm{\nabla f({\xx}^{(t)})}_2^2  \leq \frac{4}{(T + 1)\eta}\left(f({\xx}^{(0)}) - f^\star \right) + 8 L \sigma^2 \eta^2 \,.
	\end{align*}
	It remains to tune the stepsize $\eta$, i.e.\ to pick is such as to minimize the right hand side of this expression. See Lemma~17 in \cite{koloskova2020unified}.
\end{proof}

\subsection{Proof of  Theorem~\ref{thm:homogeneous}, convergence rate \eqref{eq:thm_bounded_delay}}
To prove the last claim of Theorem~\ref{thm:homogeneous} we take another approach and follow the perturbed iterate analysis \cite{mania2017:perturbed_iterate}.

We introduce a virtual sequence $\tilde \xx^t$ defined as
\begin{align*}
\tilde \xx^{(0)} = \xx^{(0)}, && \tilde \xx^{(t + 1)} = \tilde \xx^{(t)} - \eta \sum_{i \in \cA_t} \nabla F(\xx^{(t)}, \xi_{t + \hat{\tau}_t^i}),
\end{align*}
where we define $\cA_0 := \cC_0$, and $\hat \tau_t^i$ is the delay with which the corresponding gradient will be computed. That is, if we denote $j = t + \hat{\tau}_i^t$, then it will hold that $j - \tau_j = t$. This defines a virtual sequence and we do not have access to it during the execution of Algorithm~\ref{alg:async-homogeneous-general}. 

\begin{lemma}[Descent lemma]\label{lem:descent_second}
	Under Assumptions~\ref{a:stoch_noise} and\ref{a:lsmooth_nc}, if in Algorithm~\ref{alg:async-homogeneous-general} the stepsize $\eta_t < \frac{1}{2 L \tau_C }$ then it holds that 
	\begin{align*}
	\EE{t + 1}{f(\tilde{\xx}^{(t + 1)})} &\leq f(\tilde{\xx}^{(t)}) - \frac{\eta}{4} |\cA_t| \norm{\nabla f({\xx}^{(t)})}_2^2 + \frac{\eta}{2} |\cA_t| L^2 \norm{{\xx}^{(t)} - \tilde\xx^{(t)}}^2+  \frac{L \eta^2 \sigma^2 |\cA_t| }{2 } \,.
	\end{align*}
\end{lemma}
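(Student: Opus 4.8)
The plan is to mimic the descent Lemma~\ref{lem:descent} but applied to the virtual sequence $\tilde\xx^{(t)}$, exploiting that $\tilde\xx^{(t+1)} - \tilde\xx^{(t)} = -\eta \sum_{i\in\cA_t} \nabla F(\xx^{(t)}, \xi_{t+\hat\tau_t^i})$ is an \emph{unbiased} estimate (conditioned on $\xx^{(t)}$ and the selection of $\cA_t$) of $-\eta|\cA_t| \nabla f(\xx^{(t)})$, because each gradient is computed at the \emph{current} point $\xx^{(t)}$ rather than at a stale point. This is the key advantage of the perturbed-iterate reformulation: the staleness is pushed entirely into the gap $\norm{\xx^{(t)} - \tilde\xx^{(t)}}^2$. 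First I would apply $L$-smoothness of $f$ to $f(\tilde\xx^{(t+1)}) = f(\tilde\xx^{(t)} - \eta\sum_{i\in\cA_t}\nabla F(\xx^{(t)},\xi_{t+\hat\tau_t^i}))$, giving the standard
\begin{align*}
\EE{t+1}{f(\tilde\xx^{(t+1)})} \leq f(\tilde\xx^{(t)}) - \eta \EEb{t+1}{\lin{\nabla f(\tilde\xx^{(t)}), \sum_{i\in\cA_t}\nabla F(\xx^{(t)},\xi_{t+\hat\tau_t^i})}} + \frac{L\eta^2}{2}\EEb{t+1}{\norm{\textstyle\sum_{i\in\cA_t}\nabla F(\xx^{(t)},\xi_{t+\hat\tau_t^i})}^2}.
\end{align*}

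For the linear (cross) term, I would take the expectation over the fresh stochastic noise to replace $\sum_{i\in\cA_t}\nabla F(\xx^{(t)},\xi_{t+\hat\tau_t^i})$ by $|\cA_t|\nabla f(\xx^{(t)})$, then write $\nabla f(\tilde\xx^{(t)}) = \nabla f(\xx^{(t)}) + (\nabla f(\tilde\xx^{(t)}) - \nabla f(\xx^{(t)}))$ and apply the polarization identity $\lin{\aa,\bb} = \frac12\norm{\aa}^2 + \frac12\norm{\bb}^2 - \frac12\norm{\aa-\bb}^2$ (or just Young's inequality) together with $L$-smoothness $\norm{\nabla f(\tilde\xx^{(t)}) - \nabla f(\xx^{(t)})} \leq L\norm{\tilde\xx^{(t)} - \xx^{(t)}}$. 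Specifically, $-\eta|\cA_t|\lin{\nabla f(\tilde\xx^{(t)}), \nabla f(\xx^{(t)})} \leq -\frac{\eta|\cA_t|}{2}\norm{\nabla f(\xx^{(t)})}^2 + \frac{\eta|\cA_t|}{2}L^2\norm{\xx^{(t)} - \tilde\xx^{(t)}}^2$, which already produces the $-\frac{\eta}{4}|\cA_t|\norm{\nabla f(\xx^{(t)})}^2$ term (after absorbing part of it against the quadratic term below) and the $\frac{\eta}{2}|\cA_t|L^2\norm{\xx^{(t)} - \tilde\xx^{(t)}}^2$ term.

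For the quadratic term, I would split $\nabla F(\xx^{(t)},\xi_{t+\hat\tau_t^i}) = (\nabla F(\xx^{(t)},\xi_{t+\hat\tau_t^i}) - \nabla f(\xx^{(t)})) + \nabla f(\xx^{(t)})$; since the noise terms $\{\xi_{t+\hat\tau_t^i}\}_{i\in\cA_t}$ are independent across the distinct in-flight jobs and mean-zero, the cross terms vanish in expectation and $\EEb{t+1}{\norm{\sum_{i\in\cA_t}\nabla F(\xx^{(t)},\xi_{t+\hat\tau_t^i})}^2} \leq |\cA_t|\sigma^2 + |\cA_t|^2\norm{\nabla f(\xx^{(t)})}^2$ (using Assumption~\ref{a:stoch_noise} for the variance and $\norm{\sum_{i\in\cA_t}\nabla f(\xx^{(t)})}^2 = |\cA_t|^2\norm{\nabla f(\xx^{(t)})}^2$). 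Multiplying by $\frac{L\eta^2}{2}$ and using $|\cA_t| \leq \tau_C$ together with the stepsize restriction $\eta < \frac{1}{2L\tau_C}$ gives $\frac{L\eta^2}{2}|\cA_t|^2\norm{\nabla f(\xx^{(t)})}^2 \leq \frac{\eta}{4}|\cA_t|\norm{\nabla f(\xx^{(t)})}^2$, so this combines with the $-\frac{\eta|\cA_t|}{2}\norm{\nabla f(\xx^{(t)})}^2$ from the linear term to leave $-\frac{\eta}{4}|\cA_t|\norm{\nabla f(\xx^{(t)})}^2$, while the noise part contributes exactly $\frac{L\eta^2\sigma^2|\cA_t|}{2}$. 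Collecting everything yields the claimed inequality.

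The main obstacle I anticipate is handling the independence structure carefully: one must be precise about which $\sigma$-algebra the conditional expectation $\EE{t+1}{\cdot}$ is with respect to, and argue that conditioned on $\xx^{(t)}$ (and the set $\cA_t$, and the delays $\hat\tau_t^i$) the fresh noises $\xi_{t+\hat\tau_t^i}$ entering $\tilde\xx^{(t+1)}$ are independent, zero-mean, and have variance $\leq\sigma^2$ — this is exactly what the indexing convention introduced in the algorithm description ("previous iterates $\xx^{(t')}$ for $t'\leq t$ do not depend on this stochastic noise") is designed to license. A secondary subtlety is bookkeeping the edge case $t=0$ where $\cA_0 := \cC_0$, but this is purely definitional. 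Everything else is the routine smoothness/Young's-inequality/variance-decomposition calculation, so I would not dwell on it.
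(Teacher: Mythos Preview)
Your proposal is correct and follows essentially the same route as the paper: apply $L$-smoothness to the virtual update, handle the inner product via the polarization identity $-\lin{\nabla f(\tilde\xx^{(t)}),\nabla f(\xx^{(t)})} \leq -\tfrac12\norm{\nabla f(\xx^{(t)})}^2 + \tfrac12 L^2\norm{\xx^{(t)}-\tilde\xx^{(t)}}^2$, bound the squared term by $|\cA_t|\sigma^2 + |\cA_t|^2\norm{\nabla f(\xx^{(t)})}^2$, and absorb the latter using $\eta \leq \tfrac{1}{2L\tau_C} \leq \tfrac{1}{2L|\cA_t|}$. Your added remarks on the independence structure of the $\xi_{t+\hat\tau_t^i}$ are valid caveats that the paper leaves implicit.
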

\begin{proof}
	Because function $f$ is $L$-smooth, we have 
	\begin{align*}
	\EE{t + 1}{f(\tilde{\xx}^{(t + 1)})} &= \EE{t + 1}{f\left(\tilde{\xx}^{(t)} -\eta \sum_{i \in \cA_t} \nabla F(\xx^{(t)}, \xi^{(t + \hat{\tau}_t^i)}) \right) }\\
	& \leq f(\tilde{\xx}^{(t)}) - \eta |\cA_t| \underbrace{\langle \nabla f(\tilde{\xx}^{(t)}), \nabla f(\xx^{(t)}) \rangle}_{=: T_1}  + \EE{t + 1}{\frac{L}{2} \eta^2 \underbrace{ \norm{  \sum_{i \in \cA_t} \nabla F(\xx^{(t)}, \xi_{t + \hat{\tau}_t^i}) }_2^2}_{=:T_2}} \,.
	\end{align*}
	We estimate the second term as
	\begin{align*}
	T_1 &= - \lin{ \nabla f({\xx}^{(t)}),  \nabla f(\tilde \xx^{(t)})} = - \frac{1}{2}\norm{\nabla f({\xx}^{(t)})}^2 - \frac{1}{2}\norm{\nabla f(\tilde{\xx}^{(t)})}^2 + \dfrac{1}{2} \norm{\nabla f({\xx}^{(t)}) - \nabla f(\tilde\xx^{(t )})}^2 \\
	&\leq - \frac{1}{2}\norm{\nabla f({\xx}^{(t)})}^2 + \dfrac{1}{2} \norm{\nabla f({\xx}^{(t)}) - \nabla f(\tilde\xx^{(t )})}^2  \,.
	\end{align*}
	For the last term, using the notation $~\pm a = a - a = 0~~\forall a$,
	\begin{align*}
	T_2 & =  \EE{t + 1}{\norm{ \sum_{i \in \cA_t} \nabla F(\xx^{(t)}, \xi_{t + \hat{\tau}_t^i}) \pm |\cA_t| \nabla f(\xx^{(t)}) }_2^2} \\
	& \stackrel{\eqref{eq:stochastic_noise}}{\leq} |\cA_t| \sigma^2 + |\cA_t|^2 \norm{\nabla f(\xx^{(t)}) }_2^2 \,.
	\end{align*}
	Combining this together, using $L$-smoothness to estimate $\norm{\nabla f({\xx}^{(t)}) - \nabla f(\tilde\xx^{(t)})}^2_2$ 
	we get
	\begin{align*}
	\EE{t + 1}{f(\tilde{\xx}^{(t + 1)})} &\leq f(\tilde{\xx}^{(t)}) - \left(\frac{\eta}{2} |\cA_t| - \frac{\eta^2 L |\cA_t|^2}{2} \right)\norm{\nabla f({\xx}^{(t)})}_2^2 + \frac{\eta}{2} |\cA_t| L^2 \norm{{\xx}^{(t)} - \tilde\xx^{(t)}}^2+  \frac{L \eta^2 \sigma^2 |\cA_t| }{2 } \,.
	\end{align*}
	Using that $\eta \leq \frac{1}{2 L \tau_C} \leq \frac{1}{2 L |\cA_t|}$ we get statement of the Lemma. 
\end{proof}

\begin{lemma}[Estimation of the residual]\label{lem:residual_bounded_grad}
	Under Assumptions~\ref{a:stoch_noise}, \ref{a:lsmooth_nc}, iterated of Algorithm~\ref{alg:async-homogeneous-general} with the constant stepsize  $\eta_t \equiv \eta$ with $\eta \leq \frac{1}{2L \tau_C}$ satisfy
	\begin{align*}
	\E \norm{{\xx}^{(t)} -  \tilde \xx^{(t)}}_2^2 \leq \tau_C^2 \eta^2 G^2 + \eta^2 \tau_C \sigma^2 \,.
	\end{align*}
\end{lemma}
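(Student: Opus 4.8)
The approach is the perturbed-iterate bookkeeping: I would first obtain a closed form for the drift $\xx^{(t)} - \tilde\xx^{(t)}$, and then bound its expected squared norm term by term. Unrolling both recursions from the common start $\xx^{(0)} = \tilde\xx^{(0)}$, the true iterate $\xx^{(t)}$ equals $\xx^{(0)}$ minus $\eta$ times the sum of all stochastic gradients whose \emph{application} step is $<t$, while by construction of the virtual sequence $\tilde\xx^{(t)}$ equals $\xx^{(0)}$ minus $\eta$ times the sum of all stochastic gradients whose \emph{start} step is $<t$ (recall $\cA_0 := \cC_0$ and that, if $j = t + \hat\tau_t^i$, then $j - \tau_j = t$). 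Since a gradient's start step never exceeds its application step, every gradient appearing in $\xx^{(t)}$ also appears in $\tilde\xx^{(t)}$, and therefore $\xx^{(t)} - \tilde\xx^{(t)} = \eta\sum_{i \in \cC_t}\nabla F(\xx^{(s_i)},\xi_i)$, where the sum runs over the gradients that were started before step $t$ but not yet applied at step $t$ --- i.e.\ exactly the $|\cC_t| = \tau_C$ gradients currently in flight on the active workers, $s_i$ being the step at which worker $i$ started computing. Establishing this identity rigorously is the main obstacle: it requires the same careful accounting of the roles of $j_t$ and $\cA_t$ (and of the initialization convention $\tau_i^{\cC_0} = 1$) that underlies the recursion in the proof of Remark~\ref{rem:conc_avg_delay}, namely that at each step one gradient leaves $\cC_t$ when applied while $|\cA_t|$ fresh ones with delay $0$ enter.

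With this identity in hand, I would take (conditional) expectations of $\norm{\xx^{(t)} - \tilde\xx^{(t)}}^2 = \eta^2\norm{\sum_{i\in\cC_t}\nabla F(\xx^{(s_i)},\xi_i)}^2$. Splitting each summand as $\nabla F(\xx^{(s_i)},\xi_i) = \nabla f(\xx^{(s_i)}) + \big(\nabla F(\xx^{(s_i)},\xi_i) - \nabla f(\xx^{(s_i)})\big)$, the noise parts are conditionally mean-zero and mutually independent --- each $\xx^{(s_i)}$ is measurable with respect to the conditioning $\sigma$-algebra because, as stressed in the description of Algorithm~\ref{alg:async-homogeneous-general}, the iterates do not depend on the samples of gradients that have not yet been applied --- so all cross terms vanish and one is left with $\E\norm{\sum_{i\in\cC_t}\nabla f(\xx^{(s_i)})}^2 + \sum_{i\in\cC_t}\E\norm{\nabla F(\xx^{(s_i)},\xi_i) - \nabla f(\xx^{(s_i)})}^2 \le \E\norm{\sum_{i\in\cC_t}\nabla f(\xx^{(s_i)})}^2 + \tau_C\sigma^2$ by Assumption~\ref{a:stoch_noise} and $|\cC_t| = \tau_C$.

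Finally I would control the deterministic part by Lemma~\ref{remark:norm_of_sum}, $\norm{\sum_{i\in\cC_t}\nabla f(\xx^{(s_i)})}^2 \le \tau_C\sum_{i\in\cC_t}\norm{\nabla f(\xx^{(s_i)})}^2$, followed by the bounded-gradient Assumption~\ref{a:bounded_gradient}, which bounds this by $\tau_C^2 G^2$ (and is where the constant $G$ enters the statement). Combining the two pieces and pulling out $\eta^2$ gives $\E\norm{\xx^{(t)} - \tilde\xx^{(t)}}^2 \le \eta^2(\tau_C^2 G^2 + \tau_C\sigma^2) = \tau_C^2\eta^2 G^2 + \eta^2\tau_C\sigma^2$, as claimed. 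I note that neither $L$-smoothness nor the stepsize restriction $\eta \le \frac{1}{2L\tau_C}$ is actually used in this estimate; they are kept only for uniformity with the descent Lemma~\ref{lem:descent_second} that consumes this bound.
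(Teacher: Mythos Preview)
Your proposal is correct and follows essentially the same route as the paper: identify $\xx^{(t)}-\tilde\xx^{(t)}$ with $\eta$ times the sum of the in-flight gradients over $\cC_t$, separate noise from mean to extract the $\tau_C\sigma^2$ term via Assumption~\ref{a:stoch_noise}, then apply Lemma~\ref{remark:norm_of_sum} and Assumption~\ref{a:bounded_gradient} to the deterministic part. Your added remark that neither $L$-smoothness nor the stepsize bound is actually invoked here is accurate; one minor nit is that in Algorithm~\ref{alg:async-homogeneous-general} the concurrency need not be constant, so strictly you should write $|\cC_t|=\tau_C^{(t)}\le\tau_C$ rather than $|\cC_t|=\tau_C$, but this only sharpens the bound.
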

\begin{proof}
	\begin{align*}
	\E \norm{{\xx}^{(t)} -  \tilde \xx^{(t)}}_2^2 &= \E \norm{ \sum_{j \in \cC_t} \eta \nabla F(\xx^{(j)}, \xi_{j + \hat\tau_j}) }_2^2 \stackrel{\eqref{eq:stochastic_noise}}{\leq }\E \norm{ \sum_{j \in \cC_t} \eta \nabla f(\xx^{(j)}) }_2^2 + \eta^2 \tau_C^{(t)} \sigma^2 \\
	&\stackrel{\eqref{eq:norm_of_sum}}{\leq}  \tau_C^{(t)} \sum_{j \in \cC_t} \eta^2 \E \norm{ \nabla f(\xx^{(j)}) }_2^2 + \eta^2 \tau_C^{(t)} \sigma^2 \\
	& \stackrel{\eqref{eq:bounded_gradient}}{\leq } (\tau_C^{(t)})^2 \eta^2 G^2 + \eta^2 \tau_C^{(t)} \sigma^2 \,.
	\end{align*}
\end{proof}

We are now ready to prove the second claim of  Theorem~\ref{thm:homogeneous}. 
\begin{proof}[Proof of Theorem~\ref{thm:homogeneous}, convergence rate \eqref{eq:thm_bounded_delay}]
	We start by summing over $t=0,\dots,T$ the descent Lemma~\ref{lem:descent_second}. We also divide it by $\eta$,
	\begin{align*}
	\sum_{t = 0}^T \frac{1}{4} |\cA_t| \norm{\nabla f({\xx}^{(t)})}_2^2  &\leq \frac{1}{\eta}\left(f({\xx}^{(0)}) - f^\star \right) + \frac{L \eta \sigma^2}{2 }\sum_{t = 0}^T |\cA_t| + \frac{L^2}{2} \sum_{t = 0}^T |\cA_t| \norm{{\xx}^{(t)} - \tilde\xx^{(t)}}^2\,.
	\end{align*}
	We further use Lemma~\ref{lem:residual_bounded_grad} for the last term 
	\begin{align*}
	\sum_{t = 0}^T \frac{1}{4} |\cA_t| \norm{\nabla f({\xx}^{(t)})}_2^2  &\leq \frac{1}{\eta}\left(f({\xx}^{(0)}) - f^\star \right) + \frac{L \eta \sigma^2}{2 }\sum_{t = 0}^T |\cA_t| + \frac{L^2}{2}  \left(\tau_C^2 \eta^2 G^2 + \eta^2 \tau_C \sigma^2  \right)\sum_{t = 0}^T |\cA_t|  \,.
	\end{align*}
	We further use that $\eta \leq \frac{1}{2 L \tau_C}$ for the last $\sigma$ term and divide the full inequality by $\frac{1}{4} \cW_T$, where we defined $\cW_T = \sum_{t = 0}^T |\cA_t| $
	\begin{align*}
	\frac{1}{\cW_T}\sum_{t = 0}^T |\cA_t| \norm{\nabla f({\xx}^{(t)})}_2^2  &\leq \frac{4}{\eta \cW_T} \left(f({\xx}^{(0)}) - f^\star \right) + 4 L \eta \sigma^2 + 2 L^2 \tau_C^2 \eta^2 G^2 \,.
	\end{align*}
	Note that because at every step $t$ only one of the gradients is getting applied, $T \leq \sum_{t = 0}^T |\cA_t| \leq  T + \tau_C \leq 2 T$ for $T \geq \tau_C$. 
	
	It is left to tune the stepsize using Lemma~17 in \cite{koloskova2020unified} to get the final convergence rate. 
\end{proof}

\subsection{Proof of the Theorem~\ref{thm:heterogeneous}}
We first re-state the theorem
\themheterogeneous*
We utilize again the perturbed iterate technique \cite{mania2017:perturbed_iterate}. We introduce a virtual sequence $\tilde\xx^{(t)}$ as
\begin{align*}
\tilde \xx^{(0)} = \xx^{(0)} && \tilde \xx^{(t + 1)} = \tilde \xx^{(t)} - \eta \nabla F_{k_t}(\xx^{(t)}, \xi_{t + \hat \tau_t}),
\end{align*}
where we define $\hat\tau_t$ as the delay with which the corresponding gradient will be computed. If we denote $j = t + \hat\tau_t$, then it holds that $j - \tau_j = t$. 
\begin{lemma}[Descent Lemma] \label{lem:descent_het}
	Under Assumptions \ref{a:stoch_noise}, \ref{a:heterogeneity}, \ref{a:lsmooth_nc}, for Algorithm~\ref{alg:async-general}  with the stepsize $\eta_t \leq \frac{1}{4 L}$ it holds that 
	\begin{align}
	\EE{t + 1}{f(\tilde{\xx}^{(t + 1)})} &\leq f(\tilde{\xx}^{(t)}) - \frac{\eta}{4} \norm{\nabla f({\xx}^{(t)})}_2^2 + \frac{L \eta^2 \sigma^2}{2} +  L \eta^2  \zeta^2 + \frac{\eta L^2 }{2}\norm{{\xx}^{(t)} -  \tilde\xx^{(t)}}_2^2  \,.
	\end{align}
\end{lemma}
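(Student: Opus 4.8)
The plan is to mirror the homogeneous descent Lemma~\ref{lem:descent_second}, exploiting that the virtual sequence performs an honest stochastic gradient step: $\tilde\xx^{(t+1)} - \tilde\xx^{(t)} = -\eta \nabla F_{k_t}(\xx^{(t)}, \xi_{t+\hat\tau_t})$, where $k_t$ is drawn uniformly on $[n]$ at line~6 of Algorithm~\ref{alg:async-general} and $\xi_{t+\hat\tau_t}$ is the fresh sample that has not yet been realized by step $t$. First I would apply $L$-smoothness of $f$ (Assumption~\ref{a:lsmooth_nc}) to $\tilde\xx^{(t+1)} = \tilde\xx^{(t)} - \eta \nabla F_{k_t}(\xx^{(t)}, \xi_{t+\hat\tau_t})$, obtaining $f(\tilde\xx^{(t+1)}) \le f(\tilde\xx^{(t)}) - \eta\lin{\nabla f(\tilde\xx^{(t)}), \nabla F_{k_t}(\xx^{(t)}, \xi_{t+\hat\tau_t})} + \frac{L\eta^2}{2}\norm{\nabla F_{k_t}(\xx^{(t)}, \xi_{t+\hat\tau_t})}^2$, and then take the conditional expectation $\E_{t+1}$ over both the client draw and the stochastic sample.

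For the linear term, conditional unbiasedness of $\xi_{t+\hat\tau_t}$ together with $k_t \sim \operatorname{Uniform}[n]$ gives $\E_{t+1}\nabla F_{k_t}(\xx^{(t)}, \xi_{t+\hat\tau_t}) = \frac1n\sum_{i=1}^n \nabla f_i(\xx^{(t)}) = \nabla f(\xx^{(t)})$, so the term becomes $-\eta\lin{\nabla f(\tilde\xx^{(t)}), \nabla f(\xx^{(t)})}$; I would then use the polarization identity $-\lin{a,b} = \frac12\norm{a-b}^2 - \frac12\norm{a}^2 - \frac12\norm{b}^2$ with $a=\nabla f(\tilde\xx^{(t)})$, $b=\nabla f(\xx^{(t)})$, discard the nonpositive $-\frac{\eta}{2}\norm{\nabla f(\tilde\xx^{(t)})}^2$, and bound $\norm{\nabla f(\tilde\xx^{(t)}) - \nabla f(\xx^{(t)})}^2 \le L^2\norm{\tilde\xx^{(t)} - \xx^{(t)}}^2$ by smoothness, which produces $-\frac{\eta}{2}\norm{\nabla f(\xx^{(t)})}^2 + \frac{\eta L^2}{2}\norm{\xx^{(t)} - \tilde\xx^{(t)}}^2$. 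For the quadratic term I would use the nested bias--variance split: $\E_{t+1}\norm{\nabla F_{k_t}(\xx^{(t)}, \xi)}^2 = \norm{\nabla f(\xx^{(t)})}^2 + \E_{t+1}\norm{(\nabla F_{k_t}(\xx^{(t)},\xi) - \nabla f_{k_t}(\xx^{(t)})) + (\nabla f_{k_t}(\xx^{(t)}) - \nabla f(\xx^{(t)}))}^2$, where the cross term vanishes after conditioning on $k_t$ (by unbiasedness of $\xi$), and the two remaining terms are bounded by $\sigma^2$ (Assumption~\ref{a:stoch_noise}) and by $\frac1n\sum_{i=1}^n\zeta_i^2 = \zeta^2$ (Assumption~\ref{a:heterogeneity}), giving $\E_{t+1}\norm{\nabla F_{k_t}(\xx^{(t)},\xi)}^2 \le \norm{\nabla f(\xx^{(t)})}^2 + \sigma^2 + \zeta^2$.

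Collecting everything yields $\E_{t+1}f(\tilde\xx^{(t+1)}) \le f(\tilde\xx^{(t)}) - \big(\frac{\eta}{2} - \frac{L\eta^2}{2}\big)\norm{\nabla f(\xx^{(t)})}^2 + \frac{\eta L^2}{2}\norm{\xx^{(t)} - \tilde\xx^{(t)}}^2 + \frac{L\eta^2\sigma^2}{2} + \frac{L\eta^2\zeta^2}{2}$; using $\eta \le \frac{1}{4L}$ to absorb $\frac{L\eta^2}{2}\norm{\nabla f(\xx^{(t)})}^2$ leaves a coefficient at most $-\frac{\eta}{4}$ on $\norm{\nabla f(\xx^{(t)})}^2$, and bounding $\frac{L\eta^2\zeta^2}{2} \le L\eta^2\zeta^2$ gives the stated inequality. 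I expect the only real subtlety to be probabilistic bookkeeping rather than computation: one must be careful that $\E_{t+1}$ genuinely averages over both the uniform client sampling $k_t$ and the fresh gradient noise, that the random delay $\hat\tau_t$ (i.e.\ which future sample index is attached to the virtual update) does not break conditional unbiasedness, and that the cross terms in both nested variance decompositions indeed vanish under the correct order of conditioning.
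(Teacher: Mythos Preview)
Your proof is correct and follows essentially the same route as the paper: $L$-smoothness, polarization for the inner-product term, a bias--variance decomposition for the squared-gradient term, then absorb via $\eta\le\tfrac{1}{4L}$. The only difference is that you exploit orthogonality twice in the quadratic term to get $\sigma^2+\zeta^2+\norm{\nabla f(\xx^{(t)})}^2$, whereas the paper splits off $\sigma^2$ by orthogonality and then applies $\norm{a+b}^2\le 2\norm{a}^2+2\norm{b}^2$ to obtain the looser $\sigma^2+2\zeta^2+2\norm{\nabla f(\xx^{(t)})}^2$; your tighter bound is why you end up with $\tfrac{L\eta^2\zeta^2}{2}$ and have to relax it to match the stated $L\eta^2\zeta^2$.
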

\begin{proof}
	Because the function $f$ is $L$-smooth, we have 
	\begin{align*}
	\EE{t + 1}{f(\tilde{\xx}^{(t + 1)})} &= \EE{t + 1}{f\left(\tilde{\xx}^{(t)} - \eta \nabla F_{k_t}(\xx^{(t)}, \xi_{t + \hat{\tau}_t}) \right) }\\
	& \leq f(\tilde{\xx}^{(t)}) - \eta \underbrace{\langle \nabla f(\tilde{\xx}^{(t)}), \nabla f(\xx^{(t)}) \rangle}_{=: T_1}  + \EE{t + 1}{\frac{L}{2} \eta^2 \underbrace{ \norm{  \nabla F_{k_t}(\xx^{(t)}, \xi_{t + \hat{\tau}_t}) }_2^2}_{=:T_2}} \,,
	\end{align*}
	where expectation is taken over both the stochastic noise $\xi$ and sampled index $j_t$.
	We estimate terms $T_1$ and $T_2$ separately
	\begin{align*}
	T_1 &= - \frac{\eta}{2}\norm{\nabla f({\xx}^{(t)})}^2 - \frac{\eta}{2}\norm{\nabla f(\tilde{\xx}^{(t)})}^2  + \dfrac{\eta}{2} \norm{\nabla f({\xx}^{(t)}) - \nabla f(\tilde\xx^{(t)})}^2 \\
	&\leq - \frac{\eta}{2}\norm{\nabla f({\xx}^{(t)})}^2 + \dfrac{\eta}{2} \norm{\nabla f({\xx}^{(t)}) - \nabla f(\tilde\xx^{(t)})}^2  \,.
	\end{align*}
	For the last term, using the notation $~\pm a = a - a = 0~~\forall a$,
	\begin{align*}
	T_2 & =  \EE{t + 1}{\norm{ \nabla F_{k_t}(\xx^{(t)}, \xi_{t + \hat{\tau}_t})  \pm \nabla f_{j_t}(\xx^{(t)}) \pm \nabla f(\xx^{(t)})}_2^2} \\
	& \stackrel{\eqref{eq:stochastic_noise}}{\leq} \sigma^2 + 2 \EE{k_t}{\norm{\nabla f_{k_t}(\xx^{(t)}) - \nabla f(\xx^{(t)})}_2^2} + 2 \norm{\nabla f(\xx^{(t)}) }_2^2 \\
	& \stackrel{\eqref{eq:bound_heterogeniety}}{\leq} \sigma^2  + 2 \zeta^2 + 2 \norm{\nabla f(\xx^{(t)})}_2^2 \,.
	\end{align*}
	Combining this together and using $L$-smoothness to estimate $\norm{\nabla f({\xx}^{(t)}) - \nabla f(\tilde\xx^{(t)})}^2_2$ we get
	\begin{align*}
	\EE{t + 1}{f(\tilde{\xx}^{(t + 1)})} &\leq f(\tilde{\xx}^{(t)}) - \left(\frac{\eta}{2} - L \eta^2\right)\norm{\nabla f({\xx}^{(t)})}_2^2 + \frac{\eta}{2} L^2 \norm{{\xx}^{(t)} - \tilde\xx^{(t)}}^2+  \frac{L \eta^2 \sigma^2}{2} + L \eta^2  \zeta^2 \,.
	\end{align*}
	Applying $\eta \leq  \frac{1}{4 L}$ we get statement of the lemma.
\end{proof}
\subsubsection{Proof of Theorem~\ref{thm:heterogeneous}, convergence rate \eqref{eq:thm_het_first}}
\begin{lemma}[Estimation of the distance $\norm{{\xx}^{(t)} -  \tilde\xx^{(t)}}_2^2$] \label{lem:distance}
	Under Assumptions \ref{a:stoch_noise}, \ref{a:heterogeneity}, \ref{a:lsmooth_nc}, for Algorithm~\ref{alg:async-general}  with the stepsize $\eta_t \leq \frac{1}{4 L \sqrt{\tau_{C}\tau_{\max}}}$ it holds that 
	\begin{align*}
	\frac{1}{T + 1} \sum_{t = 0}^T \E \norm{{\xx}^{(t)} -  \tilde\xx^{(t)}}_2^2 \leq \frac{\eta \sigma^2 }{4 L } + \frac{2 \eta^2 \tau_C }{T + 1} \frac{1}{n}\sum_{j = 1}^n \zeta_{j}^2 \bar\tau_j + \frac{1}{8L^2(T + 1)}\sum_{t = 0}^T \E \norm{\nabla f(\xx^{(t)}) }_2^2 \,.
	\end{align*}
\end{lemma}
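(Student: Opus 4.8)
The plan is to follow the same two-ingredient recipe as in the homogeneous residual bounds (Lemmas~\ref{lem:residual} and~\ref{lem:residual_bounded_grad}), but to replace the bounded-gradient estimate by a \emph{per-client} heterogeneity estimate that is made quantitative via the uniform client sampling of Algorithm~\ref{alg:async-general} and Assumption~\ref{a:avg_dely}. First I would identify $\xx^{(t)}-\tilde\xx^{(t)}$ explicitly: telescoping both recursions, every gradient already applied to $\xx^{(\cdot)}$ has also been subtracted from $\tilde\xx^{(\cdot)}$ at an earlier virtual step (the one with $j-\tau_j=t$), so these cancel and what remains is $\eta$ times the sum of the gradients still in flight at time $t$, i.e.\ $\xx^{(t)}-\tilde\xx^{(t)}=-\eta\sum_{i\in\cC_t}\nabla F_{k_i}(\xx^{(s_i)},\cdot)$, where $s_i$ is the step at which in-flight job $i$ was started and $|\cC_t|=\tau_C$ (up to a boundary correction for the initial active set $\cC_0$). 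Taking expectations I would peel off the stochastic noise: the noises of distinct in-flight jobs are mutually independent, zero mean, and independent of the iterates $\xx^{(s_i)}$ and of the sampled clients $k_i$, since each is drawn ``after'' everything that determines those quantities; hence $\E\norm{\xx^{(t)}-\tilde\xx^{(t)}}^2\le\eta^2\,\E\norm{\sum_{i\in\cC_t}\nabla f_{k_i}(\xx^{(s_i)})}^2+\eta^2\tau_C\sigma^2$.

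Next I would split $\nabla f_{k_i}(\xx^{(s_i)})=\nabla f(\xx^{(s_i)})+\delta_i$ with $\delta_i:=\nabla f_{k_i}(\xx^{(s_i)})-\nabla f(\xx^{(s_i)})$, use $\norm{a+b}^2\le 2\norm a^2+2\norm b^2$ and Lemma~\ref{remark:norm_of_sum} to get $\E\norm{\sum_{i\in\cC_t}\nabla f_{k_i}(\xx^{(s_i)})}^2\le 2\tau_C\sum_{i\in\cC_t}\E\norm{\nabla f(\xx^{(s_i)})}^2+2\tau_C\sum_{i\in\cC_t}\E\norm{\delta_i}^2$, and bound $\E\norm{\delta_i}^2\le\E[\zeta_{k_i}^2]$ by Assumption~\ref{a:heterogeneity} (valid pointwise in $\xx^{(s_i)}$). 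I would deliberately not attempt cross-term cancellation among the $\delta_i$, since conditioning on the in-flight set $\cC_t$ biases the clients toward slow ones and breaks exchangeability. Finally I would sum over $t=0,\dots,T$ and regroup by \emph{job} instead of by step, using $\sum_{t=0}^T\sum_{i\in\cC_t}(\cdot)=\sum_{\text{jobs }r}(\#\{t:\text{job }r\in\cC_t\})(\cdot)\le\sum_{\text{jobs }r}\hat\tau_r(\cdot)$. For the first group this gives $\sum_r\hat\tau_r\,\E\norm{\nabla f(\xx^{(r)})}^2\le\tau_{\max}\sum_t\E\norm{\nabla f(\xx^{(t)})}^2$, which together with $\eta\le\tfrac{1}{4L\sqrt{\tau_C\tau_{\max}}}$ produces the $\tfrac{1}{8L^2(T+1)}\sum_t\E\norm{\nabla f(\xx^{(t)})}^2$ term after dividing by $T+1$; for the second, $\sum_r\hat\tau_r\,\E[\zeta_{k_r}^2]=\sum_{j=1}^n\zeta_j^2\,\E\big[\sum_{r:k_r=j}\hat\tau_r\big]=\sum_{j=1}^n\zeta_j^2\,\E[T_j\,\tau^j_{avg}]=\sum_{j=1}^n\zeta_j^2\,\E[T_j]\,\bar\tau_j$, using Definition~\ref{def:avg_delay_i} (collecting applied and still-pending jobs of client $j$) and then Assumption~\ref{a:avg_dely}, with $\E[T_j]=\cO((T+1)/n)$ since clients are sampled uniformly in lines~1 and~6. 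Together with the $\eta^2\tau_C\sigma^2$ term (absorbed into $\tfrac{\eta\sigma^2}{4L}$ using the stepsize bound and $\tau_{\max}\gtrsim\tau_C$, which follows from $\tau_{\max}\ge\tau_{avg}$ and Remark~\ref{rem:conc_avg_delay}), dividing by $T+1$ assembles the three terms on the right-hand side of the lemma.

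The step I expect to be the main obstacle is the regrouping of the heterogeneity contribution. Because a client's delay can be systematically correlated with its identity (slow clients both incur large delays and spend more steps in $\cC_t$), one cannot analyse $\sum_{i\in\cC_t}\delta_i$ by treating the currently in-flight clients as uniformly distributed. The resolution is to sum over \emph{all} jobs indexed by their start step — so that no conditioning on ``being in flight'' is ever needed — then group by client identity and invoke Assumption~\ref{a:avg_dely} to decouple the number of times a client is sampled, $T_j$, from that client's average delay $\tau^j_{avg}$; this is the only point where Assumption~\ref{a:avg_dely} and the uniform client sampling of Algorithm~\ref{alg:async-general} enter the argument.
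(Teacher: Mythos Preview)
Your proposal is correct and follows essentially the same route as the paper: express $\xx^{(t)}-\tilde\xx^{(t)}$ as $\eta$ times the sum of in-flight gradients, peel off the stochastic noise, split each $\nabla f_{k_i}$ into $\nabla f+\delta_i$, apply Lemma~\ref{remark:norm_of_sum} to both pieces, then sum over $t$ and regroup by job so that the gradient piece picks up a factor $\tau_{\max}$ and the heterogeneity piece becomes $\sum_j\zeta_j^2\tau_j^{sum}$, which is turned into $\frac{1}{n}\sum_j\zeta_j^2\bar\tau_j$ via uniform sampling. You are in fact more explicit than the paper on the one genuinely delicate point---you correctly isolate that decoupling $\E[T_j\,\tau^j_{avg}]$ into $\E[T_j]\cdot\bar\tau_j$ is exactly where Assumption~\ref{a:avg_dely} is needed, whereas the paper's proof invokes only the uniform sampling and leaves this factorization implicit.
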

\begin{proof}
	\begin{align*}
	\E \norm{{\xx}^{(t)} -  \tilde\xx^{(t)}}_2^2 &= \E \eta^2 \norm{ \sum_{i \in \cC_t} \nabla F_{j_i}(\xx^{(i)}, \xi_{i + \hat\tau_i})}_2^2 \stackrel{\eqref{eq:stochastic_noise}}{\leq } \eta^2 \tau_C \sigma^2 + \eta^2 \E \norm{\sum_{i \in \cC_t} \nabla f_{j_i}(\xx^{(i)})}_2^2 \\
	&\stackrel{\eqref{eq:norm_of_sum}}{\leq}  \eta^2 \tau_C \sigma^2 + 2 \eta^2 \E \norm{\sum_{i \in \cC_t} \nabla f_{j_i}(\xx^{(i)}) - \nabla f(\xx^{(i)})}_2^2  + 2 \eta^2 \E \norm{\sum_{i \in \cC_t} \nabla f(\xx^{(i)}) }_2^2 \\
	& \stackrel{\eqref{eq:norm_of_sum}}{\leq} \eta^2 \tau_C \sigma^2 + 2 \eta^2 \tau_C^{(t)} \E \sum_{i \in \cC_t} \zeta_{j_i}^2  + 2 \eta^2 \tau_C \E \sum_{i \in \cC_t} \norm{\nabla f(\xx^{(i)}) }_2^2 \,.
	\end{align*}
	Averaging over $T$, we get
	\begin{align*}
	\frac{1}{T + 1} \sum_{t = 0}^T \E \norm{{\xx}^{(t)} -  \tilde\xx^{(t)}}_2^2 \leq \eta^2 \tau_C \sigma^2 + 2 \eta^2 \tau_C \frac{1}{T + 1}\sum_{t = 0}^T \E \sum_{i \in \cC_t} \zeta_{j_i}^2 + 2 \eta^2 \tau_C \frac{1}{T + 1} \sum_{t = 0}^T \E \sum_{i \in \cC_t} \norm{\nabla f(\xx^{(i)}) }_2^2 \,.
	\end{align*}
	We note that in the second term each of $\zeta_j$ appears exactly $\tau_j^{sum}$ times, where $\tau_j^{sum}$ is the sum of the all the delays that happened on the node $j$. In the last term, we estimate the number of appearance of each of $\norm{\nabla f(\xx^{(i)}) }_2^2$ bt $\tau_{\max}$, thus
	\begin{align*}
	\frac{1}{T + 1} \sum_{t = 0}^T \E \norm{{\xx}^{(t)} -  \tilde\xx^{(t)}}_2^2 \leq \eta^2 \tau_C \sigma^2 + 2 \eta^2 \tau_C \E \frac{1}{T + 1}\sum_{j = 1}^n \zeta_{j}^2 \tau_j^{sum} + 2 \eta^2 \tau_C \tau_{\max} \frac{1}{T + 1}\sum_{t = 0}^T \E \norm{\nabla f(\xx^{(t)}) }_2^2 \,,
	\end{align*}
	we further use that number of times $T_j$ that every node $j$ got sampled are equal in expectation because of uniform sampling in line 6 of Algorithm~\ref{alg:async-general}. Thus, 
	\begin{align*}
	\frac{1}{T + 1} \sum_{t = 0}^T \E \norm{{\xx}^{(t)} -  \tilde\xx^{(t)}}_2^2 \leq \eta^2 \tau_C \sigma^2 + 2 \eta^2 \tau_C \frac{1}{T + 1} \frac{1}{n}\sum_{j = 1}^n \zeta_{j}^2 \bar\tau_j + 2 \eta^2 \tau_C \tau_{\max} \frac{1}{T + 1}\sum_{t = 0}^T \E \norm{\nabla f(\xx^{(t)}) }_2^2 \,.
	\end{align*}
	Using that $\eta \leq \frac{1}{4 L \sqrt{\tau_{C}\tau_{\max}}}$ we get the statement of the lemma.
\end{proof}

\begin{proof}[Proof of Theorem~\ref{thm:heterogeneous}, \eqref{eq:thm_het_first}]
	First, averaging the descent Lemma~\ref{lem:descent_second},
	\begin{align*}
	\frac{1}{T + 1} \sum_{t = 0}^T \E\norm{\nabla f({\xx}^{(t)})}_2^2 \leq \frac{4}{\eta (T + 1)} \left(f( \xx^{0}) -  f(\xx^{{T}})\right) + 2 L \eta \sigma^2 + 4 L \eta \zeta^2+ \frac{2 L^2}{T + 1} \sum_{t = 0}^T \E \norm{{\xx}^{(t)} -  \tilde\xx^{(t)}}_2^2 \,.
	\end{align*}
	Now plugging in the result of Lemma~\ref{lem:distance}, we get
	\begin{align*}
	\frac{1}{T + 1} \sum_{t = 0}^T \E\norm{\nabla f({\xx}^{(t)})}_2^2 &\leq \frac{4}{\eta (T + 1)} \left(f( \xx^{0}) -  f(\xx^{{T}})\right) + 2 L \eta \sigma^2 +  4 L \eta \zeta^2 + \frac{L  \eta \sigma^2 }{2} \\
	& + \frac{4 L^2 \eta^2 \tau_C }{T + 1} \frac{1}{n}\sum_{j = 1}^n \zeta_{j}^2 \bar\tau_j + \frac{1}{4(T + 1)}\sum_{t = 0}^T \E \norm{\nabla f(\xx^{(t)}) }_2^2 \,.
	\end{align*}
	Rearranging terms we thus get
	\begin{align*}
	\frac{1}{2(T + 1)} \sum_{t = 0}^T \E\norm{\nabla f({\xx}^{(t)})}_2^2 &\leq \frac{4}{\eta (T + 1)} \left(f( \xx^{0}) -  f(\xx^{{T}})\right) + 3 L \eta \sigma^2 +  4 L \eta \zeta^2 + \frac{4 L^2 \eta^2 \tau_C }{T + 1} \frac{1}{n}\sum_{j = 1}^n \zeta_{j}^2 \bar\tau_j 
	\end{align*}
	It is only left to tune the stepsize using Lemma~17 in \cite{koloskova2020unified}.
\end{proof}

\subsubsection{Proof of Theorem~\ref{thm:heterogeneous}, convergence rate \eqref{eq:het_bounded}.}
\begin{lemma}[Estimation of the distance $\norm{{\xx}^{(t)} -  \tilde\xx^{(t)}}_2^2$] \label{lem:distance2}
	Under Assumptions \ref{a:stoch_noise}, \ref{a:heterogeneity}, \ref{a:lsmooth_nc}, \ref{a:bounded_gradient} for Algorithm~\ref{alg:async-general}  with the stepsize $\eta_t \equiv \eta \leq \frac{1}{4 L \tau_{C}}$ it holds that 
	\begin{align*}
	\frac{1}{T + 1} \sum_{t = 0}^T \E \norm{{\xx}^{(t)} -  \tilde\xx^{(t)}}_2^2 \leq  \frac{\eta \sigma^2 }{4L }+ \eta^2 \tau_C^2G^2 \,.
	\end{align*}
\end{lemma}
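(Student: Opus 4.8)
The plan is to follow the same perturbed-iterate bookkeeping already used for Lemma~\ref{lem:residual_bounded_grad} and Lemma~\ref{lem:distance}: the gap $\xx^{(t)} - \tilde\xx^{(t)}$ is exactly $-\eta$ times the sum of the stochastic gradients that are currently ``in flight'', i.e.\ those requested by the $\tau_C^{(t)} = \tau_C$ active clients in $\cC_t$ but not yet applied to the real sequence. Concretely, I would start from the identity
\begin{align*}
\E\norm{\xx^{(t)} - \tilde\xx^{(t)}}_2^2 = \eta^2\, \E\norm{\sum_{i\in\cC_t} \nabla F_{k_i}(\xx^{(i)}, \xi_{i+\hat\tau_i})}_2^2 ,
\end{align*}
which is the same starting point as in the proof of Lemma~\ref{lem:distance} (with the same convention for the initial batch $\cC_0$), so it can simply be invoked rather than re-derived.

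Next I would peel off the stochastic noise. Writing $\nabla F_{k_i}(\xx^{(i)}, \xi_{i+\hat\tau_i}) = \nabla f_{k_i}(\xx^{(i)}) + \big(\nabla F_{k_i}(\xx^{(i)}, \xi_{i+\hat\tau_i}) - \nabla f_{k_i}(\xx^{(i)})\big)$, the cross terms vanish in expectation because the noises $\xi_{i+\hat\tau_i}$ over $i\in\cC_t$ are mutually independent and independent of the points $\xx^{(i)}$ at which the gradients are evaluated (each $\xx^{(i)}$ is determined before the corresponding sample is drawn). Using Assumption~\ref{a:stoch_noise} this gives
\begin{align*}
\E\norm{\sum_{i\in\cC_t} \nabla F_{k_i}(\xx^{(i)}, \xi_{i+\hat\tau_i})}_2^2 \le \tau_C^{(t)}\sigma^2 + \E\norm{\sum_{i\in\cC_t}\nabla f_{k_i}(\xx^{(i)})}_2^2 .
\end{align*}
For the remaining deterministic sum I would apply Lemma~\ref{remark:norm_of_sum} to pull the square inside, $\norm{\sum_{i\in\cC_t}\nabla f_{k_i}(\xx^{(i)})}_2^2 \le \tau_C^{(t)}\sum_{i\in\cC_t}\norm{\nabla f_{k_i}(\xx^{(i)})}_2^2$, and then bound each summand by $G^2$ via the bounded-gradient Assumption~\ref{a:bounded_gradient}, producing $(\tau_C^{(t)})^2 G^2$. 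Since the concurrency is constant here, $\tau_C^{(t)} = \tau_C$, so $\E\norm{\xx^{(t)} - \tilde\xx^{(t)}}_2^2 \le \eta^2\tau_C\sigma^2 + \eta^2\tau_C^2 G^2$ uniformly in $t$.

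Finally, averaging this uniform bound over $t = 0,\dots,T$ leaves it unchanged, and I would invoke the stepsize restriction $\eta \le \tfrac{1}{4L\tau_C}$ only on the variance term, $\eta^2\tau_C\sigma^2 = \eta\cdot(\eta\tau_C)\cdot\sigma^2 \le \tfrac{\eta\sigma^2}{4L}$, to arrive at $\frac{1}{T+1}\sum_{t=0}^T\E\norm{\xx^{(t)} - \tilde\xx^{(t)}}_2^2 \le \tfrac{\eta\sigma^2}{4L} + \eta^2\tau_C^2 G^2$. I expect no real obstacle here; the only point that genuinely needs care is the combinatorial fact that $\xx^{(t)} - \tilde\xx^{(t)}$ is precisely the sum of the $\tau_C$ in-flight gradients, together with the independence of their noise terms — but this is identical to what is already established for Lemma~\ref{lem:distance}, and the bounded-gradient argument here is in fact strictly simpler since no heterogeneity split into $\nabla f_{k_i}-\nabla f$ and $\nabla f$ is needed.
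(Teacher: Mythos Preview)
Your proposal is correct and follows essentially the same approach as the paper: start from the identity expressing $\xx^{(t)}-\tilde\xx^{(t)}$ as $\eta$ times the sum of in-flight gradients, separate the stochastic noise via Assumption~\ref{a:stoch_noise}, apply Lemma~\ref{remark:norm_of_sum} and Assumption~\ref{a:bounded_gradient} to obtain the pointwise bound $\eta^2\tau_C\sigma^2+\eta^2\tau_C^2G^2$, and then use $\eta\le\tfrac{1}{4L\tau_C}$ on the variance term. The paper's proof is line-for-line the same, including the observation that the heterogeneity split from Lemma~\ref{lem:distance} is unnecessary here.
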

\begin{proof}
	We start our proof similar way as before
	\begin{align*}
	\E \norm{{\xx}^{(t)} -  \tilde\xx^{(t)}}_2^2 &= \E \eta^2 \norm{ \sum_{i \in \cC_t} \nabla F_{j_i}(\xx^{(i)}, \xi_{i + \hat\tau_i})}_2^2 \stackrel{\eqref{eq:stochastic_noise}}{\leq } \eta^2 \tau_C \sigma^2 + \eta^2 \E \norm{\sum_{i \in \cC_t} \nabla f_{j_i}(\xx^{(i)})}_2^2 \\
	&\stackrel{\eqref{eq:norm_of_sum}}{\leq}  \eta^2 \tau_C \sigma^2 + \eta^2 \tau_C \sum_{i \in \cC_t} \E \norm{ \nabla f_{j_i}(\xx^{(i)})}_2^2\\
	& \stackrel{\eqref{eq:bounded_gradient}}{\leq} \eta^2 \tau_C \sigma^2 + \eta^2 \tau_C^2G^2 \\
	& ~{\leq} ~\frac{\eta \sigma^2 }{4L }+ \eta^2 \tau_C^2G^2 
	\end{align*}
	where on the last line we used that stepsize $\eta \leq \frac{1}{4 L \tau_{C}}$.
\end{proof}
\begin{proof}[Proof of the Theorem~\ref{thm:heterogeneous}, \eqref{eq:het_bounded}]
	We start by averaging the descent Lemma~\ref{lem:descent_second},
	\begin{align*}
	\frac{1}{T + 1} \sum_{t = 0}^T \E\norm{\nabla f({\xx}^{(t)})}_2^2 \leq \frac{4}{\eta (T + 1)} \left(f( \xx^{0}) -  f(\xx^{{T}})\right) + 2 L \eta \sigma^2 + 4 L \eta \zeta^2+ \frac{2 L^2}{T + 1} \sum_{t = 0}^T \E \norm{{\xx}^{(t)} -  \tilde\xx^{(t)}}_2^2 \,.
	\end{align*}
	We now plug in the results of Lemma~\ref{lem:distance2} and get
	\begin{align*}
	\frac{1}{T + 1} \sum_{t = 0}^T \E\norm{\nabla f({\xx}^{(t)})}_2^2 \leq \frac{4}{\eta (T + 1)} \left(f( \xx^{0}) -  f(\xx^{{T}})\right) + 3 L \eta \sigma^2 + 4 L \eta \zeta^2+ 2 L^2 \eta^2 \tau_C^2G^2 \,.
	\end{align*}
	It is only left to tune the stepsize using Lemma~17 in \cite{koloskova2020unified}.
\end{proof}

\subsection{Proof of Lemma~\ref{lem:speedup}}
In this section we prove Lemma~\ref{lem:speedup} that estimates expected execution time of Algorithm~\ref{alg:async-general} with concurrency $\tau_C = C$, and the expected time of mini-batch SGD with the same concurrency i.e.\ batch size equal to $\tau_C = C$. 
\begin{proof}
We start by proving the first claim. 	
\paragraph{Time of Asynchronous Algorithm~\ref{alg:async-general}.}	
Assume the concurrency is $C = 1$. Then, Algorithm~\ref{alg:async-general} is synchronous, and as we sample every client with equal probability (line 6 of Algorithm~\ref{alg:async-general}), the expected time to compute one gradient is equal to $\frac{1}{n} \sum_{i = 1}^n \Delta_i$. 

To calculate the estimated time with concurrency $C > 1$ we can view the Algorithm~\ref{alg:async-general} as having $C$ independent copies of the previous process run in parallel. Thus, in the same time $\frac{1}{n} \sum_{i = 1}^n \Delta_i$ in expectation Algorithm~\ref{alg:async-general} will compute $C$ gradients. 

\paragraph{Time of Mini-batch SGD.}
The expected time of mini-batch SGD of size $C$ is equal to $\E \max\{\Delta_{i_1}, \dots, \Delta_{i_{C}}\}$ with each $i_j \sim \operatorname{Uniform}[1, n]$. Denote a random variable $X = \max\{\Delta_{i_1}, \dots, \Delta_{i_{C}}\}$ that takes values within $\Delta_1, \dots \Delta_n$. 
Since $i_j$ are independent from each other, 
\begin{align*}
\PP{}{X \leq \Delta_k}  = \prod_{j = 1}^C \PP{}{\Delta_{i_j} \leq \Delta_k} = \prod_{j = 1}^C \PP{}{X \leq \Delta_k}  = \prod_{j = 1}^C \PP{}{{i_j} \leq k} = \left(\frac{k}{n}\right)^C \,.
\end{align*}
Thus, 
\begin{align*}
\PP{}{X = \Delta_k} =  \frac{k^C - (k - 1)^C}{n^C} \,.
\end{align*}
And therefore,
\begin{align*}
\E X &= \sum_{k = 1}^n\PP{}{X = \Delta_k} \Delta_k \,. \qedhere
\end{align*}
\end{proof}

\section{Experiments}

In this set of experiments we aim to illustrate the dependence %
on the maximum delay $\tau_{\max}$ in Theorem~\ref{thm:homogeneous}, as depicted in Equation \eqref{eq:thm_first_part}. For this, we set the stochastic noise $\sigma$ to zero. In this case Theorem~\ref{thm:homogeneous}, Equation \eqref{eq:thm_first_part} predicts that to reach an $\epsilon$ accuracy, Algorithm~\ref{alg:async-homogeneous-general} needs $T = \cO\left(\frac{\sqrt{\tau_{\max} \tau_C}}{\epsilon}\right)$ iterations. In our experiments we fix $\tau_C = 2$, $\epsilon = 10^{-14}$. Since $\tau_C = 2$, we have two workers. We vary the relative speed of the second worker, and thus affecting the maximum delay:\ if the second worker is $x$ times slower than the first worker, then the maximum delay $\tau_{\max} = x$. We measure the time $T$ to reach the accuracy $\epsilon$. Since all the other parameters are constant, it holds that $T = C_1 \sqrt{\tau_{\max}}$. 

We perform experiments on two different functions:  
\begin{itemize}
	\item[(i)]quadratic function $f(\xx) = \frac{1}{2} \norm{A\xx - \bb}_2^2$, $\xx, \bb \in \R^{10}$, $b_i \sim \cN(0, 1),i \in [1, 10]$, $A \in \R^{10 \times 10}$ is a random matrix with $\lambda_{\max}(A) = 2$, $\lambda_{\min}(A) = 1$ and the rest of eigenvalues are equally spaced in between. 
	\item[(ii)] logistic regression function $f(\xx) = \frac{1}{m} \sum_{j = 1}^m \log(1 + \exp(-b_j \aa_j^\top \xx))$, where each $b_j$ is sampled uniformly at random from the set $\{-1, 1\}$, and $\aa_j \sim \cN \left(0, 1\right)^{20}$, $\xx \in \R^{20}$, $m = 100$. 
\end{itemize}

We estimate the error as the average over the last 30 iterations $\hat{\epsilon}_T = \frac{1}{30} \sum_{i = 0}^{29} \norm{\nabla f(\xx_{T - i})}_2$. We tune the stepsize $\eta$ for every experiment \emph{separately} over the logarithmic grid between $10^{-5}$ and $10^2$ ensuring that the optimal stepsize value is not on the edge of the grid. 

Figure~\ref{fig:guadratic} shows the resulting dependence of $T$ on $\tau_{\max}$ for  the quadratic function (i), and Figure~\ref{fig:logistic} for the logistic regression function (ii). In both cases we see that $T$ has linear dependence on $\sqrt{\tau_{\max}}$ confirming our theory. 

\begin{figure}
	\centering
	\begin{minipage}{0.8\linewidth}
		\includegraphics[width=0.48\linewidth]{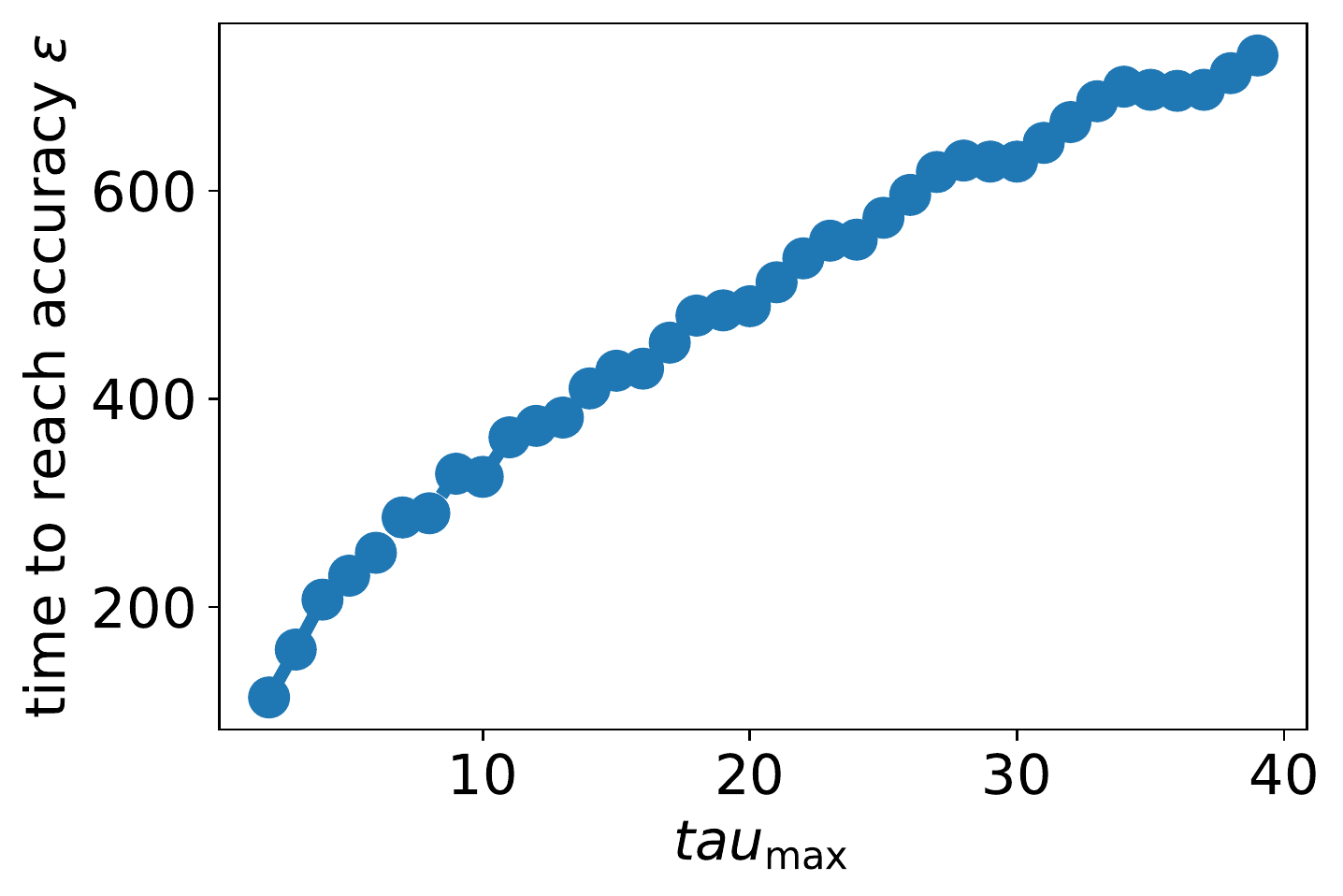}
		\includegraphics[width=0.48\linewidth]{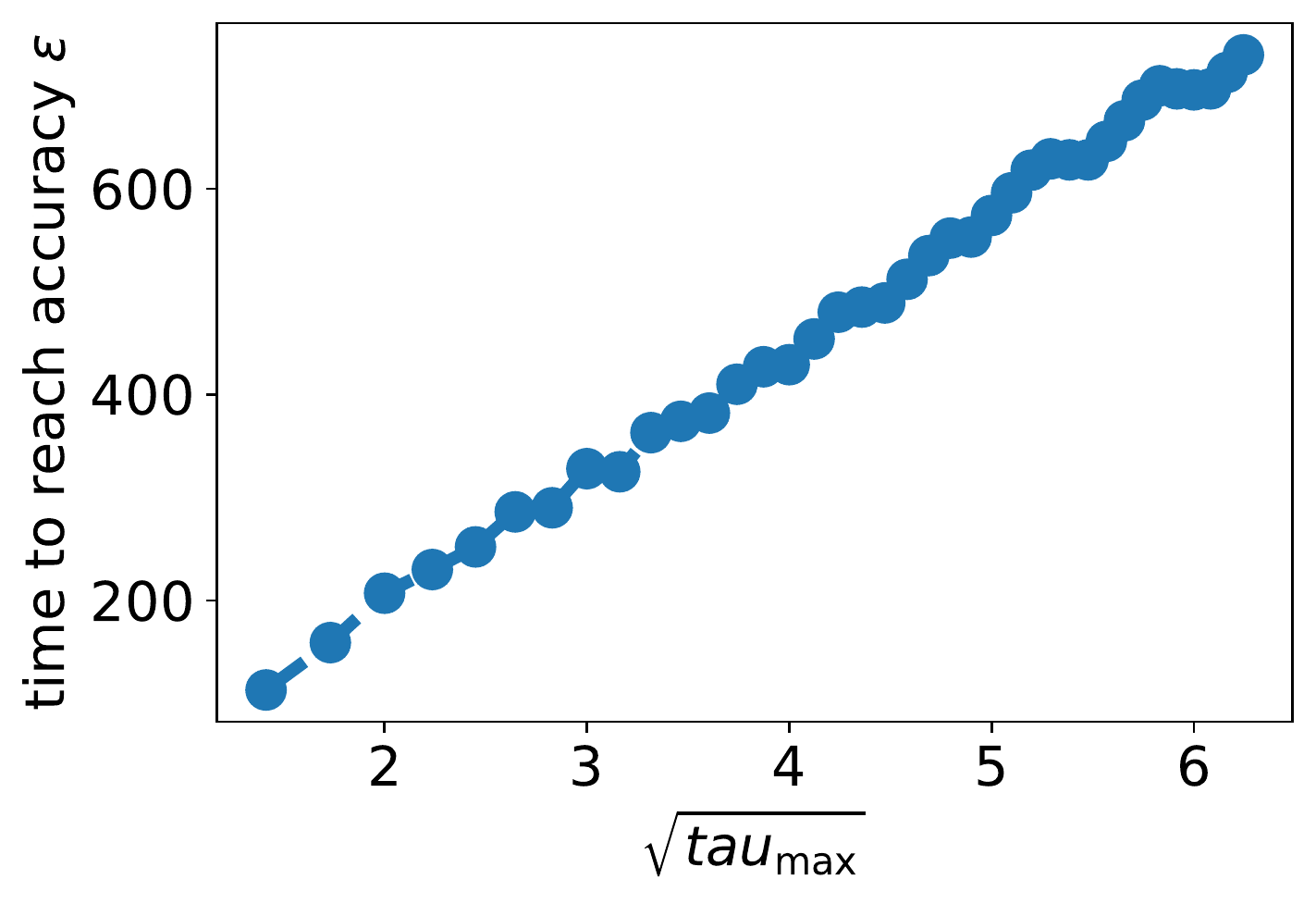}
		\vspace{-2mm}
		\caption{Verification of $\sqrt{\tau_{\max}}$ dependence on random quadratic function (i). We see that $T$ has linear dependence on $\sqrt{\tau_{\max}}$}\label{fig:guadratic}
	\end{minipage}
	\hfill
	\centering
	\begin{minipage}{0.8\linewidth}
		\includegraphics[width=0.48\linewidth]{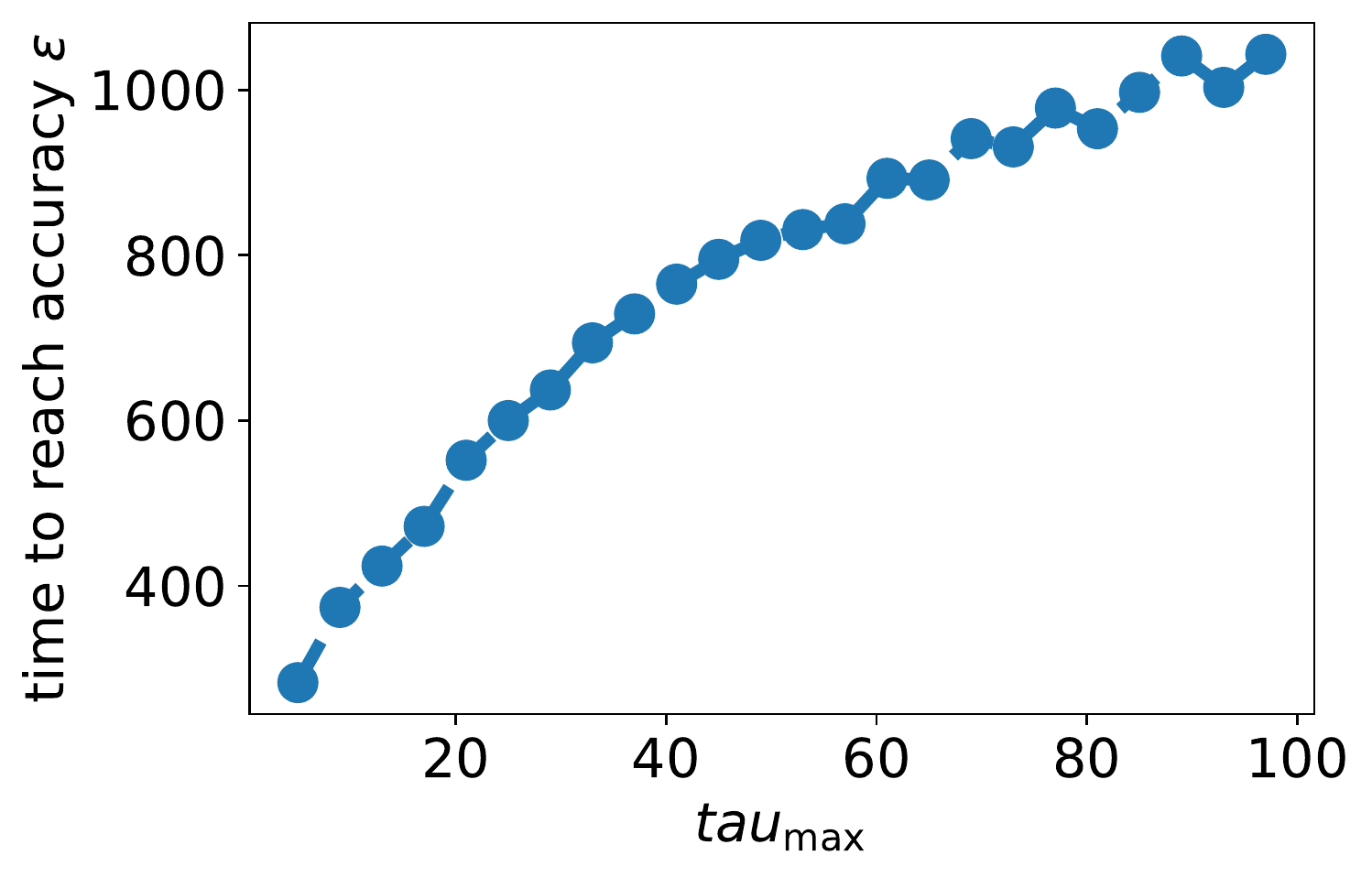}
		\includegraphics[width=0.48\linewidth]{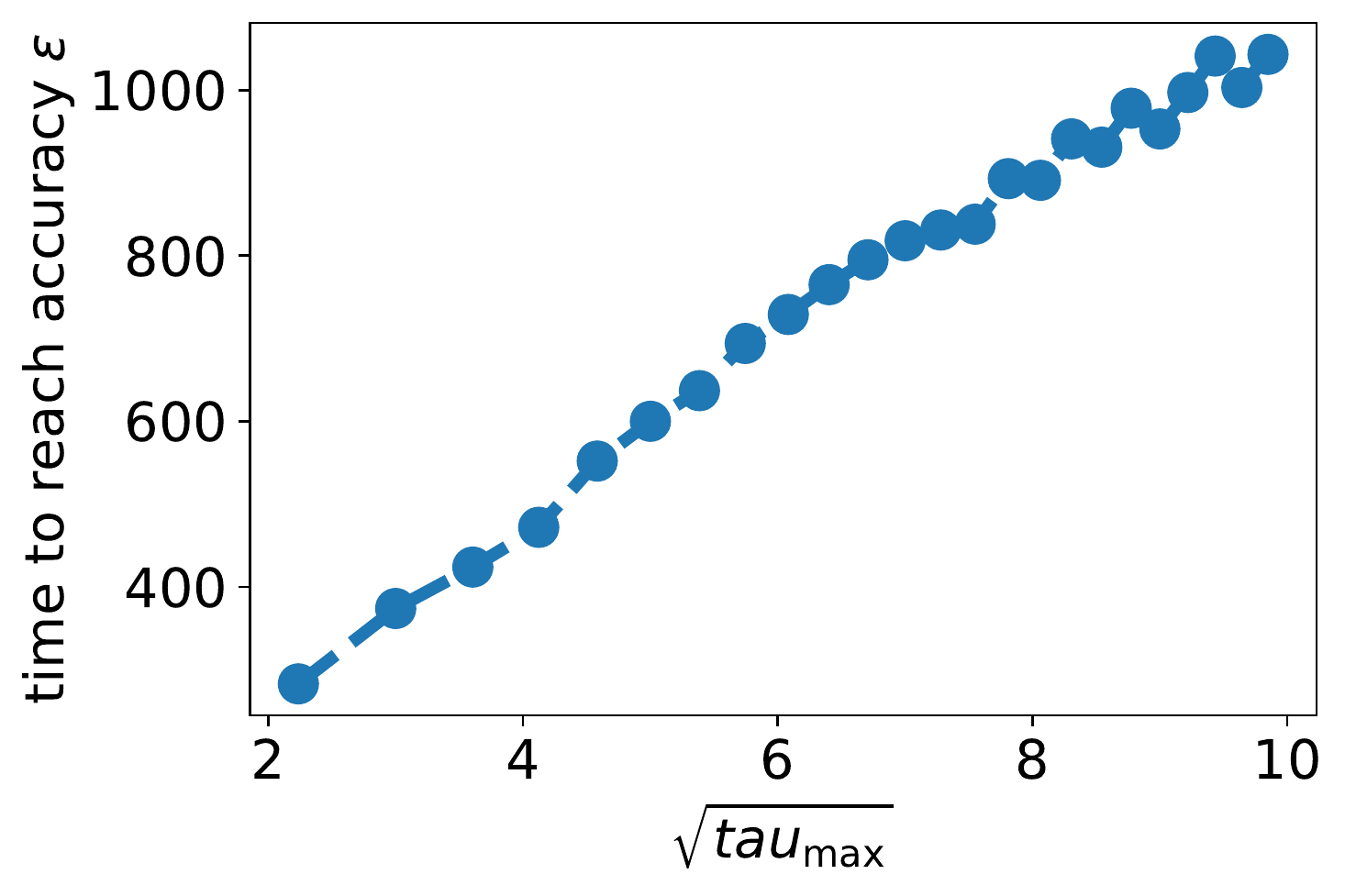}
		\vspace{-2mm}
		\caption{Verification of $\sqrt{\tau_{\max}}$ dependence on random logistic regression function (ii). We see that $T$ has linear dependence on $\sqrt{\tau_{\max}}$}\label{fig:logistic}
	\end{minipage}
	
\end{figure}

\end{document}